\DeclareMathAlphabet\mathbb{U}{msb}{m}{n}
\def\Rset{\mathbb{R}}
\DeclareMathOperator*{\E}{\mathbb E}
\DeclareMathOperator*{\sgn}{sgn}
\DeclareMathOperator{\Var}{Var}
\let\Pr\relax 
\DeclareMathOperator*{\Pr}{\mathbb{P}}
\let\P\relax 
\DeclareMathOperator*{\P}{\mathbb{P}}
\newtheorem{theorem}{Theorem}
\newtheorem{definition}{Definition}
\newtheorem{lemma}{Lemma}
\newcommand{\cA}{\mathcal{A}}
\newcommand{\cG}{\mathcal{G}}
\newcommand{\cH}{\mathcal{H}}
\newcommand{\cW}{\mathcal{W}}
\newcommand{\cX}{\mathcal{X}}
\newcommand{\cZ}{\mathcal{Z}}
\newcommand{\sD}{{\mathscr D}}
\newcommand{\sF}{{\mathscr F}}
\newcommand{\sG}{{\mathscr G}}
\newcommand{\sH}{{\mathscr H}}
\newcommand{\sL}{{\mathscr L}}
\newcommand{\sU}{{\mathscr U}}
\newcommand{\sX}{{\mathscr X}}
\newcommand{\sY}{{\mathscr Y}}
\newcommand{\sZ}{{\mathscr Z}}
\newcommand{\supcvstab}{\chi}
\newcommand{\avgcvstab}{\bar{\chi}}
\newcommand{\supdiam}{\Delta}
\newcommand{\maxdiam}{\Delta_{\max}}
\newcommand{\avgdiam}{\bar{\Delta}}
\newcommand{\R}{\mathfrak R}
\newcommand{\sfB}{\mathsf B}
\newcommand{\sfD}{\mathsf D}
\newcommand{\sfS}{\mathsf S}
\newcommand{\bsigma}{{\boldsymbol \sigma}}
\newcommand{\h}{\widehat}
\newcommand{\ov}{\overline}
\newcommand{\e}{\epsilon}
\newcommand{\set}[2][]{#1 \{ #2 #1 \} }
\newcommand{\ignore}[1]{}
\title{Hypothesis Set Stability and Generalization}
\author{Dylan J. Foster\\
Massachusetts Institute of Technology\\
\texttt{dylanf@mit.edu}
\And
Spencer Greenberg\\
Spark Wave \\
\texttt{admin@sparkwave.tech}
\And
\phantom{XXXX}Satyen Kale\\
\phantom{XXXX}Google Research \\
\phantom{XXXX}\texttt{satyen@satyenkale.com}
\And
\phantom{XXX}Haipeng Luo\\
\phantom{XX}University of Southern California \\
\phantom{XXX}\texttt{haipengl@usc.edu}
\And
\phantom{Xi}Mehryar Mohri\\
\phantom{Xi}Google Research and Courant Institute \\
\phantom{Xi}\texttt{mohri@google.com}
\And
Karthik Sridharan \\
Cornell University \\
\texttt{sridharan@cs.cornell.edu}
}
\begin{document}

\maketitle

\begin{abstract}
  We present a study of generalization for data-dependent hypothesis
  sets. We give a general learning guarantee for data-dependent
  hypothesis sets based on a notion of transductive Rademacher
  complexity.  Our main result is a generalization bound for
  data-dependent hypothesis sets expressed in terms of a notion of
  \emph{hypothesis set stability} and a notion of Rademacher
  complexity for data-dependent hypothesis sets that we
  introduce. This bound admits as special cases both standard
  Rademacher complexity bounds and algorithm-dependent uniform
  stability bounds. We also illustrate the use of these learning
  bounds in the analysis of several scenarios.
\end{abstract}

\section{Introduction}

Most generalization bounds in learning theory hold for a fixed
hypothesis set, selected before receiving a sample. This includes
learning bounds based on covering numbers, VC-dimension,
pseudo-dimension, Rademacher complexity, local Rademacher complexity,
and other complexity measures \citep{Pollard1984,Zhang2002,Vapnik1998,
  KoltchinskiiPanchenko2002,BartlettBousquetMendelson2002}.
Some alternative guarantees have also been derived for specific
algorithms. Among them, the most general family is that of uniform
stability bounds given by \cite{BousquetElisseeff2002}. These bounds
were recently significantly improved by \citet{FeldmanVondrak2019},
who proved guarantees that are informative, even when the stability
parameter $\beta$ is only in $o(1)$, as opposed to $o(1/\sqrt{m})$.
The $\log^2 m$ factor in these bounds was later reduced to
$\log m$ by
\citet{BousquetKlochkovKlochkov2019}.
Bounds for a restricted class of algorithms were also recently
presented by \citet{Maurer2017}, under a number of assumptions on the
smoothness of the loss function. Appendix~\ref{app:stability} gives
more background on stability.

In practice, machine learning engineers commonly resort to hypothesis
sets depending on the \emph{same sample} as the one used for
training. This includes instances where a regularization, a feature
transformation, or a data normalization is selected using the training
sample, or other instances where the family of predictors is
restricted to a smaller class based on the sample received. In other
instances, as is common in deep learning, the data representation and
the predictor are learned using the same sample. In ensemble learning,
the sample used to train models sometimes coincides with the one used
to determine their aggregation weights.  However, standard
generalization bounds are not directly applicable for these
scenarios since they assume a fixed hypothesis set.

\subsection{Contributions of this paper.}

\begin{figure}[t]
\centering
\includegraphics[scale=.4]{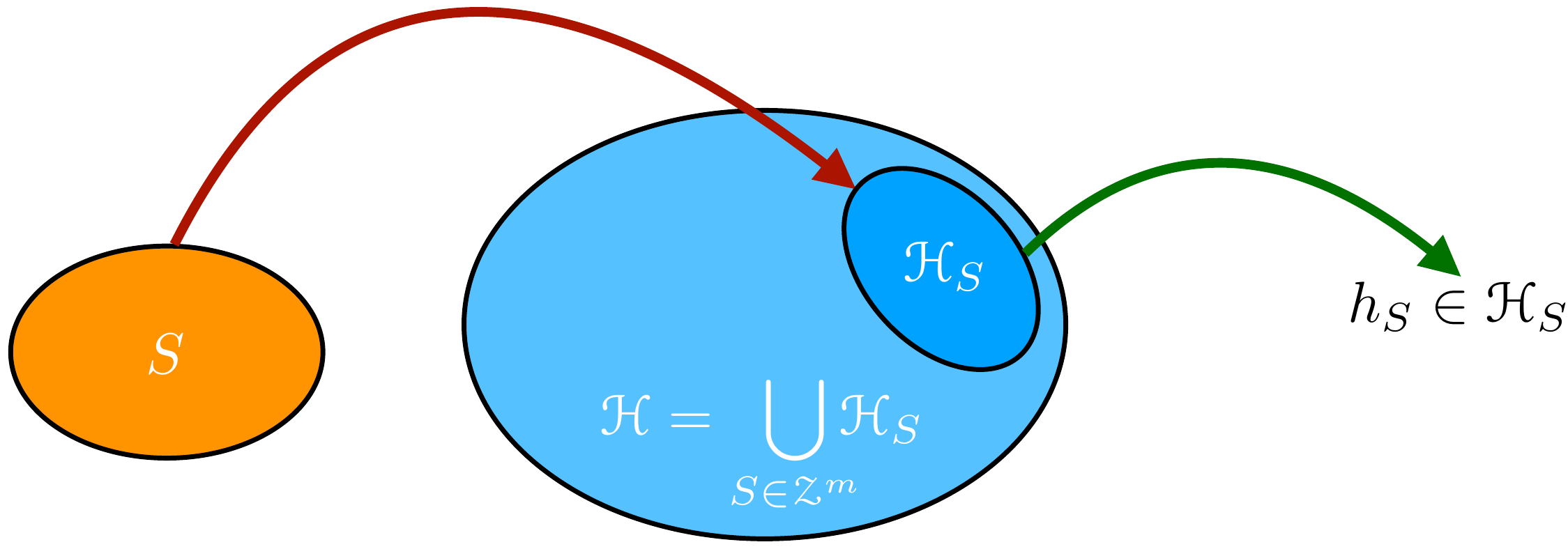}
\caption{Decomposition of the learning algorithm's hypothesis
  selection into two stages. In the first stage, the algorithm
  determines a hypothesis $\sH_S$ associated to the training sample
  $S$ which may be a small subset of the set of all hypotheses that
  could be considered, say $\sH = \bigcup_{S \in \sZ^m} \sH_S$. The
  second stage then consists of selecting a hypothesis $h_S$ out of
  $\sH_S$.}
\vskip -.1in
\label{fig:hss}
\end{figure}

{\bf 1. Foundational definitions of data-dependent hypothesis sets.} 
We present foundational definitions of learning algorithms that rely on 
\emph{data-dependent} hypothesis sets. Here, the algorithm decomposes into two stages: 
a first stage where the algorithm, on receiving the sample $S$,
chooses a hypothesis set $\sH_S$ dependent on $S$, and a second stage
where a hypothesis $h_S$ is selected from $\sH_S$. Standard
generalization bounds correspond to the case where $\sH_S$ is equal to
some fixed $\sH$ independent of $S$. Algorithm-dependent analyses,
such as uniform stability bounds, coincide with the case where $\sH_S$
is chosen to be a singleton $\sH_S = \set{h_S}$. Thus, the scenario we
study covers both existing settings and other intermediate
scenarios. Figure~\ref{fig:hss} illustrates our general scenario.

{\bf 2. Learning bounds via transductive
Rademacher complexity.} 
We present general learning bounds for data-dependent hypothesis sets using a notion of transductive Rademacher complexity (Section~\ref{sec:hss2}). These bounds hold for arbitrary bounded losses and improve upon previous guarantees given by \cite{Gat2001} and \cite{CannonEttingerHushScovel2002} for the binary
loss, which were expressed in terms of a notion of shattering
coefficient adapted to the data-dependent case, and are more explicit
than the guarantees presented by \cite{Philips2005}[corollary~4.6 or theorem~4.7].  Nevertheless, such bounds may often not be sufficiently informative, since they ignore the relationship between hypothesis sets based on similar samples.

{\bf 3. Learning bounds via hypothesis set stability.}
We provide finer generalization bounds based on the key notion of
\emph{hypothesis set stability} that we introduce in this paper. This notion admits algorithmic stability as a special case, when the hypotheses sets are reduced to singletons. We also introduce a new notion of \emph{Rademacher complexity for data-dependent hypothesis sets}. Our main results are generalization bounds (Section~\ref{sec:hss}) for stable data-dependent hypothesis sets expressed in terms of the hypothesis set stability parameter, our notion of Rademacher complexity, and a notion of \emph{cross-validation stability} that, in turn, can be upper-bounded by the diameter of the family of hypothesis sets. 
Our learning bounds admit as special cases both standard Rademacher complexity bounds and algorithm-dependent uniform stability bounds. 

{\bf 4. New generalization bounds for specific learning applications.}
In section~\ref{sec:applications} (see also
Appendix~\ref{app:other-applications}), we illustrate the generality
and the benefits of our hypothesis set stability learning bounds by
using them to derive new generalization bounds for several learning
applications. To our knowledge, there is no straightforward analysis based on previously existing tools that yield these generalization bounds. These
applications include: (a) \emph{bagging} algorithms that may employ
non-uniform, data-dependent, averaging of the base predictors, (b)
\emph{stochastic strongly-convex optimization} algorithms based on an
average of other stochastic optimization algorithms, (c) \emph{stable
  representation} learning algorithms, which first learn a data
representation using the sample and then learn a predictor on top of
the learned representation, and (d) \emph{distillation} algorithms,
which first compute a complex predictor using the sample and then use
it to learn a simpler predictor that is close to it.


\subsection{Related work on data-dependent hypothesis sets.} 

\cite{ShaweTaylorBartlettWilliamsonAnthony1998} presented an analysis
of structural risk minimization over data-dependent hierarchies based
on a concept of \emph{luckiness}, which generalizes the notion of
margin of linear classifiers.  Their analysis can be viewed as an
alternative and general study of data-dependent hypothesis sets, using
luckiness functions and $\omega$-smallness (or $\omega$-smoothness)
conditions. A luckiness function helps decompose a hypothesis set into
\emph{lucky sets}, that is sets of functions \emph{luckier} than a
given function.
%
The luckiness framework is attractive and the notion of luckiness, for
example margin, can in fact be combined with our results. However,
finding pairs of truly data-dependent luckiness and $\omega$-smallness
functions, other than those based on the margin and the empirical
VC-dimension, is quite difficult, in particular because of the very
technical $\omega$-smallness condition
\citep[see][p.~70]{Philips2005}. In contrast, hypothesis set stability
is simpler and often easier to bound.
The notions of luckiness and $\omega$-smallness have also been used by
\cite{HerbrichWilliamson2002} to derive algorithm-specific
guarantees. In fact, the authors show a connection with algorithmic
stability (not hypothesis set stability), at the price of a guarantee
requiring the strong condition that the stability parameter be in
$o(1/m)$, where $m$ is the sample size
\citep[see][pp.~189-190]{HerbrichWilliamson2002}.

Data-dependent hypothesis classes are conceptually related to the
notion of data-dependent priors in PAC-Bayesian generalization
bounds. \citet{Catoni2007} developed localized PAC-Bayes analysis by
using a prior defined in terms of the data generating
distribution. This work was extended by
\citet{LeverLavioletteShaweTaylor2013} who proved sharp risk bounds
for stochastic exponential weights
algorithms. \citet{ParradoHernandezAmbroladzeShaweTaylorSun2012}
investigated the possibility of learning the prior from a separate
data set, as well as priors obtained via computing a data-dependent
bound on the KL term. More closely related to this paper is the work
of \citet{DziugaiteRoy2018a,DziugaiteRoy2018b}, who develop PAC-Bayes
bounds by choosing the prior via a data-dependent differentially
private mechanism, and also showed that weaker notions than
differential privacy also suffice to yield valid bounds.
In Appendix~\ref{sec:PAC-Bayes}, we give a more detailed discussion of
PAC-Bayes bounds, in particular to show how finer PAC-Bayes bounds
than standard ones can be derived from Rademacher complexity bounds,
here with an alternative analysis and constants than
\citep{KakadeSridharantTewari2008} and how data-dependent PAC-Bayes
bounds can be derived from our data-dependent Rademacher complexity
bounds. More discussion on data-dependent priors can be found in
Appendix~\ref{sec:data-dependent-priors}. 

\ignore{
This work may be viewed as
complementary to our own; in particular, PAC-Bayes bounds
generally\footnote{``Derandomized'' PAC-Bayes bounds are known for
  certain cases where a single hypothesis is output, such as SVM.}
apply to randomized mixtures of hypotheses, whereas our bounds apply
to any algorithm that outputs a single hypothesis.  More discussion on
data-dependent priors can be found in
Appendix~\ref{sec:data-dependent-priors}.
}

\section{Definitions and Properties}
\label{sec:definitions}

Let $\sX$ be the input space and $\sY$ the output space, and define $\sZ := \sX \times \sY$  We denote by
$\sD$ the unknown distribution over $\sX \times \sY$ according to which samples are drawn.

The hypotheses $h$ we consider map $\sX$ to a set $\sY'$ sometimes
different from $\sY$. For example, in binary classification, we may
have $\sY = \set{-1, +1}$ and $\sY' = \Rset$.  Thus, we denote by
$\ell\colon \sY' \times \sY \to [0, 1]$ a loss function defined on
$\sY' \times \sY$ and taking non-negative real values bounded by one.
We denote the loss of a hypothesis $h\colon \sX \to \sY'$ at point
$z = (x, y) \in \sX \times \sY$ by $L(h, z) = \ell(h(x), y)$. 
We denote by $R(h)$ the generalization error
or expected loss of a hypothesis $h \in \sH$ and 
by $\h R_S(h)$ its empirical loss over a sample
$S = (z_1, \ldots, z_m)$:
\[
R(h) = \E_{z \sim \sD} [L(h, z)] \qquad \h R_S(h) = \E_{z \sim S}
[L(h, z)] = \frac{1}{m} \sum_{i = 1}^m L(h, z_i).
\]
In the general framework we consider, a hypothesis set depends on the
sample received. We will denote by $\sH_S$ the hypothesis set
depending on the labeled sample $S \in \sZ^m$ of size
$m \geq 1$. We assume that $\sH_S$ is invariant to the ordering of the points in $S$.

\begin{definition}[Hypothesis set uniform stability]
  Fix $m \geq 1$. We will say that a family of data-dependent
  hypothesis sets $\cH = (\sH_S)_{S \in \sZ^m}$ is \emph{$\beta$-uniformly
    stable} (or simply $\beta$-stable) for some $\beta \geq 0$, if for any two
  samples $S$ and $S'$ of size $m$ differing only by one point, the
  following holds:
\begin{equation}
\label{eq:hss}
  \forall h \in \sH_S, \exists h' \in \sH_{S'} \colon \,
\forall z \in \sZ, | L(h, z) - L(h', z) | \leq \beta.
\end{equation}
\end{definition}
Thus, two hypothesis sets derived from samples differing by one
element are close in the sense that any hypothesis in one admits a
counterpart in the other set with $\beta$-similar losses. A closely related notion is the \emph{sensitivity} of a function $f\colon \sZ^m \to \Rset$. Such a function $f$ is called $\beta$-sensitive if for any two
  samples $S$ and $S'$ of size $m$ differing only by one point, we have $|f(S) - f(S')| \leq \beta$.

We also introduce a new notion of Rademacher complexity for
data-dependent hypothesis sets.  To introduce its definition, for any
two samples $S, T \in \sZ^m$ and a vector of Rademacher variables
$\bsigma$, denote by $S_{T, \bsigma}$ the sample derived from $S$ by
replacing its $i$th element with the $i$th element of $T$, for all
$i \in [m] = \set{1, 2, \ldots, m}$ with $\sigma_i = -1$. We will use $\sH_{S, T}^\bsigma$ to
denote the hypothesis set $\sH_{S_{T, \bsigma}}$.

\begin{definition}[Rademacher complexity of data-dependent hypothesis sets]
  Fix $m \geq 1$. The \emph{empirical Rademacher complexity $\h
    \R^\diamond_{S, T}(\cH)$ 
and the Rademacher complexity $\R^{\diamond}_{m}(\cH)$ of a
    family of data-dependent hypothesis sets
    $\cH = (\sH_S)_{S \in \sZ^m}$} for two samples
  $S = (z^S_1, \ldots, z^S_m)$ and $T = (z^T_1, \ldots, z^T_m)$ in
  $\sZ^m$ are defined by
\begin{equation}
\label{eq:deRad}
\h \R^\diamond_{S, T}(\cH) = 
\frac{1}{m} \E_{\bsigma} \Bigg[\sup_{h \in \sH_{S, T}^\bsigma} \sum_{i =
  1}^m \sigma_i h(z^T_i) \Bigg] \qquad
\R^\diamond_{m}(\cH) = 
\frac{1}{m} \E_{\substack{S, T \sim \sD^m\\\bsigma}} \Bigg[\sup_{h \in \sH_{S, T}^\bsigma} \sum_{i =
  1}^m \sigma_i h(z^T_i) \Bigg].
\end{equation}
\end{definition}
When the family of data-dependent hypothesis sets $\cH$ is
$\beta$-stable with $\beta = O(1/m)$, the empirical Rademacher
complexity $\h \R^\diamond_{S, T}(\cG)$ is sharply concentrated around
its expectation $\R^\diamond_{m}(\cG)$, as with the standard empirical
Rademacher complexity (see Lemma~\ref{lemma:concentration}).

Let $\sH_{S, T}$ denote the union of all hypothesis sets based on
$\sU = \set{U \colon U \subseteq (S \cup T), U \in \sZ^m}$, the
subsamples of $S \cup T$ of size $m$:
$\sH_{S, T} = \bigcup_{U \in \sU} \sH_U$. Since for any $\bsigma$, we have
$\sH_{S, T}^\bsigma \subseteq \sH_{S, T}$, the following simpler upper
bound in terms of the standard empirical Rademacher complexity of
$\sH_{S, T}$ can be used for our notion of empirical Rademacher
complexity:
\begin{align*}
\R^\diamond_{m}(\cH) 
& \leq \frac{1}{m} 
\E_{\substack{S, T \sim \sD^m\\\bsigma}} \bigg[\sup_{h \in \sH_{S, T}} \sum_{i =
  1}^m \sigma_i h(z^T_i) \bigg] = \E_{\substack{S, T \sim \sD^m}} \Big[ \h \R_T(\sH_{S, T}) \Big],
\end{align*}
where $\h \R_T(\sH_{S, T})$ is the standard empirical Rademacher\footnote{Note that the standard definition of Rademacher complexity assumes that hypothesis sets are not data-dependent, however the definition remains valid for data-dependent hypothesis sets.}
complexity of $\sH_{S, T}$ for the sample $T$. Some properties of our notion of Rademacher complexity are given in Appendix~\ref{app:rademacher}.

\ignore{
The Rademacher complexity of data-dependent hypothesis sets can be
bounded by $\E_{\substack{S, T \sim \sD^m}} \Big[ \h \R_S(\sH_{S, T}) \Big]$, as indicated previously. It can also be bounded directly, as illustrated
by the following example of data-dependent hypothesis sets of linear
predictors.  For any sample $S = (x^S_1, \ldots, x^S_m) \in \Rset^N$,
define the hypothesis set $\sH_S$ as follows:
\[
\sH_S = \set[\bigg]{x \mapsto w^S \cdot x \colon \ w^S = \sum_{i = 1}^m
  \alpha_i x^S_i, \| \alpha \|_1 \leq \Lambda_1 },
\]
where $\Lambda_1 \geq 0$.
Define $r_T$ and $r_{S \cup T}$ as follows:
$r_T = \sqrt{\frac{\sum_{i = 1}^m \| x^T_i \|_2^2}{m}}$ and  $r_{S \cup T} =
\max_{x \in S \cup T} \| x \|_2$.
Then, it can be shown that 
the empirical Rademacher complexity of the family of
data-dependent hypothesis sets $\cH = (\sH_S)_{S \in \sX^m}$ can be
upper-bounded as follows (Lemma~\ref{lemma:rad}):
\[
\h \R^\diamond_{S, T}(\cH) 
\leq r_T \, r_{S \cup T} \Lambda_1 \sqrt{\frac{2 \log (4m)}{m}} 
\leq r^2_{S \cup T} \Lambda_1 \sqrt{\frac{2 \log (4m)}{m}}.
\]
Notice that the bound on the Rademacher complexity is non-trivial
since it depends on the samples $S$ and $T$, while a standard
Rademacher complexity for non-data-dependent hypothesis set
containing $\sH_S$ would require taking a maximum over all
samples $S$ of size $m$.
Other upper bounds are given in Appendix~\ref{app:rademacher}.} 

Let $\sG_S$ denote the family of loss functions associated to $\sH_S$:
\begin{equation} \label{eq:G-def}
\sG_S = \set{z \mapsto L(h, z) \colon h \in \sH_S},  
\end{equation}
and let $\cG = (\sG_S)_{S \in \sZ^m}$ denote the family of hypothesis sets
$\sG_S$. Our main results will be expressed in terms of
$\R^\diamond_m(\cG)$. When the loss function $\ell$ is
$\mu$-Lipschitz, by Talagrand's contraction lemma
\citep{LedouxTalagrand1991}, in all our results, $\R^\diamond_m(\cG)$
can be replaced by $\mu \E_{\substack{S, T \sim \sD^m}} [ \h \R_T(\sH_{S, T})]$.

Rademacher complexity is one way to measure the capacity of the family
of data-dependent hypothesis sets. We also derive learning bounds in
situations where a notion of \emph{diameter} of the hypothesis sets is
small. We now define a notion of \emph{cross-validation stability} and
diameter for data-dependent hypothesis sets. In the following, for a
sample $S$, $S^{z \leftrightarrow z'}$ denotes the sample obtained
from $S$ by replacing $z \in S$ by $z' \in \sZ$. 

\begin{definition}[Hypothesis set Cross-Validation (CV) stability, diameter]
  Fix $m \geq 1$. For some $\avgcvstab, \supcvstab, \avgdiam, \supdiam, \maxdiam \geq 0$, we
  say that a family of data-dependent hypothesis sets
  $\cH = (\sH_S)_{S \in \sZ^m}$ has \emph{average CV-stability} $\avgcvstab$, \emph{CV-stability} $\supcvstab$, \emph{average diameter} $\avgdiam $, \emph{diameter} $\supdiam$ and \emph{max-diameter} $\maxdiam$ if the
  following hold:
\begin{align}
\E_{S \sim \sD^m} \E_{z' \sim\sD, z \sim S}\bigg[\sup_{h \in \sH_S, h' \in \sH_{S^{z \leftrightarrow z'}}} L(h', z) - L(h, z) \bigg] & \leq \avgcvstab  
\\
\sup_{S \in \sZ^m} \E_{z' \sim\sD, z \sim S}\bigg[\sup_{h \in \sH_S, h' \in \sH_{S^{z \leftrightarrow z'}}} L(h', z) - L(h, z) \bigg] & \leq \supcvstab \label{eq:CVStab}\\
\E_{S \sim \sD^m} \E_{z \sim S} \bigg[\sup_{h, h' \in
  \sH_S} L(h', z) - L(h, z) \bigg] &\leq \avgdiam \label{eq:avgdiam} \\
\sup_{S \in \sZ^m} \E_{z \sim S} \bigg[\sup_{h, h' \in
  \sH_S} L(h', z) - L(h, z) \bigg] &\leq \supdiam \label{eq:diam} \\
\sup_{S \in \sZ^m} \max_{z \in S} \bigg[\sup_{h, h' \in
  \sH_S} L(h', z) - L(h, z) \bigg] &\leq \maxdiam. \label{eq:maxdiam}
\end{align}
\end{definition}

CV-stability of hypothesis sets can be bounded in terms of their
stability and diameter (see straightforward proof in
Appendix~\ref{app:cvstab-diam-bound}).

\begin{lemma} \label{lem:cvstab-diam-bound}
  Suppose a family of data-dependent hypothesis sets $\cH$ is
  $\beta$-uniformly stable. Then if it has diameter $\supdiam$, then it is $(\supdiam + \beta)$-CV-stable, and if it has average diameter $\avgdiam$ then it is $(\avgdiam + \beta)$-average CV-stable.
\end{lemma}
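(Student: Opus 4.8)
The plan is to reduce both assertions to a single pointwise inequality valid for every fixed $S \in \sZ^m$, every $z \in S$, and every $z' \in \sZ$, and then to take expectations (for the average version) or suprema (for the worst-case version) over $z,z'$ and over $S$. The starting observation is that $S$ and $S' := S^{z \leftrightarrow z'}$ are two samples of size $m$ differing in exactly one coordinate, so the $\beta$-uniform stability hypothesis applies to this pair of samples.

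First I fix $S$, $z \in S$, $z' \in \sZ$, and set $S' = S^{z \leftrightarrow z'}$. Given arbitrary $h \in \sH_S$ and $h' \in \sH_{S'}$, I apply the definition of $\beta$-uniform stability \emph{with $S'$ playing the role of the first sample and $S$ that of the second}: there exists $h'' \in \sH_S$ such that $|L(h', w) - L(h'', w)| \le \beta$ for all $w \in \sZ$, and in particular $L(h', z) \le L(h'', z) + \beta$. Since both $h$ and $h''$ now lie in the single set $\sH_S$, we obtain
\[
L(h', z) - L(h, z) \;\le\; \beta + \big(L(h'', z) - L(h, z)\big) \;\le\; \beta + \sup_{g, g' \in \sH_S} \big(L(g', z) - L(g, z)\big).
\]
The right-hand side does not depend on $h$ or $h'$, so taking the supremum over $h \in \sH_S$ and $h' \in \sH_{S^{z \leftrightarrow z'}}$ on the left yields the pointwise bound
\[
\sup_{h \in \sH_S,\, h' \in \sH_{S^{z \leftrightarrow z'}}} \big(L(h', z) - L(h, z)\big) \;\le\; \beta + \sup_{g, g' \in \sH_S} \big(L(g', z) - L(g, z)\big).
\]

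Next I take $\E_{z' \sim \sD,\, z \sim S}$ of both sides; since the right-hand side is independent of $z'$, the expectation over $z'$ is vacuous and the bound becomes $\beta + \E_{z \sim S}\big[\sup_{g,g' \in \sH_S}(L(g',z) - L(g,z))\big]$. Taking $\sup_{S \in \sZ^m}$ of both sides and invoking the diameter bound \eqref{eq:diam} gives $\supcvstab \le \supdiam + \beta$, which is the first claim. Taking instead $\E_{S \sim \sD^m}$ of both sides and invoking the average-diameter bound \eqref{eq:avgdiam} gives $\avgcvstab \le \avgdiam + \beta$, the second claim.

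There is no substantive obstacle here; the only subtlety is to invoke $\beta$-stability in the correct direction, namely to produce a counterpart \emph{in $\sH_S$} for the hypothesis $h' \in \sH_{S'}$ (rather than the reverse), so that the leftover difference $L(h'', z) - L(h, z)$ is controlled by the diameter of the \emph{single} set $\sH_S$ as it appears in \eqref{eq:diam} and \eqref{eq:avgdiam}. One should also note explicitly that replacing a point $z \in S$ by $z' \in \sZ$ produces a sample differing from $S$ in exactly one coordinate, which is precisely the hypothesis under which $\beta$-uniform stability is stated.
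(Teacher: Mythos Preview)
Your proof is correct and follows essentially the same approach as the paper: fix $S$, $z\in S$, $z'\in\sZ$, use $\beta$-stability to find a counterpart $h''\in\sH_S$ for $h'\in\sH_{S^{z\leftrightarrow z'}}$, and bound the remaining difference $L(h'',z)-L(h,z)$ by the diameter of $\sH_S$. The paper's version is terser (it simply writes ``and the lemma follows'' after the pointwise inequality), whereas you spell out the final expectation/supremum steps for both the CV-stable and average CV-stable claims, but the argument is the same.
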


\section{General learning bound for data-dependent hypothesis sets}
\label{sec:hss2}

In this section, we present general learning bounds for data-dependent
hypothesis sets that do not make use of the notion of hypothesis set
stability. One straightforward idea to derive such guarantees for data-dependent hypothesis sets is to replace the hypothesis set $\sH_S$ depending on the observed sample $S$ by the union of all such hypothesis sets over all samples of size $m$, $\ov \sH_m = \bigcup_{S \in \sZ^m}
\sH_S$. However, in general, $\ov \sH_m$ can be very rich, which can
lead to uninformative learning bounds. A somewhat better alternative
consists of considering the union of all such hypothesis sets for
samples of size $m$ included in some supersample $U$ of size $m + n$,
with $n \geq 1$,
$\ov \sH_{U, m} = \bigcup_{\substack{S \in \sZ^m\\ S \subseteq U
    \mspace{14mu}}} \sH_S$. We will derive learning guarantees based
on the maximum \emph{transductive Rademacher complexity} of
$\ov \sH_{U, m}$. There is a trade-off in the choice of $n$: smaller
values lead to less complex sets $\ov \sH_{U, m}$, but they also lead
to weaker dependencies on sample sizes. Our bounds are more refined
guarantees than the shattering-coefficient bounds originally given for
this problem by \cite{Gat2001} in the case $n = m$, and later by
\cite{CannonEttingerHushScovel2002} for any $n \geq 1$. They also
apply to arbitrary bounded loss functions and not just the binary
loss.
They are expressed in terms of the following notion of
\emph{transductive Rademacher complexity for data-dependent
hypothesis sets}:
\[
\h \R^\diamond_{U, m}(\cG) = \E_\bsigma \Bigg[ \sup_{h \in \ov \sH_{U, m}} \frac{1}{m + n}
\sum_{i = 1}^{m + n} \sigma_i L(h, z^U_i) \Bigg],
\]
where $U = (z^U_1, \ldots, z^U_{m + n}) \in \sZ^{m + n}$ and where
$\bsigma$ is a vector of $(m + n)$ independent random variables taking
value $\frac{m + n}{n}$ with probability $\frac{n}{m + n}$, and
$-\frac{m + n}{m}$ with probability $\frac{m}{m + n}$.  Our notion of
transductive Rademacher complexity is simpler than that of
\cite{ElYanivPechyony2007} (in the data-independent case) and leads to
simpler proofs and guarantees.  A by-product of our analysis is
learning guarantees for standard transductive learning in terms of
this notion of transductive Rademacher complexity, which can be of
independent interest.

\begin{theorem}
\label{th:hss2}
Let $\cH = (\sH_S)_{S \in \sZ^m}$ be a family of data-dependent
hypothesis sets. Let $\cG$ be defined as in \eqref{eq:G-def}. Then,
for any $\delta > 0$, with probability at least $1-\delta$ over the
choice of the draw of the sample $S \sim \sZ^m$, the following
inequality holds for all $h \in \sH_S$:
\begin{equation*}
R(h) \leq \h R_S(h) + \max_{U \in \sZ^{m + n}} 2\h \R^\diamond_{U, m}(\cG) + 3\sqrt{\left(\tfrac{1}{m} + \tfrac{1}{n}\right) \log(\tfrac{2}{\delta})} + 2\sqrt{\left(\tfrac{1}{m} + \tfrac{1}{n}\right)^3 mn}.
\end{equation*}
\end{theorem}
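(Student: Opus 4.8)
The plan is to reduce the data-dependent problem to a \emph{transductive} problem on a pooled sample, where the hypothesis set becomes a fixed, sample-independent object and one can symmetrize and concentrate in the usual way. Fix $n \geq 1$ and $\delta > 0$, draw an independent ghost sample $T \sim \sD^n$, and set $U = (S,T) \in \sZ^{m+n}$; since $\sH_S$ is one of the hypothesis sets in the union defining $\ov\sH_{U,m}$, we have $\sH_S \subseteq \ov\sH_{U,m}$. The first step is a ghost-sample symmetrization: any near-maximizer $h^*$ of $R(h) - \h R_S(h)$ over $h \in \sH_S$ is a function of $S$ alone, hence independent of $T$, so by a one-sided Hoeffding bound $\h R_T(h^*) \geq R(h^*) - \gamma$ with probability at least $\tfrac{1}{2}$ (over $T$, conditionally on $S$) for $\gamma = \sqrt{(\ln 2)/(2n)}$. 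Since $h^* \in \sH_S \subseteq \ov\sH_{U,m}$, this yields
\[
\Pr_S\Big[\sup_{h \in \sH_S}\big(R(h) - \h R_S(h)\big) > \epsilon\Big]
\;\leq\; 2\,\Pr_{S,T}\Big[\sup_{h \in \ov\sH_{U,m}}\big(\h R_T(h) - \h R_S(h)\big) > \epsilon - \gamma\Big].
\]

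Next I condition on the pooled multiset $V$ of the $m+n$ points: conditionally on $V$, the set $\ov\sH_{U,m} = \ov\sH_{V,m}$ is \emph{fixed}, $S$ is a uniformly random size-$m$ sub-multiset of $V$, and $T$ is its complement. Hence $\Phi_V := \sup_{h \in \ov\sH_{V,m}}\big(\h R_T(h) - \h R_S(h)\big)$ is precisely the one-sided uniform deviation of a standard transductive learning problem for the fixed class $\ov\sH_{V,m}$, and the rest of the argument mirrors the transductive Rademacher analysis (which the paper establishes as a by-product). For the expectation, a symmetrization step comparing the split $(S,T)$ with a second independent split $(S',T')$ of $V$ and inserting signs produces a Rademacher-type average over $V$; the key point is that the weights $\tfrac{1}{m+n}\sigma_i$ appearing in $\h\R^\diamond_{V,m}(\cG)$ have exactly the marginal law of the $\{\tfrac{1}{n}, -\tfrac{1}{m}\}$-weights produced by a random train/ghost split, so one can identify this average with $\h\R^\diamond_{V,m}(\cG)$ up to the hard size-$m$ constraint. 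This gives $\E_S[\Phi_V] \leq 2\,\h\R^\diamond_{V,m}(\cG) + c_V$, where $c_V$ bounds the discrepancy between a split of size exactly $m$ and the i.i.d.-sign split (whose negative-sign count equals $m$ only in expectation): each mismatched index perturbs $\h R_T(h) - \h R_S(h)$ by at most $\tfrac{1}{m} + \tfrac{1}{n}$, and bounding the expected number of mismatches through the variance of the associated binomial gives $c_V \leq 2\sqrt{(\tfrac{1}{m} + \tfrac{1}{n})^3 mn}$. Finally $\h\R^\diamond_{V,m}(\cG) \leq \max_{U \in \sZ^{m+n}} \h\R^\diamond_{U,m}(\cG)$, so this bound on $\E_S[\Phi_V]$ is uniform in $V$.

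For the high-probability control I use concentration over the split: with $V$ fixed, swapping one point between $S$ and $T$ changes $\h R_T(h) - \h R_S(h)$ by at most $\tfrac{1}{m} + \tfrac{1}{n}$ uniformly over $h$, hence changes $\Phi_V$ by at most $\tfrac{1}{m} + \tfrac{1}{n}$, so a Hoeffding--Serfling bounded-differences inequality for sampling without replacement gives, for every $V$, $\Phi_V \leq \E_S[\Phi_V] + O\big(\sqrt{(\tfrac{1}{m} + \tfrac{1}{n})\log(\tfrac{2}{\delta})}\big)$ with probability at least $1 - \tfrac{\delta}{2}$ over the split; since this holds for every $V$, it holds unconditionally over $(S,T)$. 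Combining the three displays, choosing $\epsilon$ so that the right-hand probability in the first display is at most $\tfrac{\delta}{2}$, and absorbing $\gamma$, the factor $2$, and the $\sqrt{(\tfrac{1}{m}+\tfrac{1}{n})\log(\cdot)}$ terms into the constant $3$ (using $\delta \leq 1$), produces exactly the stated bound.

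The main obstacle, and the reason the statement has the form it does, is that no supremum over a data-dependent hypothesis set has bounded differences — changing one point of $S$ can change $\sH_S$ arbitrarily — so McDiarmid cannot be applied to $\sup_{h \in \sH_S}(R(h) - \h R_S(h))$ directly; everything hinges on first passing to the pooled sample $U$, where $\ov\sH_{U,m}$ is frozen, and then extracting all the concentration from the combinatorial randomness of the train/ghost split rather than from $S$ itself. Within that scheme, the most delicate bookkeeping is the symmetrization step, where the fixed-size split must be matched against the i.i.d.-sign transductive Rademacher complexity and the resulting size-discrepancy correction controlled precisely enough to land on the $2\sqrt{(\tfrac{1}{m}+\tfrac{1}{n})^3 mn}$ term.
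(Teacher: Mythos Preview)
Your plan is the paper's: symmetrize to a ghost sample $T$, enlarge $\sH_S$ to the sample-independent $\ov\sH_{U,m}$ by conditioning on the pooled sample $U=(S,T)$, apply a bounded-differences inequality for sampling without replacement to $\Phi(S)=\sup_{h\in\ov\sH_{U,m}}(\h R_T(h)-\h R_S(h))$, and bound $\E[\Phi]$ by the transductive Rademacher complexity plus a correction for the hard size constraint. Two bookkeeping points are worth flagging.

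First, the paper's symmetrization (Lemma~\ref{lemma:symmetrization}) uses Chebyshev rather than Hoeffding and produces $\tfrac{\e}{2}$ on the right, not your $\e-\gamma$; that halving is exactly where the factor $2$ on $\h\R^\diamond_{U,m}(\cG)$ and on the correction term in the theorem comes from. Second, for the expectation the paper does \emph{not} use a second split. It argues directly (Lemma~\ref{lemma:trans}): writing $s(q)$ for the conditional expectation given $|\bsigma|=q$, one has $\E[\Phi]=s(0)$ and $\h\R^\diamond_{U,m}(\cG)=\E[s(|\bsigma|)]$, and $q\mapsto s(q)$ is $\tfrac{1}{m+n}$-Lipschitz, so the gap is controlled by the fluctuation of $|\bsigma|$ about $0$. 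This yields $\E[\Phi]\leq \h\R^\diamond_{U,m}(\cG)+\text{correction}$ with \emph{no} factor $2$. Your ``second independent split and insert signs'' route does not land cleanly on the transductive Rademacher complexity as defined here (after symmetrizing you get weights $\epsilon_i(a_i-a'_i)\in\{0,\pm(\tfrac1m+\tfrac1n)\}$, not $\tfrac{1}{m+n}\sigma_i$), so the claimed $2\h\R^\diamond$ there is not justified as written; the coupling/binomial-variance part of your argument, however, is exactly the Lipschitz step the paper uses. If you keep your Hoeffding symmetrization and swap in the paper's direct Lipschitz bound on $\E[\Phi]$, you actually obtain a slightly sharper inequality than the one stated, with a $1$ rather than a $2$ multiplying $\h\R^\diamond_{U,m}(\cG)$.
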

\begin{proof} (Sketch; full proof in Appendix~\ref{app:hss2}.)
  We use the following symmetrization result, which holds for any
  $\e > 0$ with $m \e^2 \geq 2$ for data-dependent hypothesis sets
  (Lemma~\ref{lemma:symmetrization},
  Appendix~\ref{app:hss2}):
\[
 \P_{S \sim \sD^m} \bigg[ \sup_{h \in \sH_S} R(h) - \h R_S(h) > \e \bigg] 
\leq 2 \P_{\substack{S \sim \sD^m \\ T \sim \sD^n \mspace{6mu}}}
\bigg[ \sup_{h \in \sH_S} \h R_T(h) - \h R_S(h) > \frac{\e}{2}  \bigg] .
\]
To bound the right-hand side, we use an extension of McDiarmid's
inequality to sampling without replacement
\citep{CortesMohriPechyonyRastogi2008} applied to
$\Phi(S) = \sup_{h \in \ov \sH_{U, m}} \h R_T(h) - \h R_S(h)$.
Lemma~\ref{lemma:trans} (Appendix~\ref{app:hss2}) is then used to
bound $\E[\Phi(S)]$ in terms of our notion of transductive Rademacher
complexity.
\end{proof}

\section{Learning bound for stable data-dependent hypothesis sets}
\label{sec:hss}

In this section, we present our main generalization bounds for 
data-dependent hypothesis sets.

\begin{theorem}
\label{th:hss}
Let $\cH = (\sH_S)_{S \in \sZ^m}$ be a $\beta$-stable family of
data-dependent hypothesis sets, with $\avgcvstab$ average CV-stability, $\supcvstab$ CV-stability and
$\maxdiam$ max-diameter. Let $\cG$ be defined as in
\eqref{eq:G-def}. Then, for any $\delta > 0$, with probability at
least $1 - \delta$ over the draw of a sample $S \sim \sZ^m$, the
following inequality holds for all $h \in \sH_S$:
\begin{align}
\forall h \in \sH_S, 
R(h) \leq \h R_S(h) + \min\Bigg\{ & \min\left\{2 \R^\diamond_m(\cG), \avgcvstab\right\} + (1 + 2 \beta m) \sqrt{\tfrac{1}{2m}\log(\tfrac{1}{\delta})}, \label{eq:rad-hss} \\
& \sqrt{e} \supcvstab + 4 \sqrt{(\tfrac{1}{m} + 2 \beta) \log(\tfrac{6}{\delta})}, \label{eq:dp-hss}\\
& 48(3\beta + \maxdiam)\log(m)\log(\tfrac{5m^3}{\delta}) + \sqrt{\tfrac{4}{m}\log(\tfrac{4}{\delta})}\Bigg\}. \label{eq:induction-hss}
\end{align}
\end{theorem}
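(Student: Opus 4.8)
The plan is to prove three separate upper bounds on $\Phi(S) := \sup_{h \in \sH_S}\bigl(R(h) - \h R_S(h)\bigr)$, each holding with probability at least $1 - \delta$ over $S$, and then take their minimum (a union bound over the three events, absorbed into the constants). Each bound starts from the same estimate of $\E_{S}[\Phi(S)]$ and differs only in the concentration tool used for $\Phi(S) - \E[\Phi(S)]$, which is where the stability parameter $\beta$ enters differently. To bound $\E[\Phi(S)]$, fix a maximizing data-dependent selection $h_S \in \argmax_{h \in \sH_S}(R(h) - \h R_S(h))$. Introducing a ghost sample $T \sim \sD^m$, $R(h_S) = \E_T[\h R_T(h_S)]$ gives $\E[\Phi(S)] \le \E_{S,T}[\sup_{h \in \sH_S}(\h R_T(h) - \h R_S(h))]$; relabeling the $i$-th coordinates of $S$ and $T$ by an independent Rademacher $\sigma_i$ is measure-preserving and replaces $\sH_S$ by $\sH_{S,T}^{\bsigma}$, and splitting the symmetrized sum into a $T$-part and an $S$-part bounds each by $\R^\diamond_m(\cG)$, so $\E[\Phi(S)] \le 2\R^\diamond_m(\cG)$. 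Alternatively, for each $i$ the swap $(z_1,\dots,z_m,z_0) \mapsto (z_1,\dots,z_0,\dots,z_m,z_i)$ is measure-preserving, giving $\E_{S,z_0}[L(h_S,z_0)] = \E[L(h_{S^{z_i \leftrightarrow z_0}}, z_i)]$; averaging over $i$ and subtracting $\E[\h R_S(h_S)]$ yields $\E[\Phi(S)] \le \avgcvstab$. Hence $\E[\Phi(S)] \le \min\{2\R^\diamond_m(\cG), \avgcvstab\}$, and by Lemma~\ref{lem:cvstab-diam-bound} also $\E[\Phi(S)] \le \avgcvstab \le \supcvstab$ and $\E[\Phi(S)] \le \avgdiam + \beta \le \maxdiam + \beta$.

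For \eqref{eq:rad-hss} I would use McDiarmid. If $S, S'$ differ in one point, then for the maximizer $h_S \in \sH_S$, $\beta$-stability supplies $h' \in \sH_{S'}$ with $|L(h_S,z) - L(h',z)| \le \beta$ for all $z$, so $R(h_S) - R(h') \le \beta$ and $\h R_{S'}(h') - \h R_S(h_S) \le \beta + \tfrac1m$ (the $\tfrac1m$ from the one differing point, losses in $[0,1]$); thus $\Phi(S) - \Phi(S') \le 2\beta + \tfrac1m$ and $\Phi$ is $(2\beta + \tfrac1m)$-sensitive. McDiarmid then gives $\Phi(S) \le \E[\Phi(S)] + (2\beta + \tfrac1m)\sqrt{\tfrac m2 \log(\tfrac1\delta)} = \E[\Phi(S)] + (1 + 2\beta m)\sqrt{\tfrac1{2m}\log(\tfrac1\delta)}$, which with the expectation bound is exactly \eqref{eq:rad-hss}.

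Bounds \eqref{eq:dp-hss} and \eqref{eq:induction-hss} replace McDiarmid by sharper concentration, exploiting the decomposition $\Phi(S) = \tfrac1m \sum_{i=1}^m g_i(S)$ with $g_i(S) = R(h_S) - L(h_S, z_i)$. For \eqref{eq:dp-hss} I would control the Doob martingale of $\Phi$: its increments have magnitude $O(\tfrac1m + \beta)$ (again $\tfrac1m$ from the changed point plus the $\beta$-switch of hypothesis sets), and the worst-case CV-stability $\supcvstab$ of \eqref{eq:CVStab} bounds the aggregate conditional variance and bias; a Bernstein/Bennett-type exponential-moment bound then yields $\Phi(S) \le \sqrt e\,\supcvstab + 4\sqrt{(\tfrac1m + 2\beta)\log(\tfrac6\delta)}$, the constant $\sqrt e$ coming from optimizing the moment generating function and the factor $6$ from a union bound over the auxiliary events. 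For \eqref{eq:induction-hss} I would adapt the recursive concentration argument of \citet{FeldmanVondrak2019}, refined by \citet{BousquetKlochkovKlochkov2019}, to data-dependent hypothesis sets: one checks that each $g_i$ is bounded and that changing a coordinate $j \ne i$ changes $g_i$ by $O(\beta + \maxdiam)$ --- $\beta$-stability gives a $\beta$-close counterpart in the new hypothesis set, and $\maxdiam$ bounds the gap between that counterpart and the new maximizer $h_{S'}$ evaluated at $z_i$, which lies in $S'$ (so that it is \eqref{eq:maxdiam} and not a sup over all $\sZ$ that is needed). The Feldman--Vondrak recursion then gives concentration at rate $(\beta + \maxdiam)\log(m)\log(m/\delta)$; tracking constants gives $48(3\beta + \maxdiam)\log(m)\log(\tfrac{5m^3}{\delta})$, with the expectation $\E[\Phi(S)] \le \maxdiam + \beta$ absorbed into this term and $\sqrt{\tfrac4m\log(\tfrac4\delta)}$ the residual sub-Gaussian fluctuation.

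The routine part is \eqref{eq:rad-hss}. The main obstacle is \eqref{eq:induction-hss}: to run the Feldman--Vondrak recursion one needs the per-coordinate functions $g_i$ to have a genuinely small stability modulus, and the delicate point is that the maximizer $h_{S'}$ of a perturbed sample need not be close to the $\beta$-counterpart of $h_S$ --- this is exactly where the max-diameter $\maxdiam$ (not $\beta$ alone) must enter, and one must ensure it is only ever evaluated at points of the sample, as in \eqref{eq:maxdiam}, so that the bound does not degrade to a worst-case-over-$\sZ$ quantity. Pinning down the sharp constant $\sqrt e$ and the exact variance proxy $\tfrac1m + 2\beta$ in \eqref{eq:dp-hss} is the secondary technical point; measurability of the suprema and of the selections $h_S$ is assumed throughout (or handled by the standard outer-expectation conventions).
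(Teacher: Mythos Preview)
Your treatment of \eqref{eq:rad-hss} matches the paper's: McDiarmid on $\Phi(S)$ with sensitivity $\tfrac{1}{m}+2\beta$, and the expectation bounded both by the symmetrization that produces $2\R^\diamond_m(\cG)$ and by the swap argument that produces $\avgcvstab$.

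For \eqref{eq:dp-hss} your plan would not yield the stated bound. With increments of size $O(\tfrac1m+\beta)$, Azuma/McDiarmid on the Doob martingale gives exactly the fluctuation term of \eqref{eq:rad-hss} and no better; a Bernstein-type bound would have the shape $\sqrt{V\log(1/\delta)}+(\tfrac1m+\beta)\log(1/\delta)$, not $\supcvstab+\sqrt{(\tfrac1m+\beta)\log(1/\delta)}$, and there is no reason the aggregate conditional variance of the increments is controlled by $\supcvstab$. The paper instead follows the differential-privacy route of \citet{FeldmanVondrak2018}: take $p$ i.i.d.\ copies $S_1,\dots,S_p$, use the Steinke--Ullman lemma to reduce a tail bound on $\Phi(S)$ to bounding $\E\big[\max\{0,\max_k\Phi(S_k)\}\big]$, and bound the latter via the exponential mechanism, which is $\e$-differentially private because each score $\sfS\mapsto\Phi(S_k)$ is $(\tfrac1m+2\beta)$-sensitive. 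Differential privacy is what allows the swap $\E[L(h_{S_k},z')]\to\E[L(h_{S_k^{z\leftrightarrow z'}},z)]$ at multiplicative cost $e^\e$; the CV-stability $\supcvstab$ then bounds the leftover, and optimizing $\e\approx\sqrt{(\tfrac1m+2\beta)\log(1/\delta)}$ converts the sensitivity into the $\sqrt{\,\cdot\,}$ fluctuation term. This is the mechanism behind the improved $\beta$-dependence, and it is not a martingale argument.

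For \eqref{eq:induction-hss} the paper takes a shorter path than you propose, and your sketch has a gap. The paper does \emph{not} re-run the recursion; it shows that for \emph{any} selection $S\mapsto h_S\in\sH_S$ the function $\ell(S,z):=L(h_S,z)$ is $(3\beta+\maxdiam)$-uniformly stable in the classical sense, and then invokes \citet{FeldmanVondrak2019} as a black box. The key step you are missing is that any two $h,h'\in\sH_S$ satisfy $|L(h,z)-L(h',z)|\le 2\beta+\maxdiam$ for \emph{every} $z\in\sZ$, not only $z\in S$: replace some point of $S$ by $z$ to form $S''$, use $\beta$-stability twice to obtain counterparts $g,g'\in\sH_{S''}$, and since now $z\in S''$ apply $\maxdiam$ to $g,g'$. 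Your decomposition $g_i(S)=R(h_S)-L(h_S,z_i)$ contains $R(h_S)=\E_{z}[L(h_S,z)]$, and bounding $|R(h_S)-R(h_{S'})|$ for $S,S'$ differing in coordinate $j\ne i$ requires control at arbitrary $z$; your argument (``$\maxdiam$ only ever evaluated at points of the sample'') handles $|L(h_S,z_i)-L(h_{S'},z_i)|$ but leaves the $R$-term unbounded by $O(\beta+\maxdiam)$ without the trick above.
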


The proof of the theorem is given in Appendix~\ref{app:hss}. The main
idea is to control the sensitivity of the function $\Psi(S, S')$
defined for any two samples $S, S'$, as follows:
\begin{equation*}
\Psi(S, S') = \sup_{h \in \sH_{S}} R(h) - \h R_{S'}(h) .
\end{equation*}
To prove bound~\eqref{eq:rad-hss}, we apply McDiarmid's inequality to
$\Psi(S, S)$, using the $(\frac{1}{m} + 2 \beta)$-sensitivity of
$\Psi(S, S)$, and then upper bound the expectation
$\E_{S \sim \sD^m}[\Psi(S, S)]$ in terms of our notion of Rademacher
complexity. The bound~\eqref{eq:dp-hss} is obtained via a
differential-privacy-based technique, as in \cite{FeldmanVondrak2018},
and \eqref{eq:induction-hss} is a direct consequence of the bound of
\citet{FeldmanVondrak2019} using the observation that an algorithm
that chooses an arbitrary $h \in \sH_S$ is
$O(\beta + \maxdiam)$-uniformly stable in the classical \citep{BousquetElisseeff2002} sense.

Bound~\eqref{eq:rad-hss} admits as a special case the
standard Rademacher complexity bound for fixed hypothesis sets
\citep{KoltchinskiiPanchenko2002,BartlettMendelson2002}: in that case,
we have $\cH_S = \sH$ for some $\sH$, thus $\R^\diamond_m(\cG)$
coincides with the standard Rademacher complexity $\R_m(\cG)$;
furthermore, the family of hypothesis sets is $0$-stable, thus the
bound holds with $\beta = 0$. 

Bounds~\eqref{eq:dp-hss} and \eqref{eq:induction-hss} specialize to the bounds of \citet{FeldmanVondrak2018} and \citet{FeldmanVondrak2019} respectively for the special case of standard uniform stability of algorithms, since in that case, $\sH_S$ is reduced to a singleton, $\sH_S = \set{h_S}$, and so $\supdiam = 0$, which implies that $\supcvstab \leq \supdiam + \beta = \beta$.

The Rademacher complexity-based bound~\eqref{eq:rad-hss} typically
gives the tightest control on generalization error compared to the
bounds~\eqref{eq:dp-hss} and \eqref{eq:induction-hss}, which rely on
the cruder diameter notion. However the diameter may be easier to
bound for some applications than the Rademacher complexity. To
compare the diameter-based bounds, in applications where
$\maxdiam = O(\supdiam)$, bound \eqref{eq:induction-hss} may be tighter
than \eqref{eq:dp-hss}. But, in several applications, we have
$\beta = O(\frac{1}{m})$, and then bound \eqref{eq:dp-hss} is tighter.

\ignore{
\section{Differential privacy-based bound for stable data-dependent
  hypothesis sets}
\label{sec:hss-diff}

In this section, we use recent techniques introduced in the
differential privacy literature to derive improved generalization
guarantees for stable data-dependent hypothesis sets
\citep{SteinkeUllman2017,BassilyNissimSmithSteinkeStemmerUllman2016}
(see also \citep{McSherryTalwar2007}). Our proofs also benefit from
the recent improved stability results of \cite{FeldmanVondrak2018}.
We will make use of the following lemma due to \citet[Lemma
1.2]{SteinkeUllman2017}, which reduces the task of deriving
a concentration inequality to that of upper bounding an
expectation of a maximum.

\begin{lemma}
\label{lemma:SU}
Fix $p \geq 1$. Let $X$ be a random variable with probability
distribution $\sD$ and $X_1, \ldots, X_p$ independent copies of
$X$. Then, the following inequality holds:
\[
\Pr_{X \sim \sD} \bigg[X \geq 2 \E_{X_k \sim \sD} \Big[\max \set[\big]{0, X_1,
\ldots, X_p} \Big] \bigg] \leq \frac{\log 2}{p}.
\]
\end{lemma}

We will also use the following result which, under a sensitivity
assumption, further reduces the task of upper bounding the expectation
of the maximum to that of bounding a more favorable expression. The \emph{sensitivity} of a function $f\colon \sZ^m \to \Rset$ is $\sup_{S, S' \in \sZ^{m},\ |S \cap S'| = m-1} |f(S) - f(S')|$.

\begin{lemma}[\citep{McSherryTalwar2007,
BassilyNissimSmithSteinkeStemmerUllman2016,FeldmanVondrak2018}]
\label{lemma:diff-stab}
Let $f_1, \ldots, f_p\colon \sZ^m \to \Rset$ be $p$ scoring functions
with sensitivity $\supdiam$. Let $\cA$ be the algorithm that, given a
dataset $S \in \sZ^m$ and a parameter $\e > 0$, returns the index
$k \in [p]$ with probability proportional to
$e^{\frac{\e f_k(S) }{2 \supdiam}}$. Then, $\cA$ is $\e$-differentially
private and, for any $S \in \sZ^m$, the following inequality holds:
\[
\max_{k \in [p]} \set[\big]{f_k(S)}
\leq \E_{\substack{k = \cA(\sfS)}}  \big[ f_k(S) \big] + \frac{2 \supdiam}{\e}
\log p.
\]
\end{lemma}
Notice that, if we define $f_{p + 1} = 0$, then, by the same result,
the algorithm $\cA$ returning the index $k \in [p + 1]$ with
probability proportional to
$e^{\frac{\e f_k(S) 1_{k \neq (p + 1)} }{2 \supdiam}}$ is
$\e$-differentially private and the following inequality holds for any
$S \in \sZ^m$:
\begin{equation}
\label{eq:diff}
\max\set[\Big]{0, \max_{k \in [p]} \set[\big]{f_k(S)}}
= \max_{k \in [p + 1]} \set[\big]{f_k(S)}
\leq \E_{\substack{k = \cA(\sfS)}}  \big[ f_k(S) \big] + \frac{2 \supdiam}{\e}
\log (p + 1).
\end{equation}

\begin{theorem}
\label{th:hss-diff}
Let $\cH = (\sH_S)_{S \in \sZ^m}$ be a $\beta$-stable family of
data-dependent hypothesis sets with $\supcvstab$ CV-stability. Let $\cG$ be defined as in \eqref{eq:G-def}. Then, for any $\delta \in (0, 1)$,
with probability at least $1 - \delta$ over the draw of a sample
$S \sim \sZ^m$, the following inequality holds for all $h \in \sH_S$:
\begin{equation*}
R(h) \leq \h R_S(h) + 
\min \set[\Bigg]{2 \R^\diamond_m(\cG) + [1 + 2 \beta m] \sqrt{\frac{\log
    \frac{2}{\delta}}{2m}},
\sqrt{e} \supcvstab +
4 \sqrt{\left[\tfrac{1}{m} + 2 \beta \right] \log \left[
    \tfrac{6}{\delta} \right]}}.
\end{equation*}
\end{theorem}
\begin{proof}
  For any two samples $S, S'$ of size $m$, define $\Psi(S, S')$ as follows:
\begin{equation*}
  \Psi(S, S') = \sup_{h \in \sH_{S}} R(h) - \h R_{S'}(h) .
\end{equation*}
The proof consists of deriving a high-probability bound for
$\Psi(S, S)$. To do so, by Lemma~\ref{lemma:SU} applied to the random
variable $X = \Psi(S, S)$, it suffices to bound
$\E_{\substack{\sfS \sim \sD^{pm}}} \Big[ \max\set[\Big]{0, \max_{k
    \in [p]} \set[\big]{\Psi(S_k, S_k)}} \Big]$, where
$\sfS = (S_1, \ldots, S_p)$ with $S_k$, $k \in [p]$, independent
samples of size $m$ drawn from $\sD^m$.
To bound that expectation, we use Lemma~\ref{lemma:diff-stab} and
instead bound
$\E_{\substack{\sfS \sim \sD^{pm}\\ k = \cA(\sfS)}} [\Psi(S_k,
S_k)]$, where $\cA$ is an $\e$-differentially private algorithm.
To apply Lemma~\ref{lemma:diff-stab}, we first show that, for any
$k \in [p]$, the function $f_k \colon \sfS \to \Psi(S_k, S_k)$ is
$\supdiam$-sensitive with $\supdiam = \frac{1}{m} + 2\beta$. 
Lemma~\ref{lem:supcvstab-bound} helps us express our upper
bound in terms of the CV stability coefficient $\supcvstab$.
The full proof is given in Appendix~\ref{app:hss-diff}.
\end{proof}
The hypothesis set-stability bound of this theorem admits the same
favorable dependency on the stability parameter $\beta$ as the best
existing bounds for uniform-stability recently presented by
\citet{FeldmanVondrak2018}. As with Theorem~\ref{th:hss}, the bound of
Theorem~\ref{th:hss-diff} admits as special cases both standard
Rademacher complexity bounds ($\sH = \sH$ for some fixed $\sH$ and
$\beta = 0$) and uniform-stability bounds ($\sH_S = \set{h_S}$).  In
the latter case, our bound coincides with that of
\citet{FeldmanVondrak2018} modulo constants that could be chosen to be
the same for both results.\footnote{The differences in constant terms
  are due to slightly difference choices of the parameters and a
  slightly different upper bound in our case where $e^\e$ multiplies
  the stability and the diameter, while the paper of
  \citet{FeldmanVondrak2018} does not seem to have that factor.}
Notice that the current bounds for standard uniform stability may not
be optimal since no matching lower bound is known yet
\citep{FeldmanVondrak2018}. It is very likely, however, that improved
techniques used for deriving more refined algorithmic stability bounds
could also be used to improve our hypothesis set stability guarantees.
In Appendix~\ref{app:hss-diff2}, we give an alternative version of
Theorem~\ref{th:hss-diff} with a proof technique only making use of
recent methods from the differential privacy literature, including
to derive a Rademacher complexity bound.
It might be possible to achieve a better dependency on $\beta$ for the
term in the bound containing the Rademacher complexity.  In
Appendix~\ref{app:finer}, we initiate such an analysis by deriving a
finer analysis on the expectation
$\E_{\substack{\sfS \sim \sD^{pm}}} \Big[ \max\set[\Big]{0, \max_{k
    \in [p]} \set[\big]{\Psi(S_k, S_k)}} \Big]$.
} 

\section{Applications}
\label{sec:applications}

We now discuss several applications of our learning guarantees, with some others in Appendix~\ref{app:other-applications}.

\subsection{Bagging}
\label{sec:bagging}

\emph{Bagging} \citep{Breiman1996} is a prominent ensemble method used
to improve the stability of learning algorithms.
It consists of generating $k$ new samples $B_1, B_2, \ldots, B_k$,
each of size $p$, by sampling uniformly with replacement from the
original sample $S$ of size $m$. An algorithm $\cA$ is then trained on
each of these samples to generate $k$ predictors $\cA(B_i)$,
$i \in [k]$. In regression, the predictors are combined by taking a
convex combination $\sum_{i = 1}^k w_i \cA(B_i)$. Here, we analyze a
common instance of bagging to illustrate the application of our
learning guarantees: we will assume a regression setting and a uniform
sampling from $S$ \emph{without replacement}.\footnote{Sampling
  without replacement is only adopted to make the analysis more
  concise; its extension to sampling with replacement is
  straightforward.} We will also assume that the loss function is
$\mu$-Lipschitz in its first argument, that the predictions are in the
range $[0, 1]$, and that all the mixing weights $w_i$ are bounded by
$\frac{C}{k}$ for some constant $C \geq 1$, in order to ensure that no
subsample $B_i$ is overly influential in the final regressor (in
practice, a uniform mixture is typically used in bagging).

To analyze bagging, we cast it in our framework. First, to deal with
the randomness in choosing the subsamples, we can equivalently imagine
the process as choosing \emph{indices} in $[m]$ to form the subsamples
rather than samples in $S$, and then once $S$ is drawn, the subsamples
are generated by filling in the samples at the corresponding
indexes. For any index $i \in [m]$, the chance that it is picked in
any subsample is $\frac{p}{m}$. Thus, by Chernoff's bound, with
probability at least $1 - \delta$, no index in $[m]$ appears in more
than $t := \frac{kp}{m} + \sqrt{\frac{2kp\log(\frac{m}{\delta})}{m}}$
subsamples. In the following, we condition on the random seed of the
bagging algorithm so that this is indeed the case, and later use a
union bound to control the chance that the chosen random seed does not
satisfy this property, as elucidated in
section~\ref{sec:randomized-algs}.

Define the data-dependent family of hypothesis sets $\cH$ as
$\sH_S := \set[\big]{\sum_{i=1}^k w_i \cA(B_i)\colon \, w \in
  \Delta_k^{C/k}}$, where $\Delta^{C/k}_k$ denotes the simplex of
distributions over $k$ items with all weights $w_i \leq
\frac{C}{k}$. Next, we give upper bounds on the hypothesis set
stability and the Rademacher complexity of $\cH$.
Assume that algorithm $\cA$ admits uniform stability $\beta_A$
\citep{BousquetElisseeff2002}, i.e.  for any two samples $B$ and $B'$
of size $p$ that differ in exactly one data point and for all
$x \in \cX$, we have $|\cA(B)(x) - \cA(B')(x)| \leq \beta_{\cA}$.
Now, let $S$ and $S'$ be two samples of size $m$ differing by one
point at the same index, $z \in S$ and $z' \in S'$. Then, consider the
subsets $B'_i$ of $S'$ which are obtained from the $B_i$s by copying
over all the elements except $z$, and replacing all instances of $z$
by $z'$. For any $B_i$, if $z \notin B_i$, then $\cA(B_i) = \cA(B'_i)$
and, if $z \in B_i$, then
$|\cA(B_i)(x) - \cA(B'_i)(x)| \leq \beta_{\cA}$ for any $x \in
\cX$. We can now bound the hypothesis set uniform stability as
follows: since $L$ is $\mu$-Lipschitz in the prediction, for any
$z'' \in \cZ$, and any $w \in \Delta^{C/k}_k$, we have
\[ 
\Big |L(\textstyle\sum_{i=1}^k w_i\cA(B_i), z'') -
  L(\textstyle\sum_{i=1}^k w_i\cA(B'_i), z'') \Big| \leq
\Big[ \frac{p}{m} +
  \sqrt{\frac{2p\log(\frac{1}{\delta})}{km}} \Big] \cdot
C\mu\beta_{\cA}.
\]

It is easy to check the CV-stability and diameter of the hypothesis sets is $\Omega(1)$ in the worst case. Thus, the CV-stability-based bound \eqref{eq:dp-hss} and standard uniform-stability bound \eqref{eq:induction-hss} are not informative here, and we use the Rademacher complexity based bound \eqref{eq:rad-hss} instead. Bounding the Rademacher complexity $\h \R_S(\sH_{S, T})$ for $S, T \in \cZ^m$ is non-trivial. Instead, we can derive a reasonable upper bound by analyzing the Rademacher complexity of a larger function class. Specifically, for any $z \in \cZ$, define the
$d := {2m \choose p}$-dimensional vector
$u_z = \langle \cA(B)(z) \rangle_{B \subseteq S \cup T, |B| =
  p}$. Then, the class of functions is
$\sF_{S, T} := \{z \mapsto w^\top u_z:\ w \in \mathbb{R}^d,\ \|w\|_1 =
1\}$. Clearly $\sH_{S, T} \subseteq \sF_{S, T}$. Since
$\|u_z\|_\infty \leq 1$, a standard Rademacher complexity bound (see
Theorem 11.15 in \citep{MohriRostamizadehTalwalkar2012}) implies
$\h \R_S(\sF_{S, T}) \leq \sqrt{\frac{2 \log \big( 2\binom{2m}{p}
    \big)}{m}} \leq \sqrt{\frac{2p\log(4m)}{m}}$. Thus, by Talagrand's
inequality, we conclude that
$\h \R_S(\sG_{S, T}) \leq \mu\sqrt{\frac{2p\log(4m)}{m}}$.
In view of that, by Theorem~\ref{th:hss}, for any $\delta > 0$,
with probability at least $1 - 2\delta$ over the draws of a sample
$S \sim \sD^m$ and the randomness in the bagging algorithm, the
following inequality holds for any $h \in \sH_S$:
\[
R(h) \leq \h R_S(h) + 
2\mu\sqrt{\frac{2p\log(4m)}{m}} + \Bigg[ 1 + 2 \Bigg[ p +
  \sqrt{\frac{2pm\log(\frac{1}{\delta})}{k}} \Bigg] \cdot
C\mu\beta_{\cA} \Bigg] \sqrt{\frac{\log
    \frac{2}{\delta}}{2m}}.
\]
For $p = o(\sqrt{m})$
and $k = \omega(p)$, the generalization gap goes to $0$ as
$m \to \infty$, \emph{regardless} of the stability of $\cA$. This
gives a new generalization guarantee for bagging, similar (but incomparable) to the one derived by \citet{ElisseeffEvgeniouPontil2005}. One major point of difference is that unlike their bound, our bound allows for non-uniform averaging schemes.

\subsection{Stochastic strongly-convex optimization}
\label{sec:sco}

Here, we consider data-dependent hypothesis sets based on stochastic
strongly-convex optimization algorithms. As shown by
\citet{Shalev-ShwartzShamirSrebroSridharan2010}, uniform convergence
bounds do not hold for the stochastic convex optimization problem in
general. 

Consider $K$ stochastic strongly-convex optimization algorithms $\cA_j$, each returning vector $\h w^S_j$, after receiving sample $S \in \sZ^m$, $j \in
[K]$. As shown by \citet{Shalev-ShwartzShamirSrebroSridharan2010}, such algorithms are $\beta = O(\frac{1}{m})$ sensitive in their output vector, i.e. 
for all $j \in [K]$, we have $\| \h w^S_j - \h w^{S'}_j \| \leq \beta$ if $S$ and $S'$ differ by one point. 

Assume that the loss $L(w, z)$ is $\mu$-Lipschitz with respect to its
first argument $w$. Let the data-dependent hypothesis set be defined
as follows: $\sH_S = \left\{\sum_{j = 1}^K \alpha_j \h w^S_j \colon \ \alpha \in \Delta_K \cap \sfB_1(\alpha_0, r)\right\}$, where $\alpha_0$ is in the simplex of distributions $\Delta_K$ and $\sfB_1(\alpha_0, r)$ is the $L_1$ ball of radius $r > 0$ around $\alpha_0$. We choose $r = \frac{1}{2\mu D \sqrt{m}}$. A natural choice for $\alpha_0$ would be the uniform mixture.

Since the loss function is $\mu$-Lipschitz, the family of hypotheses
$\sH_S$ is $\mu \beta$-stable. In this setting, bounding the Rademacher complexity is difficult, so we resort to the diameter based bound \eqref{eq:dp-hss} instead. Note that for any
$\alpha, \alpha' \in \Delta_K \cap \sfB_1(\alpha_0, r)$ and any
$z \in \sZ$, we have
\begin{align*}
L\bigg(\sum_{j = 1}^K \alpha_j \h w^S_j, z \bigg) - L \bigg ( \sum_{j = 1}^K \alpha'_j \h
w^S_j, z \bigg)
& \leq \mu \bigg\| \sum_{j = 1}^K (\alpha_i - \alpha'_j) \h w^S_j
  \bigg\|_2
\leq \mu \big\| [w^S_1 \, \cdots \, w^S_K] \big\|_{1, 2} \, \| \alpha - \alpha' \|_1 
\leq 2\mu r D,
\end{align*}
where 
$\big\| [w^S_1 \, \cdots \, w^S_K] \big\|_{1, 2} := \max_{x \neq 0} \frac{\| \sum_{j = 1}^k x_j w^S_j \|_2}{\| x \|_1} = \max_{i \in [K]} \|w^S_i\|_2 \leq D$.
Thus, the average diameter admits the following upper bound:
$\h \Delta \leq 2\mu r D = \frac{1}{\sqrt{m}}$. In view of that, by
Theorem~\ref{th:hss}, for any $\delta > 0$, with probability at
least $1 - \delta$, the following holds for all
$\alpha \in \Delta_K \cap \sfB_1(\alpha_0, r)$:
\begin{align*}
\E_{z \sim \sD} \bigg[ L \bigg( \sum_{j = 1}^K \alpha_j \h w^S_j, z \bigg) \bigg]
& \leq \frac{1}{m} \sum_{i = 1}^m L \bigg( \sum_{j = 1}^K \alpha_i \h
w^S_j, z^S_i \bigg) + \sqrt{\frac{e}{m}}
+ \sqrt{e} \mu \beta +
4 \sqrt{\left(\frac{1}{m} + 2 \mu \beta \right) \log \left(
    \frac{6}{\delta} \right)}.
\end{align*}
The second stage of an algorithm in this context consists of choosing
$\alpha$, potentially using a non-stable algorithm.
This application both illustrates the use of our learning
bounds using the diameter and its application even in
the absence of uniform convergence bounds.

As an aside, we note that the analysis of section~\ref{sec:bagging} can be carried over to this setting, by setting $\cA$ to be a stochastic strongly-convex optimization algorithm which outputs a weight vector $\hat{w}$. This yields generalization bounds for aggregating over a larger set of mixing weights, albeit with the restriction that each algorithm uses only a small part of $S$.

\subsection{$\supdiam$-sensitive feature mappings}
\label{sec:feature-mappings}

Consider the scenario where the training sample $S \in \sZ^m$ is used
to learn a non-linear feature mapping $\Phi_S\colon \sX \to \Rset^N$
that is $\supdiam$-sensitive for some $\supdiam =
O(\frac{1}{m})$. $\Phi_S$ may be the feature mapping corresponding
to some positive definite symmetric kernel or a mapping defined by
the top layer of an artificial neural network trained on $S$, with
a stability property.

To define the second state, let $\sL$ be a set of $\gamma$-Lipschitz functions $f\colon \Rset^N \to \Rset$. Then we define
$\sH_S = \set[\big]{x \mapsto f(\Phi_S(x))\colon f \in \sL}$.
Assume that the loss function $\ell$ is $\mu$-Lipschitz with respect
to its first argument. Then, for any hypothesis
$h\colon x \mapsto f(\Phi_S(x)) \in \sH_S$ and any sample $S'$
differing from $S$ by one element, the hypothesis
$h'\colon x \mapsto f(\Phi_{S'}(x)) \in \sH_{S'}$ admits losses
that are $\beta$-close to those of $h$, with
$\beta = \mu \gamma \supdiam$, since, for all
$(x, y) \in \sX \times \sY$, by the Cauchy-Schwarz inequality, the
following inequality holds:
\begin{align*}
\ell(f(\Phi_S(x)), y) - \ell(f(\Phi_{S'}(x)), y)
\leq \mu |f((\Phi_S(x)) - f(\Phi_{S'}(x))|
\leq \mu \gamma \| \Phi_S(x) - \Phi_{S'}(x) \|
\leq \mu \gamma \supdiam.
\end{align*}
Thus, the family of hypothesis set $\cH = (\sH_S)_{S \in \sZ^m}$ is
uniformly $\beta$-stable with
$\beta = \mu \gamma \supdiam = O(\frac{1}{m})$.  In view of that, by
Theorem~\ref{th:hss}, for any $\delta > 0$, with probability at
least $1 - \delta$ over the draw of a sample $S \sim \sD^m$, the
following inequality holds for any $h \in \sH_S$:
\begin{equation}
\label{eq:delta-sensitive-bound}
R(h) \leq \h R_S(h) + 
2 \R^\diamond_m(\cG) + (1 + 2  \mu \gamma \supdiam m) \sqrt{\tfrac{1}{2m}\log(\tfrac{1}{\delta})}.
\end{equation}
Notice that this bound applies even when the second stage of an
algorithm, which consists of selecting a hypothesis $h_S$ in $\sH_S$,
is not stable. A standard uniform stability guarantee cannot be used
in that case. The setting described here can be straightforwardly
extended to the case of other norms for the definition of sensitivity
and that of the norm used in the definition of $\sH_S$.

\subsection{Distillation}
\label{sec:distillation}

Here, we consider \emph{distillation algorithms} which, in the first
stage, train a very complex model on the labeled sample. Let
$f^*_S\colon \sX \to \Rset$ denote the resulting predictor for a
training sample $S$ of size $m$. We will assume that the training
algorithm is $\beta$-sensitive, that is
$ \| f^*_S - f^*_{S'} \| \leq \beta = O(\tfrac{1}{m})$ for $S$ and $S'$
differing by one point.

\begin{wrapfigure}{r}{0.4\textwidth}
\centering
\includegraphics[scale=.4]{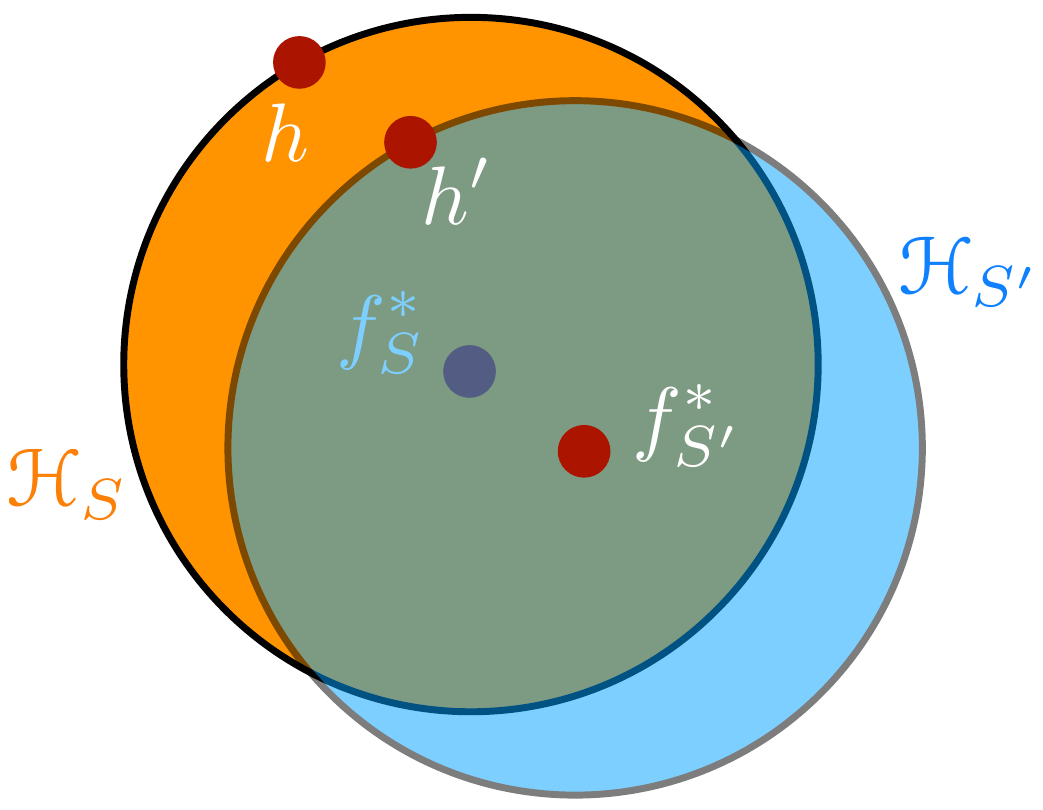}
\caption{Illustration of the distillation hypothesis sets. Notice that
  the diameter of a hypothesis set $\sH_S$ may be large here.}
\vskip -.15in
\label{fig:fig2}
\end{wrapfigure}

In the second stage, the algorithm selects a hypothesis
that is $\gamma$-close to $f^*_S$ from a less complex family of
predictors $\sH$. This defines the following sample-dependent
hypothesis set: $\sH_S = \set[\big]{h \in \sH \colon \| (h - f^*_S) \|_{\infty} \leq \gamma}$.

Assume that the loss $\ell$ is $\mu$-Lipschitz with respect to its
first argument and that $\sH$ is a subset of a vector space. Let $S$
and $S'$ be two samples differing by one point.  Note, $f^*_S$ may not
be in $\sH$, but we will assume that $f^*_{S'} - f^*_S$ is in $\sH$.
Let $h$ be in $\sH_S$, then the hypothesis $h' = h + f^*_{S'} - f^*_S$
is in $\sH_{S'}$ since
$\| h' - f^*_{S'} \|_\infty = \| h - f^*_S \|_\infty \leq \gamma$.
Figure~\ref{fig:fig2} illustrates the hypothesis sets.  By the
$\mu$-Lipschitzness of the loss, for any $z = (x, y) \in \sZ$,
$| \ell(h'(x), y) - \ell(h(x), y) | \leq \mu \| h'(x) - h(x) \|_\infty
= \mu \| f^*_{S'} - f^*_S \| \leq \mu \beta$. Thus, the family of
hypothesis sets $\sH_S$ is $\mu \beta$-stable.

In view of that, by Theorem~\ref{th:hss}, for any $\delta > 0$, with
probability at least $1 - \delta$ over the draw of a sample
$S \sim \sD^m$, the following inequality holds for any $h \in \sH_S$:
\[
  R(h) \leq \h R_S(h) + 2 \R^\diamond_m(\cG) + (1 + 2 \mu \beta m)
\sqrt{\tfrac{1}{2m}\log(\tfrac{1}{\delta})}.
\]
Notice that a standard uniform-stability argument would not
necessarily apply here since $\sH_S$ could be relatively
complex and the second stage not necessarily stable.

\ignore{
\subsection{Anti-distillation}

A similar setup is that of \emph{anti-distillation} where
the predictor $f^*_S$ in the first stage is chosen from 
a simpler family, say that of linear hypotheses, and 
where the sample-dependent hypothesis set $\sH_S$ is 
the subset of a very rich family $\sH$. 
$\sH_S$ is defined as the set of predictors that
are close to $f^*_S$:
\[
  \sH_S = \set[\big]{h \in \sH \colon (\| (h - f^*_S) \|_{\infty} \leq
    \gamma) \wedge (\| (h - f^*_S) 1_S \|_{\infty} \leq
    \supdiam_m)},
\]
with $\supdiam = O(1/\sqrt{m})$. Thus, the restrictions to $S$ of a
hypothesis $h \in \sH_S$ is close to $f^*_S$ in $\ell_\infty$-norm. As
shown in the previous section, the family of hypothesis sets $\sH_S$
is $\mu \beta_m$-stable.  However, here, the hypothesis sets $\sH_S$
could be very complex and the Rademacher complexity
$\R^\diamond_m(\cH)$ not very favorable. Nevertheless, by
Theorem~\ref{th:hss-diff}, for any $\delta > 0$, with probability at
least $1 - \delta$ over the draw of a sample $S \sim \sD^m$, the
following inequality holds for any $h \in \sH_S$:
\[
R(h) \leq \h R_S(h) + 
\sqrt{e} \mu (\supdiam_m + \beta_m) +
4 \sqrt{\left[\tfrac{1}{m} + 2 \mu \beta_m \right] \log \left[
    \tfrac{6}{\delta} \right]}.
\]  
Notice that a standard uniform-stability does not apply
here since the $(1/\sqrt{m})$-closeness of the hypotheses to $f^*_S$
on $S$ does not imply their global $(1/\sqrt{m})$-closeness.
}

\ignore{
\subsection{Principal Components Regression}

Principal Components Regression is a very commonly used technique in
data analysis. In this setting, $\sX \subseteq \Rset^d$ and
$\sY \subseteq \Rset$, with a loss function $\ell$ that is
$\mu$-Lipschitz in the prediction. Given a sample
$S = \set{(x_i, y_i) \in \sX \times \sY\colon i \in [m] }$, we learn a
linear regressor on the data projected on the principal
$k$-dimensional space of the data. Specifically, let
$\Pi_S \in \Rset^{d \times d}$ be the projection matrix giving the
projection of $\Rset^d$ onto the principal $k$-dimensional subspace
of the data, i.e.\ the subspace spanned by the top $k$ left singular
vectors of the design matrix $X_S = [x_1, x_2, \cdots, x_m]$. The
hypothesis space $\sH_S$ is then defined as
$\sH_S = \set{x \mapsto w^\top \Pi_Sx \colon w \in \Rset^k,\ \|w\| \leq
\gamma }$, where $\gamma$ is a predefined bound on the norm of the
weight vector for the linear regressor. Thus, this can be seen as an
instance of the setting in section~\ref{sec:feature-mappings}, where
the feature mapping $\Phi_S$ is defined as $\Phi_S(x) = \Pi_Sx$.

To prove generalization bounds for this setup, we need to show that
these feature mappings are stable. To do that, we make the following
assumptions:
\begin{enumerate}

\item For all $x \in \sX$, $\|x\| \leq r$ for some constant
  $r \geq 1$.

\item The data covariance matrix $\E_x[xx^\top]$ has a gap of
  $\lambda > 0$ between the $k$-th and $(k+1)$-th largest eigenvalues.

\end{enumerate}
The matrix concentration bound of \citet{RudelsonVershynin2007}
implies that with probability at least $1-\delta$ over the choice of
$S$, we have
$\| X_SX_S^\top - m\E_{x}[xx^\top] \| \leq cr^2 \sqrt{m \log(m)
  \log(\tfrac{2}{\delta})}$ for some constant $c > 0$. Suppose $m$ is
large enough so that
$cr^2 \sqrt{m \log(m) \log(\tfrac{2}{\delta})} \leq
\frac{\lambda}{2}m$. Then, the gap between the $k$-th
and $(k + 1)$-th largest eigenvalues of $X_SX_S^\top$ is at least
$\frac{\lambda}{2}m$. Now, consider changing one sample point
$(x, y) \in S$ to $(x, y')$ to produce the sample set $S'$. Then, we
have $X_{S'}X_{S'}^\top = X_SX_S^\top - xx^\top + x'x^{'\top}$. Since
$\| - xx^\top + x'x^{'\top} \| \leq 2r^2$, by standard matrix
perturbation theory bounds \citep{Stewart1998}, we have
$\|\Pi_S - \Pi_{S'}\| \leq O(\frac{r^2}{\lambda m})$. Thus,
$\|\Phi_S(x) - \Phi_{S'}(x)\| \leq \| \Pi_S - \Pi_{S'} \| \|x\| \leq
O(\frac{r^3}{\lambda m})$.

Now, to apply the bound of \eqref{eq:delta-sensitive-bound}, we need
to compute a suitable bound on $\R^\diamond_m(\cH)$. For this, we
apply Lemma~\ref{lemma:linear-rad}. For any $\|w\| \leq \gamma$, since
$\|\Pi_S\| = 1$, we have $\|\Pi_S w\| \leq \gamma$. So the hypothesis
set
$\sH'_S = \{x \mapsto w^\top \Pi_Sx\colon w \in \Rset^k,\ \|\Pi_Sw\|
\leq \gamma\}$ contains $\sH_S$. By Lemma~\ref{lemma:linear-rad}, we
have $\R^\diamond_m(\cH') \leq \frac{\gamma r}{\sqrt{m}}$. Thus, by
plugging the bounds obtained above in
\eqref{eq:delta-sensitive-bound}, we conclude that with probability at
least $1-2\delta$ over the choice of $S$, for any $h \in \sH_S$, we
have
\[
R(h) \leq \h R_S(h) + 
O\left(\mu \gamma \frac{r^3}{\lambda}\sqrt{\frac{\log
    \frac{1}{\delta}}{m}}\right).
\]
}

\ignore{
\subsection{Interpolation methods}

Consider the following instance of interpolation methods. Data consist
of pairs $z = (x, y) \in \sX \times [-1, 1]$, where $\sX$ is the input
domain. We want to learn predictors to minimize square loss, i.e. the
loss for predicting $\hat{y}$ for an example $(x, y)$ is
$\frac{1}{4}(\hat{y}_{\text{trunc}} - y)^2$, where
$\hat{y}_{\text{trunc}} = \max\{\min\{\hat{y}, 1\}, -1\}$. Now suppose
we have a family $\Phi$ of feature mappings
$\phi\colon \sX \to \Rset^d$. The dimension $d$ of the range of
different mappings may be different.

For any sample set
$S = \{(x_i, y_i) \in \sX \times [-1, 1]\colon i \in [m]\}$, we define
a hypothesis set $\sH_S$ based on all feature mappings $\phi \in \Phi$
which admit an \emph{interpolating linear regressor}, i.e. those
$\phi$ for which there is a linear regressor with weights $w$ such
that $w^\top \phi(x_i) = y_i$ for all $i \in [m]$ (naturally, such
regressors have 0 loss). Let $\Phi_S$ denote the set of all
$\phi \in \Phi$ which admit an interpolating linear regressor on
$S$. For each $\phi \in \Phi_S$, are interested in the linear
regressor with zero loss whose weights have minimum norm: these
weights are $w_{\phi, S} = X_\phi^\dagger Y$, where
$X_\phi = [\phi(x_1), \phi(x_2), \ldots, \phi(x_m)]$ and
$Y = [y_1, y_2, \ldots, y_m]^\top$, and $\dagger$ denotes the
Moore-Penrose pseudo-inverse. We now define the hypothesis set $\sH_S$
as follows:
$\sH_S = \{ x \mapsto w_{\phi, S}^\top \phi(x):\ \phi \in
\Phi_S\}$. We would like to provide generalization bounds for this
family of data-dependent hypothesis sets. For this, we need to make
some simplifying assumptions on the data: for all $\phi \in \Phi$,
\begin{enumerate}

\item there are constants $D, r > 0$ such that $\|\phi(x)\| \leq D$ for all $x \in \sX$ and $\|\E_x[\phi(x)]\| \geq r$, and

\item the smallest non-zero eigenvalue of the data covariance matrix
  $\E_{x}[\phi(x)\phi(x)^\top]$ is at least $\lambda$, for some
  constant $\lambda > 0$.
\end{enumerate}
The second assumption might seem a bit unnatural, but it is easy to
ensure that it holds if we have access to a large amount of unlabeled
data by adding an extra PCA step on top of the feature mappings where
all eigenvalues smaller than $\lambda$ are removed.

Under these assumptions, we can now bound the uniform stability as
follows. Instead of bounding the change-one uniform stability, we will
bound the leave-one-out uniform stability. So let $S'$ be a sample set
obtained from $S$ by removing sample point, say $(x, y)$. We will show
that with high probability,
$\|w_{\phi, S} - w_{\phi, S'}\| = O(\frac{1}{m})$.

To do this, we can rewrite the formula for $w_{\phi, S}$ as
$w_{\phi, S} = (X_\phi X_\phi^\top)^\dagger X_\phi Y$, and similarly,
$w_{\phi, S'} = (X'_\phi X_\phi^{'\top})^\dagger X'_\phi Y'$, where
$X'_{\phi}$ and $Y'$ are $X_\phi$ and $Y$ with $\phi(x)$ and $y$
removed, respectively. Thus,
$X'_\phi X_\phi^{'\top} = X_\phi X_\phi^\top - \phi(x)\phi(x)^\top$,
and so, by the Sherman-Morrison-Woodbury formula, we have
\[(X'_\phi X_\phi^{'\top})^\dagger =  (X_\phi X_\phi^\top)^\dagger + \frac{(X_\phi X_\phi^\top)^\dagger \phi(x)\phi(x)^\top (X_\phi X_\phi^\top)^\dagger}{1 - \phi(x)^\top (X_\phi X_\phi^\top)^\dagger\phi(x)}. \]
Furthermore, we have $X'_\phi Y' = X_\phi Y - \phi(x)y$. Thus,
\begin{equation}
\label{eq:w_phi_stability}
  w_{\phi, S} - w_{\phi, S'} = (X_\phi X_\phi^\top)^\dagger \phi(x)y -\frac{(X_\phi X_\phi^\top)^\dagger \phi(x)\phi(x)^\top (X_\phi X_\phi^\top)^\dagger X'_\phi Y'}{1 - \phi(x)^\top (X_\phi X_\phi^\top)^\dagger\phi(x)} 
\end{equation} 
By standard matrix concentration bounds
\citep{RudelsonVershynin2007}, with probability at least
$1-\delta$ over the choice of $S$, we have
$\|X_\phi X_\phi^\top - m\E_{x}[\phi(x)\phi(x)^\top]\| \leq
\frac{1}{c}D^2 \sqrt{m \log(m) \log(\tfrac{2}{\delta})}$. Suppose $m$
is large enough so that
$\frac{1}{c}D^2 \sqrt{m \log(m) \log(\tfrac{2}{\delta})} \leq
\frac{m}{2}\lambda$. Using this bound, it is easy to check that
$\|(X_\phi X_\phi^\top)^\dagger\phi(x)\| \leq \frac{2D}{\lambda m}$
and $\|X'_\phi Y'\| \leq mD$. Plugging in these bounds in
\eqref{eq:w_phi_stability}, we conclude that
$\|w_{\phi, S} - w_{\phi, S'}\| = O(\frac{D^3}{\lambda^2 m})$. Since
$\|\phi(x)\| \leq D$ and the loss function is $1$-Lipschitz, we
conclude that the uniform stability is bounded by
$O(\frac{D^4}{\lambda^2 m})$. {\bf SATYEN: there is a bug here. For
  hypothesis sets, leave-one-out stability does not bound change-one
  stability.}

Next, the diameter of $\cH$ is $0$ since the regressors interpolate
the training set. Furthermore, for all $h \in \sH_S$, $\h R_S(h) = 0$
for the same reason. Thus, by Theorem~\ref{th:hss-diff}, we get the
following generalization bound: with probability at least $1-2\delta$
over the choice of $S$, for all $h \in \sH_S$, we have
\[
R(h) = O\left(\frac{D^4}{\lambda^2 m} +
    \sqrt{\frac{D^4\log(\tfrac{1}{\delta})}{\lambda^2 m}}\right).
\]
{\bf SATYEN: should we omit the first term above? The bound is vacuous
  when the term is larger than 1 anyway.}
}

\section{Conclusion}

We presented a broad study of generalization with data-dependent
hypothesis sets, including general learning bounds using a notion of
transductive Rademacher complexity and, more importantly, learning
bounds for stable data-dependent hypothesis sets. We illustrated the
applications of these guarantees to the analysis of several
problems. Our framework is general and covers learning scenarios
commonly arising in applications for which standard generalization
bounds are not applicable. Our results can be further augmented and
refined to include model selection bounds and local Rademacher
complexity bounds for stable data-dependent hypothesis sets (to be
presented in a more extended version of this manuscript), and further
extensions described in Appendix~\ref{app:extensions}.  Our analysis
can also be extended to the non-i.i.d.\ setting and other learning
scenarios such as that of transduction.  Several by-products of our
analysis, including our proof techniques, new guarantees for
transductive learning, and our PAC-Bayesian bounds for randomized
algorithms, both in the sample-independent and sample-dependent cases,
can be of independent interest. While we highlighted several
applications of our learning bounds, a tighter analysis might be
needed to derive guarantees for a wider range of data-dependent
hypothesis classes or scenarios.

\ignore{
Improving the dependency on the
diameter, as well as coming up with more refined versions of the
notion of Rademacher complexity for data-dependent hypothesis sets
defined in this paper are a few avenues for further research.
}

\paragraph{Acknowledgements.}
HL is supported by NSF IIS-1755781. The work of SG and MM was partly
supported by NSF CCF-1535987, NSF IIS-1618662, and a Google Research
Award. KS would like to acknowledge NSF
CAREER Award 1750575 and Sloan Research Fellowship.

\bibliographystyle{plainnat}
\bibliography{hss}

\newpage
\appendix

\newpage
\section{Further background on stability}
\label{app:stability}

The study of stability dates back to early work on the analysis of
$k$-neareast neighbor and other local discrimination rules
\citep{Rogers1978,DevroyeWagner1979}.  Stability has been critically
used in the analysis of stochastic optimization
\citep{Shalev-ShwartzShamirSrebroSridharan2010} and online-to-batch
conversion \citep{CesaBianchiConconiGentile2001}. Stability bounds
have been generalized to the non-i.i.d.\ settings, including
stationary \citep{MohriRostamizadeh2010} and non-stationary
\citep{KuznetsovMohri2017} $\phi$-mixing and $\beta$-mixing processes.
They have also been used to derive learning bounds for transductive
inference \citep{CortesMohriPechyonyRastogi2008}.  Stability bounds
were further extended to cover \emph{almost stable} algorithms by
\cite{KutinNiyogi2002}. These authors also discussed a number of
alternative definitions of stability, see also
\citep{KearnsRon1997}. An alternative notion of stability was also
used by \cite{KaleKumarVassilvitskii2011} to analyze $k$-fold
cross-validation for a number of stable algorithms.

\newpage
\section{Properties of data-dependent Rademacher complexity}
\label{app:rademacher}

In this section, we highlight several key properties of our notion of
data-dependent Rademacher complexity.

\ignore{
\subsection{Contraction lemma}

\begin{lemma}
\label{lemma:contraction}
Let $\Phi_1, \ldots, \Phi_m$ be $\mu$-Lipschitz functions from $\Rset$
to $\Rset$ and $\sigma_1, \ldots, \sigma_m$ be Rademacher random
variables. Then, for any family $\cH = (\sH_S)_{S \in \sZ^m}$ of
data-dependent hypothesis sets of real-valued functions, the following
inequality holds:
\begin{equation*}
  \frac{1}{m} \E_\bsigma \bigg[ \sup_{h \in \sH^\bsigma_{S, T}}
  \sum_{i = 1}^m
    \sigma_i (\Phi_i \circ h)(z^T_i) \bigg]
    \leq \frac{\mu}{m} \E_\bsigma \bigg[
      \sup_{h \in \sH^\bsigma_{S, T}} \sum_{i = 1}^m\sigma_i h(z^T_i) \bigg]
      = \mu \, \h \R^\diamond_{S, T}(\sH) \,.
\end{equation*}
In particular, if $\Phi_i = \Phi$ for all $i \in [m]$, then the
following holds:
\begin{equation*}
  \h \R^\diamond_{S, T}(\Phi \circ \sH) \leq \mu \, \h \R^\diamond_{S,
    T}(\sH).
\end{equation*}
\end{lemma}
\begin{proof}
Fix samples $S = (z^S_1, \ldots, z^S_m)$ and $S = (z^T_1, \ldots,
z^T_m)$,  then, by definition,
\begin{align*}
& \frac{1}{m} \E_\bsigma \bigg[ \sup_{h \in \sH^\bsigma_{S, T}}
  \sum_{i = 1}^m
    \sigma_i (\Phi_i \circ h)(z^T_i) \bigg]\\
& = \frac{1}{m} \E_{\sigma_1, \ldots, \sigma_{m - 1}} \!\! \bigg[
  \E_{\sigma_m} \bigg[
\sup_{h \in \sH^\bsigma_{S, T}} u_{m - 1}(h)  + \sigma_m (\Phi_m \circ h)(z^T_m) \bigg]
\bigg],
\end{align*}
where $u_{m - 1}(h) = \sum_{i = 1}^{m - 1} \sigma_i (\Phi_i \circ
h)(z^T_i)$. Define $\bsigma^+$ and $\bsigma^-$ by
$\bsigma^+ = (\sigma_1, \ldots, \sigma_{m - 1}, +1)$ and
$\bsigma^- = (\sigma_1, \ldots, \sigma_{m - 1}, -1)$.  Then, by
definition of the supremum, for any $\e > 0$, there exist
$h_1 \in \sH^{\bsigma^+}_{S, T}$ and $h_2 \in \sH^{\bsigma^-}_{S, T}$
such that
\begin{align*}
& u_{m - 1}(h_1) + (\Phi_m \circ h_1)(z^T_m) 
\geq (1 - \e) \bigg[ \sup_{h \in \sH^{\bsigma^+}_{S, T}} u_{m - 1}(h)  +  
(\Phi_m \circ h)(z^T_m) \bigg]\\
\text{and} \quad
& u_{m - 1}(h_2) - (\Phi_m \circ h_2)(z^T_m)
\geq (1 - \e) \bigg[ \sup_{h \in \sH^{\bsigma^-}_{S, T}} u_{m - 1}(h) 
(\Phi_m \circ h)(z^T_m) \bigg].
\end{align*}
Thus, for any $\e > 0$, by definition of $\E_{\sigma_m}$,
\begin{align*}
& (1 - \e) \E_{\sigma_m} \bigg[
\sup_{h \in \sH^\bsigma_{S, T}} u_{m - 1}(h)  + \sigma_m (\Phi_m \circ h)(z^T_m) \bigg]\\
& = \frac{1 - \e}{2} \sup_{h \in \sH^{\bsigma^+}_{S, T}} u_{m - 1}(h) 
+ (\Phi_m \circ h)(z^T_m) + \frac{1 - \e}{2} \sup_{h \in \sH^{\bsigma^-}_{S, T}} u_{m -
  1}(h)  - (\Phi_m \circ h)(z^T_m) \\
& \leq \frac{1}{2} \bigg[ u_{m - 1}(h_1) + (\Phi_m \circ h_1)(z^T_m)
  \bigg] 
+ \frac{1}{2} \bigg[ u_{m - 1}(h_2) - (\Phi_m \circ h_2)(z^T_m) \bigg].
\end{align*}
Let $\sigma = \sgn( h_1(z^T_m) - h_2(z^T_m) )$.  Then, the previous inequality implies
\begin{align*}
& (1 - \e) \E_{\sigma_m} \bigg[
\sup_{h \in \sH^\bsigma_{S, T}} u_{m - 1}(h)  + \sigma_m (\Phi_m \circ h)(z^T_m) \bigg]\\
& \leq \frac{1}{2} \bigg[ u_{m - 1}(h_1) + u_{m - 1}(h_2) + \mu \sigma \Big[ h_1(z^T_m) -
h_2(z^T_m) \Big] \bigg] & (\text{Lipschitz property})\\
& = \frac{1}{2} \bigg[ u_{m - 1}(h_1) + \mu \sigma h_1(z^T_m) \bigg]
  + 
\frac{1}{2} \bigg[ u_{m - 1}(h_2) - \mu \sigma h_2(z^T_m) \bigg] & (\text{rearranging})\\
& \leq \frac{1}{2} \sup_{h \in \sH^{\bsigma^+}_{S, T}} \bigg[ u_{m - 1}(h) + \mu \sigma h(z^T_m) \bigg]
  + 
\frac{1}{2} \sup_{h \in \sH^{\bsigma^-}_{S, T}} \bigg[ u_{m - 1}(h) - \mu \sigma h(z^T_m) \bigg] & (\text{definition of $\sup$})\\
& = \E_{\sigma_m} \bigg[ \sup_{h \in \sH^{\bsigma}_{S, T}} u_{m - 1}(h) + \mu \sigma_m 
h(z^T_m) \bigg]. & (\text{definition of $\E_{\sigma_m}$})
\end{align*}
Since the inequality holds for all $\e > 0$, we have
\begin{equation*}
\E_{\sigma_m}\bigg[ \sup_{h \in \sH^{\bsigma}_{S, T}} u_{m - 1}(h)  + \sigma_m (\Phi_m \circ
h)(z^T_m) \bigg] \leq \E_{\sigma_m} \bigg[ \sup_{h \in \sH^{\bsigma}_{S, T}} u_{m - 1}(h)
+ \mu \sigma_m h(z^T_m) \bigg].
\end{equation*}
Proceeding in the same way for all other $\sigma_i$ ($i \neq m$) proves the
lemma.
\end{proof}
}

\subsection{Upper-bound on Rademacher complexity of data-dependent
  hypothesis sets}

\begin{lemma}
\label{lemma:rad}
For any sample $S = (x^S_1, \ldots, x^S_m) \in \Rset^N$, define the
hypothesis set $\sH_S$ as follows:
\[
\sH_S = \set[\bigg]{x \mapsto w^S \cdot x \colon \ w^S = \sum_{i = 1}^m
  \alpha_i x^S_i, \| \alpha \|_1 \leq \Lambda_1 },
\]
where $\Lambda_1 \geq 0$.
Define $r_T$ and $r_{S \cup T}$ as follows:
$r_T = \sqrt{\frac{\sum_{i = 1}^m \| x^T_i \|_2^2}{m}}$ and  $r_{S \cup T} =
\max_{x \in S \cup T} \| x \|_2$.
Then, the empirical Rademacher complexity of the family of
data-dependent hypothesis sets $\cH = (\sH_S)_{S \in \sX^m}$ can be
upper-bounded as follows:
\[
\h \R^\diamond_{S, T}(\cH) 
\leq r_T \, r_{S \cup T} \Lambda_1 \sqrt{\frac{2 \log (4m)}{m}} 
\leq r^2_{S \cup T} \Lambda_1 \sqrt{\frac{2 \log (4m)}{m}}.
\]
\end{lemma}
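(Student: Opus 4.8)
The plan is to unwind the definition of $\h\R^\diamond_{S,T}(\cH)$, use H\"older's inequality to reduce the supremum over $\sH^\bsigma_{S,T}$ to a maximum of finitely many inner products, decouple that maximum from the Rademacher vector $\bsigma$, and conclude with Massart's finite-class lemma.

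First I would fix a sign vector $\bsigma$. The modified sample $S_{T,\bsigma}$ has points $x^{S_{T,\bsigma}}_j \in \{x^S_j, x^T_j\}$ for each $j \in [m]$, so any $h \in \sH^\bsigma_{S,T} = \sH_{S_{T,\bsigma}}$ has the form $x \mapsto w \cdot x$ with $w = \sum_{j=1}^m \alpha_j x^{S_{T,\bsigma}}_j$ and $\|\alpha\|_1 \le \Lambda_1$; here $h(z^T_i) = w \cdot x^T_i$. Writing $v_\bsigma = \sum_{i=1}^m \sigma_i x^T_i$ and exchanging the order of summation gives $\sum_{i=1}^m \sigma_i h(z^T_i) = \sum_{j=1}^m \alpha_j \, \big(x^{S_{T,\bsigma}}_j \cdot v_\bsigma\big)$, so by H\"older's inequality $\sup_{h \in \sH^\bsigma_{S,T}} \sum_{i=1}^m \sigma_i h(z^T_i) \le \Lambda_1 \max_{j \in [m]} \big| x^{S_{T,\bsigma}}_j \cdot v_\bsigma \big|$.

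The key step is to strip the $\bsigma$-dependence from the vectors inside the maximum: since each $x^{S_{T,\bsigma}}_j$ is one of the (at most) $2m$ points of $S \cup T$, we have $\max_{j\in[m]} | x^{S_{T,\bsigma}}_j \cdot v_\bsigma | \le \max_{u \in S \cup T} | u \cdot v_\bsigma |$, where the set $S \cup T$ is now independent of $\bsigma$. Taking expectations over $\bsigma$ yields $m\,\h\R^\diamond_{S,T}(\cH) \le \Lambda_1 \, \E_\bsigma\big[ \max_{u \in S \cup T} | u \cdot v_\bsigma | \big]$. Introducing an extra sign to remove the absolute value, the right-hand expectation becomes that of a maximum over at most $4m$ linear forms $\sum_{i=1}^m \sigma_i (\pm\, u \cdot x^T_i)$ in the Rademacher variables, so Massart's lemma bounds it by $\sqrt{2 \log(4m)}\,\max_{u \in S \cup T} \sqrt{\sum_{i=1}^m (u \cdot x^T_i)^2}$. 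By Cauchy--Schwarz, $\sum_{i=1}^m (u \cdot x^T_i)^2 \le \|u\|_2^2 \sum_{i=1}^m \|x^T_i\|_2^2 \le r_{S\cup T}^2 \, m \, r_T^2$, using $\sum_{i=1}^m \|x^T_i\|_2^2 = m\,r_T^2$. Dividing through by $m$ gives the first claimed bound, and the second follows from $r_T \le r_{S \cup T}$, since $r_T^2$ is an average of the quantities $\|x^T_i\|_2^2 \le r_{S \cup T}^2$.

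The main obstacle --- indeed essentially the only nonroutine point --- is that the hypothesis set $\sH^\bsigma_{S,T}$, and hence the vectors weighting the Rademacher variables, themselves depend on $\bsigma$, so a finite-class Rademacher bound is not directly applicable. The decoupling step above resolves this: bounding the $\bsigma$-dependent maximum over $\{x^S_j, x^T_j\}$ by the fixed maximum over all of $S \cup T$ costs only a constant factor inside the logarithm and restores a genuine finite-class structure to which Massart's lemma applies.
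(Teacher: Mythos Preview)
Your proof is correct and follows essentially the same route as the paper's: reduce the sup over the $\ell_1$-ball to $\Lambda_1$ times a coordinate maximum via duality/H\"older, decouple from $\bsigma$ by enlarging the index set to all of $S\cup T$ (with an extra sign to absorb the absolute value, yielding $4m$ vectors), apply Massart's lemma, and finish with Cauchy--Schwarz to bound the relevant $\ell_2$-norm by $r_{S\cup T}\sqrt{m}\,r_T$.
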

\begin{proof}
The following inequalities hold:
\begin{align*}
\h \R^\diamond_{S, T}(\cH) 
= \frac{1}{m} \E_{\bsigma} \bigg[\sup_{h \in \sH_{S, T}^\bsigma} \sum_{i =
  1}^m \sigma_i h(x^T_i) \bigg]
& = \frac{1}{m} \E_{\bsigma} \bigg[\sup_{\| \alpha \|_1
  \leq \Lambda_1} \sum_{i = 1}^m \sigma_i \sum_{j = 1}^m \alpha_j
  x^{S_{T,\bsigma}}_j \cdot x^T_i\bigg]\\
& = \frac{1}{m} \E_{\bsigma} \bigg[\sup_{\| \alpha \|_1
  \leq \Lambda_1} \sum_{j = 1}^m \alpha_j \left(x^{S_{T,\bsigma}}_j \sum_{i = 1}^m \sigma_i 
   \cdot x^T_i \right)\bigg]\\
& = \frac{\Lambda_1}{m} \E_{\bsigma} \bigg[\max_{j \in [m]} \left| x^{S_{T,\bsigma}}_j
  \cdot \sum_{i = 1}^m \sigma_i x^T_i \right| \bigg]\\
& \leq \frac{\Lambda_1}{m} \E_{\bsigma} \bigg[\max_{\substack{x' \in S \cup T \\ \sigma' \in \{-1,+1\}}} 
  \sum_{i = 1}^m \sigma_i (\sigma'x' \cdot x^T_i) \bigg].
\end{align*}
The norm of the vector $z' \in \Rset^m$ with coordinates
$(\sigma'x' \cdot x^T_i)$ can be bounded as follows:
\[
  \sqrt{\sum_{i = 1}^m (\sigma'x' \cdot x^T_i)^2} \leq \| x' \| \sqrt{\sum_{i
      = 1}^m \| x^T_i \|^2} \leq r_{S \cup T} \, \sqrt{m} \, r_T.
\]
Thus, by Massart's lemma, since $|S \cup T| \leq 2m$, the following
inequality holds:
\begin{align*}
\h \R^\diamond_{S, T}(\cH) 
\leq r_T \, r_{S \cup T} \Lambda_1 \sqrt{\frac{2 \log (4m)}{m}} 
\leq r^2_{S \cup T} \Lambda_1 \sqrt{\frac{2 \log (4m)}{m}},
\end{align*}
which completes the proof.
\end{proof}

Notice that the bound on the Rademacher complexity in Lemma~\ref{lemma:rad}is non-trivial since it depends on the samples $S$ and $T$, while a standard Rademacher complexity for non-data-dependent hypothesis set containing $\sH_S$ would require taking a maximum over all samples $S$ of size $m$.\\

\begin{lemma}
\label{lemma:linear-rad}
Suppose $\sX = \Rset^N$, and for every sample $S \in \sZ^m$ we associate a matrix $A_S \in \Rset^{d \times N}$ for some $d > 0$, and let $\cW_{S, \Lambda} = \{w \in \Rset^d:\ \|A_S^\top w\|_2 \leq \Lambda\}$ for some $\Lambda > 0$. Consider the hypothesis set $\sH_S := \set[\bigg]{x \mapsto w^\top A_S x \colon \ w \in \cW_{S, \Lambda}}$. Then, the empirical Rademacher complexity of the family of data-dependent hypothesis sets $\cH = (\sH_S)_{S \in \sZ^m}$ 
can be upper-bounded as follows:
\[
\h \R^\diamond_{S, T}(\cH) \leq \frac{\Lambda \sqrt{\sum_{i = 1}^m \|x^T_i \|_2^2}}{m} \leq \frac{\Lambda r}{\sqrt{m}},
\]
where $r = \sup_{i \in [m]} \| x^T_i \|_2$.
\end{lemma}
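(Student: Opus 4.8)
The plan is to unwind the definition of $\h\R^\diamond_{S,T}(\cH)$ and reduce the inner supremum to a linear maximization over a Euclidean ball. Fix a sign vector $\bsigma$ and abbreviate $A := A_{S_{T,\bsigma}}$, the matrix attached to the swapped sample. Every $h \in \sH_{S,T}^\bsigma = \sH_{S_{T,\bsigma}}$ has the form $h(x) = w^\top A x$ with $\|A^\top w\|_2 \leq \Lambda$, so, writing $v := \sum_{i=1}^m \sigma_i x^T_i \in \Rset^N$,
\[
\sum_{i=1}^m \sigma_i h(x^T_i) = w^\top A \Bigl( \sum_{i=1}^m \sigma_i x^T_i \Bigr) = (A^\top w)^\top v \leq \|A^\top w\|_2 \, \|v\|_2 \leq \Lambda \|v\|_2
\]
by Cauchy--Schwarz. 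Equivalently, one can substitute $u = A^\top w$, note that the constraint becomes $\|u\|_2 \leq \Lambda$ (with $u$ additionally in the column space of $A^\top$, a restriction we simply drop), and bound $\sup_{\|u\|_2 \leq \Lambda} u^\top v = \Lambda \|v\|_2$. The key point is that this upper bound no longer depends on $A$, hence not on how $A_S$ varies with the sample — only on $v$.

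Taking the expectation over $\bsigma$ and applying Jensen's inequality for the concave square root gives
\[
\h\R^\diamond_{S,T}(\cH) \leq \frac{\Lambda}{m} \, \E_\bsigma \Bigl\| \sum_{i=1}^m \sigma_i x^T_i \Bigr\|_2 \leq \frac{\Lambda}{m} \sqrt{ \, \E_\bsigma \Bigl\| \sum_{i=1}^m \sigma_i x^T_i \Bigr\|_2^2 \, }.
\]
Expanding the squared norm and using that the $\sigma_i$ are independent, mean-zero, and take values in $\{-1,+1\}$ kills all cross terms, leaving $\E_\bsigma \bigl\| \sum_i \sigma_i x^T_i \bigr\|_2^2 = \sum_{i=1}^m \|x^T_i\|_2^2$. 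This yields the first claimed inequality; bounding each $\|x^T_i\|_2$ by $r = \sup_{i \in [m]}\|x^T_i\|_2$ gives $\sqrt{\sum_i \|x^T_i\|_2^2} \leq \sqrt{m}\,r$ and hence the second.

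There is essentially no serious obstacle here; the only point requiring care is that the matrix $A_{S_{T,\bsigma}}$ inside the supremum depends on $\bsigma$, so the reduction to $\Lambda\|v\|_2$ must be carried out pointwise in $\bsigma$ before taking the expectation — which is legitimate precisely because the resulting bound is $A$-free. This is the analogue of the $\ell_1$--Massart argument used for the previous lemma, but in the $\ell_2$ setting it collapses to Cauchy--Schwarz plus the standard second-moment computation for Rademacher sums.
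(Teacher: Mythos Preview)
Your proof is correct and follows essentially the same route as the paper's: Cauchy--Schwarz to bound the inner supremum by $\Lambda\|X_T\bsigma\|_2$, then Jensen's inequality and the standard second-moment computation for Rademacher sums. If anything, you are more careful than the paper in explicitly noting that the matrix $A_{S_{T,\bsigma}}$ depends on $\bsigma$ and that this is harmless because the Cauchy--Schwarz bound is $A$-free; the paper's proof writes $A_S$ throughout, which is a minor notational slip that your version avoids.
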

\begin{proof}
Let $X_T = [x^T_1 \, \cdots \, x^T_m]$. The following inequalities hold:
\begin{align*}
\h \R^\diamond_{S, T}(\cH) 
= \frac{1}{m} \E_{\bsigma} \bigg[\sup_{h \in \sH_{S, T}^\bsigma} \sum_{i =
  1}^m \sigma_i h(x^T_i) \bigg]
& = \frac{1}{m} \E_{\bsigma} \bigg[\sup_{w:\ \| A_S^\top w \|_2
  \leq \Lambda} w^\top A_S X_T \bsigma \bigg]\\
& \leq \frac{\Lambda}{m} \E_{\bsigma} \big[\| X_T \bsigma \|_2 \big] &
                                                                     (\text{Cauchy-Schwarz})\\
& \leq \frac{\Lambda}{m} \sqrt{\E_{\bsigma} \big[\| X_T \bsigma \|^2_2
  \big]} & (\text{Jensen's ineq.})\\
& \leq \frac{\Lambda}{m} \sqrt{\E_{\bsigma} \bigg[\sum_{i, j = 1}^m
  \sigma_i \sigma_j (x^T_i \cdot x^T_j)
  \bigg]} \\
& = \frac{\Lambda \sqrt{\sum_{i = 1}^m \| x^T_i \|_2^2}}{m},
\end{align*}
which completes the proof.
\end{proof}

\subsection{Concentration}

\begin{lemma}
\label{lemma:concentration}
Let $\cH$ a family of $\beta$-stable data-dependent hypothesis sets.
Then, for any $\delta > 0$, with probability at least $1 - \delta$ (over the draw of two samples $S$ and $T$ with size $m$), the following inequality holds:
\[
\Big| \h \R^\diamond_{S, T}(\cG) - \R^\diamond_m(\cG) \Big| \leq
\sqrt{\frac{[(m \beta + 1)^2 + m^2 \beta^2] \log \frac{2}{\delta}}{2m}}.
\]
\end{lemma}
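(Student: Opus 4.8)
The plan is to prove concentration of $\h\R^\diamond_{S,T}(\cG)$ around its mean via McDiarmid's (bounded differences) inequality, so the core task is to bound the sensitivity of the map $(S,T)\mapsto \h\R^\diamond_{S,T}(\cG)$ when a single coordinate of $S$ or of $T$ is changed. Write $\Phi(S,T)=\h\R^\diamond_{S,T}(\cG)=\frac1m\E_\bsigma\big[\sup_{g\in\sG_{S,T}^\bsigma}\sum_{i=1}^m\sigma_i g(z^T_i)\big]$, where $g$ ranges over loss functions $z\mapsto L(h,z)$ for $h\in\sH_{S_{T,\bsigma}}$. Since $\cH$ is $\beta$-stable, so is $\cG$ in the sense that for $S_{T,\bsigma}$ and any neighbour differing by one point, every $g\in\sG_{S_{T,\bsigma}}$ has a counterpart $g'$ in the neighbouring loss class with $\sup_z|g(z)-g'(z)|\le\beta$.

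First I would handle a change in a coordinate of $S$, say replacing $z^S_j$ by $\tilde z^S_j$ to form $\tilde S$. For each fixed $\bsigma$, note that $S_{T,\bsigma}$ and $\tilde S_{T,\bsigma}$ either coincide (if $\sigma_j=-1$, since then the $j$th coordinate is taken from $T$ regardless of $S$) or differ in exactly one point (if $\sigma_j=+1$). In the latter case, $\beta$-stability of $\cG$ gives that the two suprema $\sup_{g\in\sG_{S,T}^\bsigma}\sum_i\sigma_i g(z^T_i)$ and $\sup_{g\in\sG_{\tilde S,T}^\bsigma}\sum_i\sigma_i g(z^T_i)$ differ by at most $m\beta$ (each term $\sigma_i g(z^T_i)$ shifts by at most $\beta$, summed over $m$ coordinates). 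Taking the expectation over $\bsigma$ and dividing by $m$, $|\Phi(S,T)-\Phi(\tilde S,T)|\le\beta$. So the sensitivity in each $S$-coordinate is at most $\beta$.

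Next I would handle a change in a coordinate of $T$, say replacing $z^T_j$ by $\tilde z^T_j$ to form $\tilde T$; this is the more delicate case because $z^T_j$ appears both inside the hypothesis set (via $S_{T,\bsigma}$, when $\sigma_j=-1$) and as the evaluation point $g(z^T_j)$. Fix $\bsigma$. If $\sigma_j=-1$: changing $z^T_j$ changes one point of the subsample $S_{T,\bsigma}$, so by $\beta$-stability the supremum shifts by at most $m\beta$ from the hypothesis-set change; additionally the evaluation point $z^T_j$ in the term $\sigma_j g(z^T_j)$ changes, and since $g$ takes values in $[0,1]$ this term changes by at most $1$; total at most $m\beta+1$. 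If $\sigma_j=+1$: the subsample $S_{T,\bsigma}$ is unaffected, and only the single evaluation term $\sigma_j g(z^T_j)$ changes by at most $1$; total at most $1$. Either way the supremum shifts by at most $m\beta+1$, so after $\frac1m\E_\bsigma[\cdot]$ we get $|\Phi(S,T)-\Phi(S,\tilde T)|\le\beta+\tfrac1m\le\tfrac{m\beta+1}{m}$; more simply, the bounded-differences constant for a $T$-coordinate is $c_T:=\frac{m\beta+1}{m}$, while for an $S$-coordinate it is $c_S:=\beta$.

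Finally I would assemble McDiarmid over the $2m$ independent coordinates of $(S,T)\sim\sD^m\times\sD^m$: the sum of squared differences is $m\,c_S^2+m\,c_T^2=m\beta^2+m\big(\tfrac{m\beta+1}{m}\big)^2=\tfrac{m^2\beta^2+(m\beta+1)^2}{m}$, which yields $\P[|\Phi-\E\Phi|\ge t]\le 2\exp\!\big(-2t^2 m/(m^2\beta^2+(m\beta+1)^2)\big)$; setting the right side to $\delta$ and solving for $t$ gives exactly the claimed bound $\sqrt{\big[(m\beta+1)^2+m^2\beta^2\big]\log\tfrac{2}{\delta}\,/\,(2m)}$. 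The main obstacle is the bookkeeping in the $T$-coordinate case: one must correctly track that when $\sigma_j=-1$ the point $z^T_j$ enters in two roles, yet the combined shift is still only $m\beta+1$ rather than $2m\beta$ or worse, because the hypothesis-set perturbation contributes $m\beta$ and the single evaluation term contributes only $1$; handling the $\sup$ of a difference (bounding $|\sup_A f-\sup_B g|$ by pairing each element of $A$ with its $\beta$-close counterpart in $B$ and vice versa) is the one place care is needed.
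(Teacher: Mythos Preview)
Your proposal is correct and follows essentially the same approach as the paper's proof: bound the sensitivity of $(S,T)\mapsto\h\R^\diamond_{S,T}(\cG)$ in each coordinate (getting $\beta$ for $S$-coordinates and $\beta+\tfrac{1}{m}$ for $T$-coordinates via the $\beta$-stability of the hypothesis set plus the $1$-boundedness of the loss), then apply McDiarmid's inequality over the $2m$ independent coordinates. Your case analysis on the sign of $\sigma_j$ is in fact more explicit than the paper's writeup, but the underlying argument is the same.
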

\begin{proof}
Let $T'$ be a sample differing from $T$ only by point.
Fix $\eta > 0$. For any $\bsigma$, by definition of the supremum,
there exists $h' \in \sH_{S, T'}^\bsigma$ such that:
\begin{align*}
\sum_{i = 1}^m \sigma_i L(h', z^T_i) 
\geq \sup_{h \in \sH_{S, T'}^\bsigma} \sum_{i = 1}^m \sigma_i L(h, z^{T'}_i) - \eta.
\end{align*}
By the $\beta$-stability of $\cH$, there exists $h \in \sH_{S,
  T}^\bsigma$ such that for any $z \in \sZ$,
$|L(h', z) - L(h, z)| \leq \beta$. Thus, we have
\begin{align*}
\sup_{h \in \sH_{S, T'}^\bsigma} \sum_{i = 1}^m \sigma_i L(h, z^{T'}_i) 
\leq \sum_{i = 1}^m \sigma_i L(h', z^{T'}_i) + \eta
\leq \sum_{i = 1}^m [\sigma_i (L(h, z^{T'}_i) + \beta)] + \eta.
\end{align*}
Since the inequality holds for all $\eta > 0$, we have
\[
\frac{1}{m} \sup_{h \in \sH_{S, T'}^\bsigma} \sum_{i = 1}^m \sigma_i L(h, z^{T'}_i)
\leq \frac{1}{m}\sum_{i = 1}^m \sigma_i (L(h, z^{T'}_i) + \beta)
\leq \frac{1}{m}\sup_{h \in \sH_{S, T}^\bsigma} \sum_{i = 1}^m \sigma_i L(h,
z^T_i) + \beta + \frac{1}{m}.
\]
Thus, replacing $T$ by $T'$ affects $\h \R^\diamond_{S, T}(\cG)$
by at most $\beta+\frac{1}{m}$. By the same argument, changing sample
$S$ by one point modifies $\h \R^\diamond_{S, T}(\cG)$ at
most by $\beta$. Thus, by McDiarmid's inequality,
for any $\delta > 0$, with probability at least $1 - \delta$,
the following inequality holds:
\[
\Big| \h \R^\diamond_{S, T}(\cG) - \R^\diamond_{m}(\cG) \Big| \leq
\sqrt{\frac{[(m \beta + 1)^2 + m^2 \beta^2] \log \frac{2}{\delta}}{2m}}.
\]
This completes the proof.
\end{proof}

\newpage
\section{Proof of Lemma~\ref{lem:cvstab-diam-bound}}
\label{app:cvstab-diam-bound}

\begin{proof}
  Let $S \in \sZ^m$, $z \in S$, and $z' \in \sZ$. For any
  $h \in \sH_S$ and $h' \in \sH_{S^{z \leftrightarrow z'}}$, by the
  $\beta$-uniform stability of $\cH$, there exists $h'' \in \sH_S$
  such that $L(h', z) - L(h'', z) \leq \beta$. Thus,
\[
L(h', z) - L(h, z) 
= L(h', z) - L(h'', z) + L(h'', z) - L(h, z) 
\leq \beta + \sup_{h'' \!, \, h \in \sH_S} L(h'', z) - L(h, z).
\] 
This implies the inequality
\[
\sup_{h \in \sH_S, h' \in \sH_{S^{z \leftrightarrow z'}}} L(h', z) -
L(h, z) \leq \beta + \sup_{h'' \!, \, h \in \sH_S} L(h'', z) - L(h, z),
\]
and the lemma follows.
\end{proof}

\newpage
\section{Proof of Theorem~\ref{th:hss2}}
\label{app:hss2}

In this section, we present the proof of Theorem~\ref{th:hss2}.

\begin{proof}
  We will use the following symmetrization result, which holds for any
  $\e > 0$ with $n \e^2 \geq 2$ for data-dependent hypothesis sets
  (Lemma~\ref{lemma:symmetrization} below):
\begin{equation} \label{eq:symmetrization}
\P_{S \sim \sD^m} \bigg[ \sup_{h \in \sH_S} R(h) - \h R_S(h) > \e \bigg] 
\leq 2 \P_{\substack{S \sim \sD^m \\ T \sim \sD^n \mspace{6mu}}}
\bigg[ \sup_{h \in \sH_S} \h R_T(h) - \h R_S(h) > \frac{\e}{2}  \bigg] .
\end{equation}
Thus, we will seek to bound the right-hand side as follows, where we
write $(S, T) \sim U$ to indicate that the sample $S$ of size $m$ is 
drawn uniformly without replacement from $U$ and that $T$ is 
the remaining part of $U$, that is $(S, T) = U$:
\begin{align*}
& \P_{\substack{S \sim \sD^m \\ T \sim \sD^n \mspace{6mu}}} \bigg[
  \sup_{h \in \sH_S} \h R_T(h) - \h R_S(h) > \frac{\e}{2}  \bigg] \\
& = \E_{U \sim \sD^{m + n}} \Bigg[ \P_{\substack{(S, T) \sim U\\|S| = m, |T| =
  n}} \bigg[ \sup_{h \in \sH_S} \h R_T(h) - \h R_S(h) > \frac{\e}{2}
  \bigg] \ \bigg\mid \ U \Bigg]\\
& \leq \E_{U \sim \sD^{m + n}} \Bigg[ \P_{\substack{(S, T) \sim U\\|S| = m, |T| =
  n}} \bigg[ \sup_{h \in \ov \sH_{U, m}} \h R_T(h) - \h R_S(h) > \frac{\e}{2}
  \bigg] \ \bigg\mid \ U \Bigg].
\end{align*}
To upper bound the probability inside the expectation, we use an
extension of McDiarmid's inequality to sampling without replacement
\citep{CortesMohriPechyonyRastogi2008}, which applies to symmetric
functions. We can apply that extension to
$\Phi(S) = \sup_{h \in \ov \sH_{U, m}} \h R_T(h) - \h R_S(h)$, for a
fixed $U$, since $\Phi(S)$ is a symmetric function of the sample
points $z_1, \ldots, z_m$ in $S$.  Changing one point in $S$ affects
$\Phi(S)$ at most by $\frac{1}{m} + \frac{1}{m} = \frac{m + u}{mu}$,
thus, by the extension of McDiarmid's inequality to sampling without
replacement, for a fixed $U \in \sZ^{m + n}$, the following inequality
holds:
\begin{equation} \label{eq:trans}
\P_{\substack{(S, T) \sim U\\|S| = m, |T| =
  n}} \bigg[ \sup_{h \in \ov \sH_{U, m}} \h R_T(h) - \h R_S(h) > \frac{\e}{2}
  \bigg]
\leq \exp\bigg[ - \frac{2}{\eta} \frac{mn}{m + n} \bigg(\frac{\e}{2} -
  \E[\Phi(S)] \bigg)^2 \bigg],  
\end{equation}
where
$\eta = \frac{m + n}{m + n - \frac{1}{2}} \frac{1}{1 - \frac{1}{2
    \max\set{m, n}}} \leq 3$. Plugging in the bound on $\E[\Phi(S)]$ 
of Lemma~\ref{lemma:trans} below, and setting 
\[\e = \max_{U \in \sZ^{m + n}} 2\h \R^\diamond_{U, m}(\cG) + 3\sqrt{\left(\tfrac{1}{m} + \tfrac{1}{n}\right) \log(\tfrac{2}{\delta})} + 2\sqrt{\left(\tfrac{1}{m} + \tfrac{1}{n}\right)^3 mn},\] 
which satisfies $n\e^2 \geq 2$, it is easy to check that the RHS in \eqref{eq:trans} becomes smaller than $\frac{\delta}{2}$. This in turn implies, via \eqref{eq:symmetrization}, that the probability that $\sup_{h \in \sH_S} R(h) - \h R_S(h) > \e$ is at most $\delta$, completing the proof.
\end{proof}


The following lemma shows that the standard symmetrization lemma holds
for data-dependent hypothesis sets. This observation was already made
by \cite{Gat2001} (see also Lemma~2 in
\citep{CannonEttingerHushScovel2002}) for the symmetrization lemma of
\cite{Vapnik1998}[p. 139], used by the author in the case $n =
m$. However, that symmetrization lemma of \cite{Vapnik1998} holds only
for random variables taking values in $\set{0, 1}$ and its proof is
not complete since the hypergeometric inequality is not proven.

\begin{lemma}
\label{lemma:symmetrization}
Let $n \geq 1$ and fix $\e > 0$ such that $n \e^2 \geq 2$. Then, 
the following inequality holds:
\begin{equation*}
\P_{S \sim \sD^m} \bigg[ \sup_{h \in \sH_S} R(h) -
  \h R_S(h) > \e \bigg]
\leq 2 \P_{\substack{S \sim \sD^m \\ T \sim \sD^n \mspace{6mu}}}\bigg[\sup_{h \in \sH_S}  \h R_T(h_S) - \h R_S(h_S) > \frac{\e}{2} \bigg].
\end{equation*}
\end{lemma}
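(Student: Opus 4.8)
The plan is to run the classical two-sample (ghost-sample) symmetrization argument, and to observe that the data-dependence of $\sH_S$ causes no difficulty: the hypothesis set depends on $S$ alone, while the ghost sample $T\sim\sD^n$ is drawn independently, so once $S$ is fixed both $\sH_S$ and any $S$-measurable selection from it are fixed, and the concentration step treats them as non-data-dependent objects.

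First I would fix the bad event $A=\big\{\sup_{h\in\sH_S}R(h)-\h R_S(h)>\e\big\}$. On $A$, by definition of the supremum there exists $h_S\in\sH_S$, a function of $S$ only, with $R(h_S)-\h R_S(h_S)>\e$; a standard measurable-selection argument lets us take $S\mapsto h_S$ measurable (this, together with the measurability of the suprema involved, is the only genuinely delicate point). Now condition on $S$ and draw $T\sim\sD^n$ independently. On $A$ we have $\h R_S(h_S)<R(h_S)-\e$, hence $\h R_T(h_S)-\h R_S(h_S)>\h R_T(h_S)-R(h_S)+\e$, and this quantity exceeds $\e/2$ whenever $R(h_S)-\h R_T(h_S)<\e/2$.

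Next, since $h_S$ is fixed given $S$ and $\h R_T(h_S)$ is an average of $n$ i.i.d.\ $[0,1]$-valued random variables with mean $R(h_S)$ and variance at most $1/4$, Chebyshev's inequality gives $\P_T\big[R(h_S)-\h R_T(h_S)\geq \e/2\mid S\big]\leq \tfrac{1/4}{n(\e/2)^2}=\tfrac{1}{n\e^2}\leq\tfrac12$, using the hypothesis $n\e^2\geq 2$. Therefore, on $A$, $\P_T\big[\sup_{h\in\sH_S}\h R_T(h)-\h R_S(h)>\e/2\mid S\big]\geq \P_T\big[\h R_T(h_S)-\h R_S(h_S)>\e/2\mid S\big]\geq \tfrac12$. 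Finally I would integrate over $S$: $\P_{S,T}\big[\sup_{h\in\sH_S}\h R_T(h)-\h R_S(h)>\e/2\big]\geq \E_S\big[\mathbbm{1}_A\,\P_T(\cdot\mid S)\big]\geq \tfrac12\,\P_S[A]$, which rearranges to the claimed inequality. The main obstacle is purely the measurable choice of $h_S$ and the attendant measurability of the suprema; everything else is the textbook computation, the key conceptual observation being that $T$ never enters the definition of $\sH_S$.
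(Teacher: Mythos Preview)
Your proposal is correct and follows essentially the same route as the paper: pick a near-maximizer $h_S\in\sH_S$ depending only on $S$, use Chebyshev with the variance bound $\Var[L(h_S,z)]\le 1/4$ on the ghost sample $T$ to get probability at least $1/2$ that $R(h_S)-\h R_T(h_S)<\e/2$, and then integrate over $S$. The only cosmetic difference is that the paper selects an $\eta$-approximate maximizer of $R(h)-\h R_S(h)$ for every $S$ and sends $\eta\to 0$ via right-continuity of the CDF, whereas you select $h_S$ directly on the event $A$; both variants face (and flag) the same measurable-selection caveat.
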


\begin{proof}
  The proof is standard. Below, we are giving a concise version mainly
  for the purpose of verifying that the data-dependency of the
  hypothesis set does not affect its correctness.

Fix $\eta > 0$. By definition of the supremum, there exists
$h_S \in \sH_S$ such that
\[
\sup_{h \in \sH_S} R(h) - \h R_S(h) - \eta \leq R(h_S) - \h R_S(h_S).
\]
Since $\h R_T(h_S) - \h R_S(h_S)  = \h R_T(h_S) - R(h_S) + R(h_S) - \h
R_S(h_S)$, we can write
\begin{align*}
1_{\h R_T(h_S) - \h R_S(h_S) > \frac{\e}{2}}
& \geq 1_{\h R_T(h_S) - R(h_S) > -\frac{\e}{2}} 1_{R(h_S) - \h R_S(h_S) > \e} 
= 1_{R(h_S) - \h R_T(h_S) < \frac{\e}{2}} 1_{R(h_S) - \h R_S(h_S) > \e}.
\end{align*}
Thus, for any $S \in \sZ^m$, taking the expectation of both sides with
respect to $T$ yields
\begin{align*}
\P_{T \sim \sD^n}\bigg[ \h R_T(h_S) - \h R_S(h_S) > \frac{\e}{2} \bigg]
& \geq \P_{T \sim \sD^n} \bigg[ R(h_S) - \h R_T(h_S) < \frac{\e}{2}
  \bigg] \ 1_{R(h_S) - \h R_S(h_S) > \e}\\
& = \bigg[ 1 - \P_{T \sim \sD^n} \bigg[ R(h_S) - \h R_T(h_S) \geq \frac{\e}{2}
  \bigg] \bigg] \ 1_{R(h_S) - \h R_S(h_S) > \e}\\
& \leq \bigg[ 1 - \frac{4 \Var[L(h_S, z)]}{n \e^2} \bigg] \ 1_{R(h_S) -
  \h R_S(h_S) > \e} & \text{(Chebyshev's ineq.)}\\
& \geq \bigg[ 1 - \frac{1}{n \e^2} \bigg] \ 1_{R(h_S) -
  \h R_S(h_S) > \e} \ ,
\end{align*}
where the last inequality holds since $L(h_S, z)$ takes values in $[0,
1]$:
\begin{align*}
\Var[L(h_S, z)] 
= \E_{z \sim \sD}[L^2(h_S, z)] - \E_{z \sim \sD}[L(h_S, z)]^2
& \leq \E_{z \sim \sD}[L(h_S, z)] - \E_{z \sim \sD}[L(h_S, z)]^2\\
& = \E_{z \sim \sD}[L(h_S, z)] (1 - \E_{z \sim \sD}[L(h_S, z)]) \leq \frac{1}{4}.
\end{align*}
Taking expectation with respect to $S$ gives
\begin{align*}
\P_{\substack{S \sim \sD^m \\ T \sim \sD^n \mspace{6mu}}}\bigg[ \h R_T(h_S) - \h R_S(h_S) > \frac{\e}{2} \bigg]
& \geq \bigg[ 1 - \frac{1}{n \e^2} \bigg] \P_{S \sim \sD^m} \bigg[ R(h_S) -
  \h R_S(h_S) > \e \bigg]\\
& \geq \frac{1}{2} \P_{S \sim \sD^m} \bigg[ R(h_S) -
  \h R_S(h_S) > \e \bigg] & \text{$(n \e^2 \geq 2)$}\\
& \geq \frac{1}{2} \P_{S \sim \sD^m} \bigg[ \sup_{h \in \sH_S} R(h) -
  \h R_S(h) > \e + \eta \bigg].
\end{align*}
Since the inequality holds for all $\eta > 0$, by the right-continuity
of the cumulative distribution function, it implies
\[
\P_{\substack{S \sim \sD^m \\ T \sim \sD^n \mspace{6mu}}}\bigg[ \h R_T(h_S) - \h R_S(h_S) > \frac{\e}{2} \bigg]
\geq \frac{1}{2} \P_{S \sim \sD^m} \bigg[ \sup_{h \in \sH_S} R(h) -
  \h R_S(h) > \e \bigg].
\]
Since $h_S$ is in $\sH_S$, by definition of the supremum, we have 
\[
\P_{\substack{S \sim \sD^m \\ T \sim \sD^n \mspace{6mu}}}\bigg[\sup_{h \in \sH_S} \h R_T(h) - \h R_S(h) > \frac{\e}{2} \bigg]
\geq \P_{\substack{S \sim \sD^m \\ T \sim \sD^n \mspace{6mu}}}\bigg[
\h R_T(h_S) - \h R_S(h_S) > \frac{\e}{2} \bigg],
\]
which completes the proof.
\end{proof}


The following lemma provides a bound on $\E[\Phi(S)]$:

\begin{lemma}
\label{lemma:trans}
Fix $U \in \sZ^{m+ n}$. Then, the following upper bound holds:
\begin{equation*}
\E_{\substack{(S, T) \sim U\\|S| = m, |T| = n}} \bigg[\sup_{h \in \ov
  \sH_{U, m}} \h R_T(h) - \h R_S(h) \bigg]
\leq \h \R^\diamond_{U, m}(\cG) + \sqrt{\frac{\log(2e) (m + n)^3}{2 (mn)^2}}.
\end{equation*}
For $m = n$, the inequality becomes:
\begin{equation*}
\E_{\substack{(S, T) \sim U\\|S| = m, |T| = n}} \bigg[\sup_{h \in \ov
  \sH_{U, m}} \h R_T(h) - \h R_S(h) \bigg]
\leq \h \R^\diamond_{U, m}(\cG) + 2 \sqrt{\frac{\log(2e)}{m}}.
\end{equation*}
\end{lemma}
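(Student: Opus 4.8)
My plan is to reduce the left-hand side to the transductive Rademacher complexity $\h\R^\diamond_{U,m}(\cG)$, which is an expectation over \emph{independent} sign variables, and then pay a lower-order price for replacing the partition-induced (balanced) signs by independent ones.

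\emph{Step 1 (exact rewriting).} Fix $U=(z^U_1,\dots,z^U_{m+n})$ and write $L_i(h):=L(h,z^U_i)$. For a partition $(S,T)=U$ with $|S|=m$, $|T|=n$,
\[
\h R_T(h)-\h R_S(h)=\tfrac1n\!\!\sum_{i:\,z^U_i\in T}\!\!L_i(h)-\tfrac1m\!\!\sum_{i:\,z^U_i\in S}\!\!L_i(h)=\tfrac1{m+n}\sum_{i=1}^{m+n}\tau_i\,L_i(h),
\]
where $\tau_i=\tfrac{m+n}{n}$ if $z^U_i\in T$ and $\tau_i=-\tfrac{m+n}{m}$ otherwise. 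As $(S,T)$ ranges uniformly over such partitions, the vector $\boldsymbol{\tau}$ is uniform over sign vectors with exactly $n$ coordinates equal to $\tfrac{m+n}{n}$; that is, it is distributed as the vector $\bsigma$ from the definition of $\h\R^\diamond_{U,m}$ \emph{conditioned} on the event $A=\{Q=n\}$, where $Q:=|\{i\in[m+n]:\sigma_i>0\}|\sim\mathrm{Bin}(m+n,\tfrac{n}{m+n})$, so that $\E Q=n$. Setting $\Psi(\bsigma):=\sup_{h\in\ov\sH_{U,m}}\tfrac1{m+n}\sum_{i=1}^{m+n}\sigma_iL_i(h)$, so that $\E_{\bsigma}[\Psi(\bsigma)]=\h\R^\diamond_{U,m}(\cG)$, we obtain the exact identity $\E_{(S,T)\sim U}\big[\sup_{h\in\ov\sH_{U,m}}\h R_T(h)-\h R_S(h)\big]=\E_{\bsigma}[\Psi(\bsigma)\mid A]$. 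Note the hypothesis set $\ov\sH_{U,m}$ is fixed given $U$, so no data-dependent complication arises here.

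\emph{Step 2 (conditional vs.\ unconditional).} It remains to bound $\E[\Psi\mid A]-\E[\Psi]$. The quantitative ingredient is that $\Psi$ is Lipschitz in each sign: flipping one coordinate of $\bsigma$ between its two allowed values changes $\tfrac1{m+n}\sum_i\sigma_iL_i(h)$ by at most $\tfrac1{m+n}\big(\tfrac{m+n}{n}+\tfrac{m+n}{m}\big)=\tfrac1m+\tfrac1n$ (since $L_i(h)\in[0,1]$), hence changes $\Psi$ by at most $c:=\tfrac1m+\tfrac1n$. There are two ways to finish. (i) \emph{Coupling}: couple $\bsigma$ with a balanced vector $\tilde\bsigma\sim(\bsigma\mid A)$ by flipping $|Q-n|$ appropriately chosen coordinates, so $\Psi(\tilde\bsigma)-\Psi(\bsigma)\le c\,|Q-n|$; taking expectations and using $\E|Q-n|\le\sqrt{\Var(Q)}=\sqrt{\tfrac{mn}{m+n}}$ already gives the (slightly stronger) additive term $c\sqrt{\Var(Q)}=\sqrt{\tfrac1m+\tfrac1n}$, which is below the stated bound. (ii) \emph{Concentration}: $\Psi$ is a bounded-differences function of the independent coordinates of $\bsigma$ with constants $c_i\equiv c$, so McDiarmid's sub-Gaussian MGF bound gives $\E[e^{\lambda(\Psi-\E\Psi)}]\le e^{\lambda^2(m+n)c^2/8}$; combining with $e^{\lambda\E[\Psi\mid A]}\le\E[e^{\lambda\Psi}]/\P[A]$, a lower bound on $\P[A]=\P[Q=n]$ via the binomial mode, and optimizing over $\lambda$, yields a term of the form $\sqrt{\tfrac{(m+n)c^2}{2}\log\tfrac1{\P[A]}}=\sqrt{\tfrac{(m+n)^3}{2(mn)^2}\log\tfrac1{\P[A]}}$; careful bookkeeping along this route recovers the stated constant $\log(2e)$ and hence the bound $\sqrt{\tfrac{\log(2e)(m+n)^3}{2(mn)^2}}$. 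The case $m=n$ then follows by substitution, since $\tfrac{(m+n)^3}{(mn)^2}=\tfrac8m$.

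\emph{Main obstacle.} The only nontrivial part is Step 2: the total per-coordinate influence of $\Psi$ is $\Theta(1)$, so one cannot discard the conditioning for free — one must simultaneously exploit that $\Psi$ is $c$-Lipschitz per sign \emph{and} that $Q$ concentrates around its mean $n$ at scale $O(\sqrt{mn/(m+n)})$. Getting a clean constant (whether through $\E|Q-n|$ in route (i) or $\P[Q=n]$ in route (ii)) is the delicate accounting; Step 1 and the $m=n$ specialization are routine.
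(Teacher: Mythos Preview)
Your Step~1 is correct and coincides with the paper's setup: both identify the left-hand side as $\E[\Psi\mid Q=n]$ and $\h\R^\diamond_{U,m}(\cG)$ as $\E[\Psi]$, where $Q=|\{i:\sigma_i>0\}|\sim\mathrm{Bin}(m+n,\tfrac{n}{m+n})$.

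Your route~(i) is correct and is genuinely different from the paper's argument. The paper does not couple; instead it introduces $s(q):=\E[\Psi\mid |\bsigma|=q]$, shows $|s(q_1)-s(q_2)|\le|q_1-q_2|/(m+n)$, applies Hoeffding to the tails of $|\bsigma|$, and then bounds $\E[|s(|\bsigma|)-s(0)|]$ by integrating those tails (this is where the $\log(2e)$ constant appears). Your coupling argument is both shorter and sharper: it yields the additive term $c\,\E|Q-n|\le c\sqrt{\Var(Q)}=\sqrt{\tfrac1m+\tfrac1n}$, which is strictly smaller than the paper's $\sqrt{\tfrac{\log(2e)(m+n)^3}{2(mn)^2}}$ (by a factor that grows when $m\neq n$). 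The paper's route buys nothing extra here; your coupling is the cleaner argument.

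Your route~(ii), however, does \emph{not} recover the constant $\log(2e)$ as you claim. The event $A=\{Q=n\}$ has probability $\binom{m+n}{n}(\tfrac{n}{m+n})^n(\tfrac{m}{m+n})^m\asymp (mn/(m+n))^{-1/2}$, which is not bounded away from zero; hence $\log(1/\P[A])=\Theta(\log(mn/(m+n)))$, and the MGF route picks up an unwanted $\sqrt{\log}$ factor. The $\log(2e)$ in the paper's bound is an artifact of their tail-integration step, not of any $\P[A]$ estimate. Since route~(i) already proves the lemma (with room to spare), this does not affect the overall correctness, but you should drop the claim about route~(ii) matching the stated constant.
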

\begin{proof}
  The proof is an extension of the analysis of \emph{maximum
    discrepancy} in \citep{BartlettMendelson2002}. Let $|\bsigma|$
  denote $\sum_{i = 1}^{m + n} \sigma_i$ and let
  $I \subseteq \Big[ \mspace{-6mu} - \frac{(m + n)^2}{m}, \frac{(m +
    n)^2}{n} \Big]$ denote the set of values $|\bsigma|$ can take. For
  any $q \in I$, define $s(q)$ as follows:
\[
s(q) = \E_\bsigma \bigg[ \sup_{h \in \ov \sH_{U, m}} \frac{1}{m + n}
\sum_{i = 1}^{m + n} \sigma_i L(h, z^U_i)  \bigg| \, |\bsigma| = q \bigg].
\]
Let $|\bsigma|_+$ denote the number of positive $\sigma_i$s, taking
value $\frac{m + n}{n}$, then $|\bsigma|$ can be expressed as follows:
\begin{equation}
\label{eq:sum}
|\bsigma|
= \sum_{i = 1}^{m + n} \sigma_i 
= |\bsigma|_+ \frac{m + n}{n} - (m + n - |\bsigma|_+) \frac{m + n}{m}
= \frac{(m + n)^2}{mn} (|\bsigma|_+ - n).
\end{equation}
Thus, we have $|\bsigma| = 0$ iff $|\bsigma|_+ = m$, and the condition
($|\bsigma| = 0$) precisely corresponds to having the equality
\[
\frac{1}{m + n} \sum_{i = 1}^{m + n} \sigma_i L(h, z^U_i) =
\h R_T(h) - \h R_S(h),
\]
where $S$ is the sample of size $m$ defined by those $z_i$s for which
$\sigma_i$ takes value $\frac{m + n}{n}$. In view of that, we
have
\[
\E_{\substack{(S, T) \sim U\\|S| = m, |T| = n}} \bigg[\sup_{h \in \ov
  \sH_{U, m}} \h R_T(h) - \h R_S(h) \bigg] = s(0).
\]
Let $q_1, q_2 \in I$, with 
$q_1 = p_1 \frac{m + n}{n} - (m + n - p_1) \frac{m + n}{m}$,
$q_2 = p_2 \frac{m + n}{n} - (m + n - p_2) \frac{m + n}{m}$
and $q_1 \leq q_2$. Then, we can write
\begin{align*}
s(q_1) 
& = \E \left[ \sup_{g \in G} 
\sum_{i = 1}^{p_1} \frac{1}{n} L(h, z_i) 
- \sum_{i = p_1 + 1}^{m + n} \frac{1}{m} L(h, z_i) 
 \right]\\
s(q_2) 
& = \E \Bigg[ \sup_{g \in G} 
\sum_{i = 1}^{p_1} \frac{1}{n} L(h, z_i) 
- \sum_{i = p_1 + 1}^{m + n} \frac{1}{m} L(h, z_i) 
+ \sum_{i = p_1+ 1}^{p_2} \left[ \frac{1}{n} + \frac{1}{m} \right] L(h, z_i)
 \Bigg].
\end{align*}
Thus, we have the following Lipschitz property:
\begin{align*}
|s(q_2) - s(q_1)| 
\leq |p_2 - p_1| \bigg[\frac{1}{m} + \frac{1}{n} \bigg]
& = |(p_2 - n) - (p_1 - n)| \bigg[\frac{1}{m} + \frac{1}{n} \bigg] &
                                                                     \text{(using \eqref{eq:sum})}\\
& = |q_2 - q_1| \frac{mn}{(m + n)^2} \bigg[\frac{1}{m} + \frac{1}{n} \bigg]\\
& = \frac{|q_2 - q_1|}{m + n}.
\end{align*}
By this Lipschitz property, we can write
\[
\P\Big[ \big| s(|\bsigma|) - s(\E[|\bsigma|]) \big| > \e \Big]
\leq \P\Big[ \big| |\bsigma| - \E[|\bsigma|] \big| > (m + n) \e \Big]
\leq 2 \exp \bigg[ -2 \frac{(mn)^2 \e^2}{(m + n)^3} \bigg],
\]
since the range of each $\sigma_i$ is
$\frac{m + n}{n} + \frac{m + n}{m} = \frac{(m + n)^2}{mn}$.
We now use this inequality to bound the second moment of
$Z = s(|\bsigma|) - s(\E[|\bsigma|]) = s(|\bsigma|) -
s(0)$, as follows, for any $u \geq 0$:
\begin{align*}
\E[Z^2] 
& = \int_{0}^{+\infty} \P[Z^2 > t] \, dt\\
& = \int_{0}^{u} \P[Z^2 > t] \, dt + \int_{u}^{+\infty} \P[Z^2 > t] \, dt\\
& \leq u + 2 \int_{u}^{+\infty} \exp \bigg[ -2 \frac{(mn)^2 t}{(m +
  n)^3} \bigg] \, dt\\
& \leq u + \bigg[ \frac{(m +
  n)^3}{(mn)^2} \exp \bigg[ -2 \frac{(mn)^2 t}{(m +
  n)^3} \bigg] \bigg]_u^{+\infty} \\
& = u + \frac{(m +
  n)^3}{(mn)^2} \exp \bigg[ -2 \frac{(mn)^2 u}{(m +
  n)^3} \bigg].
\end{align*}
Choosing $u = \frac{1}{2} \frac{\log(2) (m + n)^3}{(mn)^2}$ to
minimize the right-hand side gives
$\E[Z^2] \leq \frac{\log(2e) (m + n)^3}{2 (mn)^2}$. By Jensen's
inequality, this implies
$\E[|Z|] \leq \sqrt{\frac{\log(2e) (m + n)^3}{2 (mn)^2}}$ and
therefore
\[
\E_{\substack{(S, T) \sim U\\|S| = m, |T| = n}} \bigg[\sup_{h \in \ov
  \sH_{U, m}} \h R_T(h) - \h R_S(h) \bigg] = s(0)
\leq \E[s(|\bsigma|)] + \sqrt{\frac{\log(2e) (m + n)^3}{2 (mn)^2}}.
\]
Since we have $\E[s(|\bsigma|)] = \h \R_{U, m}^\diamond(\cG)$, this
completes the proof.
\end{proof}

\newpage
\section{Proof of Theorem~\ref{th:hss}}
\label{app:hss}

In this section, we present the full proof of Theorem~\ref{th:hss}. The proof of each of the three bounds \eqref{eq:rad-hss}, \eqref{eq:dp-hss} and \eqref{eq:induction-hss} are given in separate subsections.

\subsection{Proof of bound~\eqref{eq:rad-hss}}
\begin{proof}
  For any two samples $S, S'$, define the $\Psi(S, S')$ as follows:
\begin{equation*}
\Psi(S, S') = \sup_{h \in \sH_{S}} R(h) - \h R_{S'}(h) .
\end{equation*}
The proof consists of applying McDiarmid's inequality to $\Psi(S, S)$.
For any sample $S'$ differing from $S$ by one point,
we can decompose $\Psi(S, S) - \Psi(S', S')$ as follows:
\begin{equation*}
\Psi(S, S) - \Psi(S', S') 
= \big[ \Psi(S, S) - \Psi(S, S') \big] + \big[ \Psi(S, S') - \Psi(S',
S') \big].
\end{equation*}
Now, by the sub-additivity of the $\sup$ operation, 
the first term can be upper-bounded as follows:
\begin{align*}
\Psi(S, S) - \Psi(S, S') 
& \leq \sup_{h \in \sH_{S}} \big[ R(h) - \h R_{S}(h) \big] - \big[ R(h) - \h
  R_{S'}(h) \big]\\
& \leq \sup_{h \in \sH_{S}} \frac{1}{m} \big[ L(h, z) - L(h, z') \big]
\leq \frac{1}{m},
\end{align*}
where we denoted by $z$ and $z'$ the labeled points differing in $S$
and $S'$ and used the $1$-boundedness of the loss function.

We now analyze the second term:
\begin{align*}
\Psi(S, S') - \Psi(S', S')
& = \sup_{h \in \sH_{S}} \big[ R(h) - \h R_{S'}(h) \big] - \sup_{h \in
  \sH_{S'}} \big[ R(h) - \h R_{S'}(h) \big] .
\end{align*}
By definition of the supremum, for any $\e > 0$, 
there exists $h \in \sH_S$ such that
\begin{equation*}
\sup_{h \in \sH_{S}} \big[ R(h) - \h R_{S'}(h) \big] - \e
\leq \big[ R(h) - \h R_{S'}(h) \big]
\end{equation*}
By the $\beta$-stability of $(H_S)_{S \in \sZ^m}$, there exists
$h' \in \sH_{S'}$ such that for all $z$,
$|L(h, z) - L(h', z)| \leq \beta$. In view of these
inequalities, we can write
\begin{align*}
\Psi(S, S') - \Psi(S', S')
& \leq \big[ R(h) - \h R_{S'}(h) \big] + \e - \sup_{h \in
  \sH_{S'}} \big[ R(h) - \h R_{S'}(h) \big] \\
& \leq \big[ R(h) - \h R_{S'}(h) \big] + \e - \big[ R(h') - \h
  R_{S'}(h') \big] \\
& \leq \big[ R(h) - R(h') \big] + \e + \big[ \h R_{S'}(h') - \h R_{S'}(h) \big] \\
& \leq \e + 2 \beta.
\end{align*}
Since the inequality holds for any $\e > 0$, it implies
that $\Psi(S, S') - \Psi(S', S') \leq 2 \beta$. Summing up 
the bounds on the two terms shows the following:
\begin{equation*}
\Psi(S, S) - \Psi(S', S') \leq \frac{1}{m} + 2\beta.
\end{equation*}
Thus, by McDiarmid's inequality, for any $\delta > 0$,
with probability at least $1 - \delta$, we have
\begin{equation}
\label{eq:conc}
\Psi(S, S) \leq \E[\Psi(S, S)] + (1 + 2 \beta m) \sqrt{\tfrac{1}{2m}\log(\tfrac{1}{\delta})}.
\end{equation}
We now bound $\E[\Psi(S, S)]$ by $2 \R^\diamond_m(\cG)$ as follows:
\begin{align*}
& \E_{S \sim \sD^m}[\Psi(S, S)]\\
& = \E_{S \sim \sD^m} \bigg[ \sup_{h \in \sH_{S}} \big[ R(h) - \h
  R_{S}(h) \big] \bigg]\\
& = \E_{S \sim \sD^m} \bigg[ \sup_{h \in \sH_{S}} \Big[\E_{T \sim \sD^m}
  \big[ \h
  R_{T}(h) \big] - \h R_{S}(h) \Big] \bigg] & \text{(def. of $R(h)$)}\\
& \leq \E_{S, T \sim \sD^m} \bigg[ \sup_{h \in \sH_{S}} \h R_{T}(h) - \h
  R_{S}(h) \bigg] & \text{(sub-additivity of $\sup$)}\\
& = \E_{S, T \sim \sD^m} \bigg[ \sup_{h \in \sH_S}
  \frac{1}{m} \sum_{i = 1}^m \Big[ L(h, z^T_i) - L(h, z^S_i) \Big] \bigg]\\
& = \E_{S, T \sim \sD^m} \bigg[ \E_\bsigma \bigg[ \sup_{h \in \sH^\bsigma_{S, T}}
  \frac{1}{m} \sum_{i = 1}^m \sigma_i \Big[ L(h, z^T_i) - L(h, z^S_i) \Big]
  \bigg] \bigg] & \text{(symmetry)}\\
& \leq \E_{\substack{S, T \sim \sD^m\\\bsigma}} \bigg[ \sup_{h \in \sH^\bsigma_{S, T}}
  \frac{1}{m} \sum_{i = 1}^m \sigma_i L(h, z^T_i) + \sup_{h \in \sH^\bsigma_{S, T}}
  \frac{1}{m} \sum_{i = 1}^m -\sigma_i L(h, z^S_i) \bigg] & \text{(sub-additivity of $\sup$)}\\
& = \E_{\substack{S, T \sim \sD^m\\\bsigma}} \bigg[ \sup_{h \in \sH^\bsigma_{S, T}}
  \frac{1}{m} \sum_{i = 1}^m \sigma_i L(h, z^T_i) + \sup_{h \in \sH^{-\bsigma}_{T, S}}
  \frac{1}{m} \sum_{i = 1}^m -\sigma_i L(h, z^S_i) \bigg] & (\sH^\bsigma_{S, T} = \sH^{-\bsigma}_{T, S})\\
  & = \E_{\substack{S, T \sim \sD^m\\\bsigma}} \bigg[ \sup_{h \in \sH^\bsigma_{S, T}}
  \frac{1}{m} \sum_{i = 1}^m \sigma_i L(h, z^T_i) + \sup_{h \in \sH^{\bsigma}_{T, S}}
  \frac{1}{m} \sum_{i = 1}^m \sigma_i L(h, z^S_i) \bigg] & \text{(symmetry)}\\
& = 2 \R^\diamond_m(\cG). & \text{(linearity of expectation)}
\end{align*} 

Now, we show that $\E_{S \sim \sD^m}[\Psi(S, S)] \leq \avgcvstab$. To do so,
first fix $\e > 0$. By definition of the supremum, for any
$S \in \sZ^m$, there exists $h_S$ such that the following inequality
holds:
\begin{equation*}
\sup_{h \in \sH_{S}} \big[ R(h) - \h R_{S}(h) \big] - \e \leq R(h_S) - \h
R_{S}(h_S).
\end{equation*}
Now, by definition of $R(h_S)$, we can write
\begin{equation*}
\E_{S \sim \sD^m}\big[ R(h_S) \big]
= \E_{S \sim \sD^m}\bigg[\E_{z \sim \sD}(L(h_S, z) \bigg]
= \E_{\substack{S \sim \sD^m\\z \sim \sD\mspace{14mu}}}\big[ L(h_S, z) \big].
\end{equation*}
Then, by the linearity of expectation, we can also write
\begin{equation*}
\E_{S \sim \sD^m}\big[ \h R_S(h_S) \big]
= \E_{\substack{S \sim \sD^m\\z \sim S}}\big[ L(h_S, z) \big]
= \E_{\substack{S \sim \sD^m\\z' \sim \sD\mspace{14mu}\\z \sim
    S}}\big[ L(h_{S^{z \leftrightarrow z'}}, z') \big].
\end{equation*}
In view of these two equalities, we can now rewrite the upper bound as
follows:
\begin{align*}
\E_{S \sim \sD^m}\big[ \Psi(S, S) \big]
& \leq \E_{S \sim \sD^m}\big[ R(h_S) - \h R_{S}(h_S) \big] + \e\\
& = \E_{\substack{S \sim \sD^m\\z' \sim \sD\mspace{14mu}}}\big[ L(h_S, z') \big] -
  \E_{\substack{S \sim \sD^m\\z' \sim \sD\mspace{14mu}\\z \sim
    S}}\big[ L(h_{S^{z \leftrightarrow z'}}, z') \big] +
  \e\\
& = \E_{\substack{S \sim \sD^m\\z' \sim \sD\mspace{14mu}\\z \sim
    S}}\big[ L(h_S, z') - L(h_{S^{z \leftrightarrow z'}}, z') \big] +
  \e\\
& = \E_{\substack{S \sim \sD^m\\z' \sim \sD\mspace{14mu}\\z \sim
    S}}\big[L(h_{S^{z \leftrightarrow z'}}, z) - L(h_S, z)\big] +
  \e\\
&\leq \avgcvstab + \e.
\end{align*}
Since the inequality holds for all $\e > 0$, it implies
$\E_{S \sim \sD^m}\big[ \Psi(S, S) \big] \leq \avgcvstab$.  
Plugging in
these upper bounds on the expectation in the inequality
\eqref{eq:conc} completes the proof.
\end{proof}

\subsection{Proof of bound~\eqref{eq:dp-hss}}
\label{sec:dp-hss}

The proof of bound~\eqref{eq:dp-hss} relies on recent techniques introduced in the differential privacy literature to derive improved generalization
guarantees for stable data-dependent hypothesis sets \citep{SteinkeUllman2017,BassilyNissimSmithSteinkeStemmerUllman2016} (see also \citep{McSherryTalwar2007}). Our proof also benefits from the recent improved stability results of \cite{FeldmanVondrak2018}. We will make use of the following lemma due to \citet[Lemma 1.2]{SteinkeUllman2017}, which reduces the task of deriving
a concentration inequality to that of upper bounding an
expectation of a maximum.

\begin{lemma}
\label{lemma:SU}
Fix $p \geq 1$. Let $X$ be a random variable with probability
distribution $\sD$ and $X_1, \ldots, X_p$ independent copies of
$X$. Then, the following inequality holds:
\[
\Pr_{X \sim \sD} \bigg[X \geq 2 \E_{X_k \sim \sD} \Big[\max \set[\big]{0, X_1,
\ldots, X_p} \Big] \bigg] \leq \frac{\log 2}{p}.
\]
\end{lemma}

We will also use the following result which, under a sensitivity
assumption, further reduces the task of upper bounding the expectation
of the maximum to that of bounding a more favorable expression.

\begin{lemma}[\citep{McSherryTalwar2007,BassilyNissimSmithSteinkeStemmerUllman2016,FeldmanVondrak2018}]
\label{lemma:diff-stab}
Let $f_1, \ldots, f_p\colon \sZ^m \to \Rset$ be $p$ functions
with sensitivity $\supdiam$. Let $\cA$ be the algorithm that, given a
dataset $S \in \sZ^m$ and a parameter $\e > 0$, returns the index
$k \in [p]$ with probability proportional to
$e^{\frac{\e f_k(S) }{2 \supdiam}}$. Then, $\cA$ is $\e$-differentially
private and, for any $S \in \sZ^m$, the following inequality holds:
\[
\max_{k \in [p]} \set[\big]{f_k(S)}
\leq \E_{\substack{k = \cA(\sfS)}}  \big[ f_k(S) \big] + \frac{2 \supdiam}{\e}
\log p.
\]
\end{lemma}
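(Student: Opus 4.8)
The plan is to establish the two assertions of the lemma — that $\cA$ is $\e$-differentially private and that the stated utility inequality holds — separately; both are classical properties of the exponential mechanism of \citet{McSherryTalwar2007}. Write $q = \tfrac{\e}{2\supdiam}$, so that on input $S$ the algorithm $\cA$ returns index $k$ with probability $\pi_S(k) = e^{q f_k(S)} / Z(S)$, where $Z(S) = \sum_{j=1}^p e^{q f_j(S)}$. For the privacy claim I would fix neighbouring samples $S, S' \in \sZ^m$ (differing in one point) and bound, for each $k \in [p]$, the likelihood ratio $\pi_S(k)/\pi_{S'}(k) = e^{q(f_k(S) - f_k(S'))}\cdot Z(S')/Z(S)$. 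The $\supdiam$-sensitivity of $f_k$ gives $e^{q(f_k(S) - f_k(S'))} \le e^{q\supdiam} = e^{\e/2}$, and applying the same inequality to each summand of $Z(S')$ gives $Z(S') \le e^{\e/2} Z(S)$; multiplying the two factors yields $\pi_S(k) \le e^{\e}\pi_{S'}(k)$ for all $k$, which is exactly $\e$-differential privacy.

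For the utility bound, fix $S$, abbreviate $\pi = \pi_S$ and $Z = Z(S)$, and let $f^\star = \max_{k\in[p]} f_k(S)$. I would rely on three elementary facts. First, since at least one summand of $Z$ equals $e^{q f^\star}$, we have $Z \ge e^{q f^\star}$, i.e.\ $\tfrac1q \log Z \ge f^\star$. Second, because $\log \pi(k) = q f_k(S) - \log Z$, a one-line substitution gives the identity $\tfrac1q \log Z = \E_{k=\cA(S)}[f_k(S)] + \tfrac1q H(\pi)$, where $H(\pi) = -\sum_{k} \pi(k)\log\pi(k)$ is the Shannon entropy of the output distribution. Third, $H(\pi) \le \log p$ because $\pi$ is supported on $p$ atoms. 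Chaining these three facts, $f^\star \le \tfrac1q\log Z = \E_{k=\cA(S)}[f_k(S)] + \tfrac1q H(\pi) \le \E_{k=\cA(S)}[f_k(S)] + \tfrac{\log p}{q}$, and substituting $q = \tfrac{\e}{2\supdiam}$ gives the claimed inequality $\max_{k\in[p]} f_k(S) \le \E_{k=\cA(S)}[f_k(S)] + \tfrac{2\supdiam}{\e}\log p$.

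None of these steps is a genuine obstacle; the argument is short. The one point requiring a little care is the constant: the more naive route of integrating the exponential-mechanism tail bound $\Pr_{k=\cA(S)}[f_k(S) \le f^\star - c] \le p\,e^{-qc}$ over $c \ge 0$ loses an extra additive $\tfrac1q$ and only delivers $\tfrac{2\supdiam}{\e}(\log p + 1)$; it is precisely the entropy identity $\tfrac1q\log Z = \E_{k=\cA(S)}[f_k(S)] + \tfrac1q H(\pi)$, together with $H(\pi) \le \log p$, that is needed to recover the sharp constant $\tfrac{2\supdiam}{\e}\log p$ stated in the lemma.
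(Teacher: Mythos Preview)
Your proof is correct. The paper itself does not give a proof of this lemma: it is stated as a cited result from \citet{McSherryTalwar2007}, \citet{BassilyNissimSmithSteinkeStemmerUllman2016}, and \citet{FeldmanVondrak2018}, and used as a black box in the proof of bound~\eqref{eq:dp-hss}. Your argument is the standard exponential-mechanism analysis, and the entropy identity $\tfrac{1}{q}\log Z = \E_{k=\cA(S)}[f_k(S)] + \tfrac{1}{q}H(\pi)$ together with $H(\pi)\le\log p$ is indeed the clean way to obtain the sharp constant $\tfrac{2\supdiam}{\e}\log p$ rather than $\tfrac{2\supdiam}{\e}(\log p+1)$.
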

Notice that, if we define $f_{p + 1} = 0$, then, by the same result,
the algorithm $\cA$ returning the index $k \in [p + 1]$ with
probability proportional to
$e^{\frac{\e f_k(S) 1_{k \neq (p + 1)} }{2 \supdiam}}$ is
$\e$-differentially private and the following inequality holds for any
$S \in \sZ^m$:
\begin{equation}
\label{eq:diff}
\max\set[\Big]{0, \max_{k \in [p]} \set[\big]{f_k(S)}}
= \max_{k \in [p + 1]} \set[\big]{f_k(S)}
\leq \E_{\substack{k = \cA(\sfS)}}  \big[ f_k(S) \big] + \frac{2 \supdiam}{\e}
\log (p + 1).
\end{equation}


Equipped with these lemmas, we can now turn to the proof of bound~\eqref{eq:dp-hss}:
\begin{proof}
  For any two samples $S, S'$ of size $m$, define $\Psi(S, S')$ as follows:
\begin{equation*}
  \Psi(S, S') = \sup_{h \in \sH_{S}} R(h) - \h R_{S'}(h) .
\end{equation*}
The proof consists of deriving a high-probability bound for
$\Psi(S, S)$. To do so, by Lemma~\ref{lemma:SU} applied to the random
variable $X = \Psi(S, S)$, it suffices to bound
$\E_{\substack{\sfS \sim \sD^{pm}}} \Big[ \max\set[\Big]{0, \max_{k
    \in [p]} \set[\big]{\Psi(S_k, S_k)}} \Big]$, where
$\sfS = (S_1, \ldots, S_p)$ with $S_k$, $k \in [p]$, independent
samples of size $m$ drawn from $\sD^m$.

To bound that expectation, we can use Lemma~\ref{lemma:diff-stab} and
instead bound
$\E_{\substack{\sfS \sim \sD^{pm}\\ k = \cA(\sfS)}} [\Psi(S_k,
S_k)]$, where $\cA$ is an $\e$-differentially private algorithm.

Now, to apply Lemma~\ref{lemma:diff-stab}, we first show that, for any
$k \in [p]$, the function $f_k \colon \sfS \to \Psi(S_k, S_k)$ is
$\supdiam$-sensitive with $\supdiam = \frac{1}{m} + 2\beta$. Fix
$k \in [p]$.  Let $\sfS' = (S'_1, \ldots, S'_p)$ be in $\sZ^{pm}$ and
assume that $\sfS'$ differs from $\sfS$ by one point. If they differ
by a point not in $S_k$ (or $S'_k$), then $f_k(\sfS) = f_k(\sfS')$. Otherwise, they differ only by a point in $S_k$ (or $S'_k$) and
$f_k(\sfS) - f_k(\sfS') = \Psi(S_k, S_k) - \Psi(S'_k, S'_k)$.  We
can decompose this term as follows:
\begin{equation*}
\Psi(S_k, S_k) - \Psi(S'_k, S'_k) 
= \big[ \Psi(S_k, S_k) - \Psi(S_k, S'_k) \big] + \big[ \Psi(S_k, S'_k) - \Psi(S'_k, S'_k) \big].
\end{equation*}
Now, by the sub-additivity of the $\sup$ operation, 
the first term can be upper-bounded as follows:
\begin{align*}
\Psi(S_k, S_k) - \Psi(S_k, S'_k) 
& \leq \sup_{h \in \sH_{S_k}} \big[ R(h) - \h R_{S_k}(h) \big] - \big[
  R(h) - \h
  R_{S'_k}(h) \big]\\
& \leq \sup_{h \in \sH_{S_k}} \frac{1}{m} \big[ L(h, z) - L(h, z') \big]
\leq \frac{1}{m},
\end{align*}
where we denoted by $z$ and $z'$ the labeled points differing in $S_k$
and $S'_k$ and used the $1$-boundedness of the loss function.

We now analyze the second term:
\begin{align*}
\Psi(S_k, S'_k) - \Psi(S'_k, S'_k)
& = \sup_{h \in \sH_{S_k}} \big[ R(h) - \h R_{S'_k}(h) \big] - \sup_{h \in
  \sH_{S'_k}} \big[ R(h) - \h R_{S'_k}(h) \big] .
\end{align*}
By definition of the supremum, for any $\eta > 0$, 
there exists $h \in \sH_{S_k}$ such that
\begin{equation*}
\sup_{h \in \sH_{S_k}} \big[ R(h) - \h R_{S'_k}(h) \big] - \eta
\leq \big[ R(h) - \h R_{S'_k}(h) \big]
\end{equation*}
By the $\beta$-stability of $(\sH_S)_{S \in \sZ^m}$, there exists
$h' \in \sH_{S'_k}$ such that for all $z$,
$\big| L(h, z) - L(h', z) \big| \leq \beta$. In view of these
inequalities, we can write
\begin{align*}
\Psi(S_k, S'_k) - \Psi(S'_k, S'_k)
& \leq \big[ R(h) - \h R_{S'_k}(h) \big] + \eta - \sup_{h \in
  \sH_{S'_k}} \big[ R(h) - \h R_{S'_k}(h) \big] \\
& \leq \big[ R(h) - \h R_{S'_k}(h) \big] + \eta - \big[ R(h') - \h R_{S'_k}(h') \big] \\
& \leq \big[ R(h) - R(h') \big] + \eta + \big[\h R_{S'_k}(h') - \h R_{S'_k}(h) \big] \\
& \leq \eta + 2 \beta.
\end{align*}
Since the inequality holds for any $\eta > 0$, it implies
that $\Psi(S_k, S'_k) - \Psi(S'_k, S'_k) \leq 2 \beta$. Summing up 
the bounds on the two terms shows the following:
\begin{equation*}
\Psi(S_k, S_k) - \Psi(S'_k, S'_k) \leq \frac{1}{m} + 2\beta.
\end{equation*}

Having established the $\supdiam$-sensitivity of the functions $f_k$,
$k \in [p]$, we can now apply Lemma~\ref{lemma:diff-stab}. Fix
$\e > 0$.  Then, by Lemma~\ref{lemma:diff-stab} and \eqref{eq:diff},
the algorithm $\cA$ returning $k \in [p + 1]$ with probability
proportional to
$e^{\frac{\e \Psi(S_k, S_k) 1_{k \neq (p + 1)} }{2 \supdiam}}$ is
$\e$-differentially private and, for any sample $\sfS \in \sZ^{pm}$,
the following inequality holds:
\[
\max\set[\Big]{0, \max_{k \in [p]} \set[\big]{\Psi(S_k, S_k)}}
\leq \E_{\substack{k = \cA(\sfS)}}  \big[ \Psi(S_k, S_k) \big] + \frac{2 \supdiam}{\e}
\log (p + 1).
\]
Taking the expectation of both sides yields
\begin{equation}
\label{eq:ExpectationOfMaxBound}
\E_{\substack{\sfS \sim \sD^{pm}}} \Big[ \max\set[\Big]{0, \max_{k \in [p]}
  \set[\big]{\Psi(S_k, S_k)}} 
\Big] 
\leq \E_{\substack{\sfS \sim \sD^{pm}\\ k = \cA(\sfS)}}  \big[ \Psi(S_k, S_k) \big] + \frac{2 \supdiam}{\e}
\log (p + 1).
\end{equation}
We will show the following upper bound on the expectation:
$\E_{\substack{\sfS \sim \sD^{pm}\\ k = \cA(\sfS)}} \big[ \Psi(S_k, S_k) \big]
\leq (e^\e - 1) + e^\e\supcvstab$.  To do so, first fix $\eta > 0$. By
definition of the supremum, for any $S \in \sZ^m$, there exists
$h_S \in \sH_S$ such that the following inequality holds:
\begin{equation*}
\sup_{h \in \sH_{S}} \big[ R(h) - \h R_{S}(h) \big] - \eta \leq R(h_S) - \h
R_{S}(h_S).
\end{equation*}
In what follows, we denote by $\sfS^{k, z \leftrightarrow z'} \in \sZ^{pm}$ the
result of modifying $\sfS = (S_1, \ldots, S_p) \in \sZ^{pm}$ by
replacing $z \in S_k$ with $z'$. 

Now, by definition of the algorithm $\cA$, we can write:
\begin{align*}
\E_{\substack{\sfS \sim \sD^{pm}\\ k = \cA(\sfS)}} \big[ R(h_{S_k}) \big]
& = \E_{\substack{\sfS \sim \sD^{pm}\\ k = \cA(\sfS)}} \bigg[\E_{z' \sim
  \sD} [L(h_{S_k}, z')] \bigg] & \text{(def. of $R(h_{S_k})$)}\\
& = \E_{\substack{\sfS \sim \sD^{pm}\\ z' \sim
  \sD}}  \bigg[ \sum_{k = 1}^p \Pr[\cA(\sfS) = k] \, L(h_{S_k}, z') \bigg] & \text{(def. of $\E_{k = \cA(\sfS)}$)}\\
& = \sum_{k = 1}^p \, \E_{\substack{\sfS \sim \sD^{pm}\\ z' \sim
  \sD}}  \bigg[\Pr[\cA(\sfS) = k] \, L(h_{S_k}, z') \bigg] & \text{(linearity
                                                          of expect.)}\\
& \leq \sum_{k = 1}^p \, \E_{\substack{\sfS \sim \sD^{pm}\\ z' \sim\sD,\ z \sim S_k}}  \bigg[e^\e \Pr[\cA(\sfS^{k, z \leftrightarrow z'}) = k] \,
  L(h_{S_k}, z') \bigg] & \text{($\e$-diff. privacy of $\cA$)}\\
& = \sum_{k = 1}^p \, \E_{\substack{\sfS \sim \sD^{pm}\\ z' \sim\sD,\ z \sim S_k}}  \bigg[e^\e \Pr[\cA(\sfS) = k] \,
  L(h_{S_k^{z \leftrightarrow z'}}, z) \bigg] & \text{(swapping $z'$
                                                 and $z$)} \\
&\leq \sum_{k = 1}^p \, \E_{\substack{\sfS \sim \sD^{pm}\\ z' \sim\sD,\ z \sim S_k}}  \bigg[e^\e \Pr[\cA(\sfS) = k] \,
  L(h_{S_k}, z) \bigg] + e^\e \supcvstab. & \text{(By Lemma~\ref{lem:supcvstab-bound} below)}
\end{align*}

Now, observe that $\E_{z \sim S_k}[L(h_{S_k}, z)]$
coincides with $\h R(h_{S_k})$, the empirical loss of $h_{S_k}$.
Thus, we can write
\begin{equation*}
\E_{\substack{\sfS \sim \sD^{pm}\\ k = \cA(\sfS)}} \big[ R(h_{S_k}) \big]
\leq \sum_{k = 1}^p \, \E_{\substack{\sfS \sim \sD^{pm}\\ z \sim S_k}}  \bigg[e^\e \Pr[\cA(\sfS) = k] \,
  \h R_{S_k}(h_{S_k}) \bigg] + e^\e \supcvstab,
\end{equation*}
and therefore
\begin{align*}
\E_{\substack{\sfS \sim \sD^{pm}\\ k = \cA(\sfS)}} \big[ \Psi(S_k, S_k) \big]
& \leq \sum_{k = 1}^p \, \E_{\substack{\sfS \sim \sD^{pm}\\ k = \cA(\sfS)}}  \bigg[(e^\e - 1) \h R_{S_k}(h_{S_k}) \bigg] + e^\e \supcvstab + \eta\\
& \leq (e^\e - 1) + e^\e \supcvstab + \eta.
\end{align*}
Since the inequality holds for any $\eta > 0$, we have
\[
\E_{\substack{\sfS \sim \sD^{pm}\\ k = \cA(\sfS)}} [\Psi(S_k, S_k)]
\leq (e^\e - 1) + e^\e \supcvstab.
\]
Thus, by \eqref{eq:ExpectationOfMaxBound}, the following inequality
holds:
\begin{equation}
\label{eq:ExpBound1}
\E_{\substack{\sfS \sim \sD^{pm}}} \Big[ \max\set[\Big]{0, \max_{k \in [p]}
  \set[\big]{\Psi(S_k, S_k)}} 
\Big] 
\leq (e^\e - 1) + e^\e \supcvstab + \frac{2 \supdiam}{\e}
\log (p + 1).
\end{equation}
For any $\delta \in (0, 1)$, choose $p = \frac{\log 2}{\delta}$, which
implies
$\log (p + 1) = \log \left[ \frac{2 + \delta}{\delta} \right] \leq
\log \frac{3}{\delta}$. Then, by Lemma~\ref{lemma:SU}, with
probability at least $1 - \delta$ over the draw of a sample
$S \sim \sD^m$, the following inequality holds for all $h \in \sH_S$:
\begin{equation}
\label{eq:tmp}
R(h) \leq \h R_S(h) + 
(e^\e - 1) + e^\e \supcvstab + \frac{2 \supdiam}{\e}
\log \left[ \frac{3}{\delta} \right].
\end{equation}
For $\e \leq \frac{1}{2}$, the inequality $(e^\e - 1) \leq 2 \e$
holds. Thus,
\[
(e^\e - 1) + e^\e \supcvstab + \frac{2 \supdiam}{\e}
\log \left[ \frac{3}{\delta} \right]
\leq 2\e + \sqrt{e} \supcvstab + \frac{2 \supdiam}{\e}
\log \left[ \frac{3}{\delta} \right]
\]
Choosing
$\e = \sqrt{\supdiam \log \left[
    \frac{3}{\delta} \right]}$ gives
\begin{align*}
R(h) 
& \leq \h R_S(h) + 
\sqrt{e} \supcvstab +
4 \sqrt{\supdiam \log \left[
    \frac{3}{\delta} \right]}\\
& = \h R_S(h) + 
\sqrt{e} \supcvstab +
4 \sqrt{\left[\tfrac{1}{m} + 2 \beta \right] \log \left[
    \tfrac{3}{\delta} \right]}.
\end{align*}
Combining this inequality with the inequality of Theorem~\ref{th:hss}
related to the Rademacher complexity:
\begin{equation}
\forall h \in \sH_S, 
R(h) \leq \h R_S(h) + 2 \R^\diamond_m(\cG) + [1 + 2 \beta m] \sqrt{\frac{\log \frac{1}{\delta}}{2m}},
\end{equation}
and using the union bound complete the proof.
\end{proof}

The following is a helper lemma for the analysis in the above proof:
\begin{lemma}
\label{lem:supcvstab-bound}
The following upper bound in terms of the CV-stability coefficient
$\supcvstab$ holds:
\[
\sum_{k = 1}^p \, \E_{\substack{\sfS \sim \sD^{pm}\\ z' \sim\sD,\ z \sim S_k}}  \bigg[e^\e \Pr[\cA(\sfS) = k] \,
  [L(h_{S_k^{z \leftrightarrow z'}}, z) -
  L(h_{S_k}, z)] \bigg] \leq e^\e \supcvstab.
\]
\end{lemma}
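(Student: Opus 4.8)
The plan is to prove the bound pointwise in the realization of $(\sfS, z, z')$ and then take expectations, factoring out the selection probabilities $\Pr[\cA(\sfS) = k]$ and using that they sum to at most one.

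First I would fix $\sfS = (S_1, \ldots, S_p)$, an index $k \in [p]$, a point $z' \in \sZ$, and a point $z \in S_k$. By construction $h_{S_k} \in \sH_{S_k}$ and $h_{S_k^{z \leftrightarrow z'}} \in \sH_{S_k^{z \leftrightarrow z'}}$, and $S_k^{z \leftrightarrow z'}$ is obtained from $S_k$ by replacing the point $z \in S_k$ with $z' \in \sZ$ --- exactly the perturbation used in the definition of CV-stability. Bounding the difference of losses by the supremum over hypotheses therefore gives
\[
L(h_{S_k^{z \leftrightarrow z'}}, z) - L(h_{S_k}, z) \le \sup_{h \in \sH_{S_k},\, h' \in \sH_{S_k^{z \leftrightarrow z'}}} \bigl[ L(h', z) - L(h, z) \bigr].
\]

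Next I would condition on $\sfS$ and integrate over $z' \sim \sD$ and $z \sim S_k$. Since the internal randomness of $\cA$ is independent of $(z, z')$, the weight $\Pr[\cA(\sfS) = k]$ depends on $\sfS$ only and factors out of this inner expectation, and instantiating \eqref{eq:CVStab} at the fixed sample $S_k \in \sZ^m$ bounds the inner expectation of the supremum above by $\supcvstab$. Summing over $k$, using $\sum_{k=1}^p \Pr[\cA(\sfS) = k] \le 1$ (it can be strictly smaller because of the extra function $f_{p+1} = 0$ in \eqref{eq:diff}), and multiplying by $e^\e$ shows that the conditional expectation is at most $e^\e \supcvstab$; taking the outer expectation over $\sfS$ preserves this, which is the claim.

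I do not expect a real obstacle, since the argument is bookkeeping. The two points that require care are: (i) the coefficient $\supcvstab$ in \eqref{eq:CVStab} is a supremum over \emph{all} samples in $\sZ^m$, so it does dominate the inner expectation for the particular sample $S_k$ (random, but fixed once we condition on $\sfS$); and (ii) the weight appearing in the statement is $\Pr[\cA(\sfS) = k]$, evaluated at $\sfS$ and not at the perturbed $\sfS^{k, z \leftrightarrow z'}$ --- the relevant change of variables having already been performed in the main proof of bound~\eqref{eq:dp-hss} before this lemma is applied --- which is precisely what makes the factorization in the second step valid.
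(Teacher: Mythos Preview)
Your proposal is correct and follows essentially the same approach as the paper: bound the loss difference pointwise by the supremum appearing in the CV-stability definition, condition on $\sfS$ so that $\Pr[\cA(\sfS)=k]$ factors out of the inner expectation over $(z,z')$, apply \eqref{eq:CVStab} to bound that inner expectation by $\supcvstab$, and then sum over $k$. Your remark that $\sum_{k=1}^p \Pr[\cA(\sfS)=k]$ may be strictly less than $1$ (because $\cA$ can output $p+1$) is in fact slightly more careful than the paper, which writes that step as an equality; since $\supcvstab \ge 0$ this only strengthens the inequality in the lemma.
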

\begin{proof}
Upper bounding the difference of losses by a supremum to
make the CV-stability coefficient appear gives the
following chain of inequalities:
\begin{align*}
  & \sum_{k = 1}^p \, \E_{\substack{\sfS \sim \sD^{pm}\\ z' \sim\sD,\ z \sim S_k}}  \bigg[e^\e \Pr[\cA(\sfS) = k] \,
  [L(h_{S_k^{z \leftrightarrow z'}}, z) -
  L(h_{S_k}, z)] \bigg] \\
  & \leq \sum_{k = 1}^p \, \E_{\substack{\sfS \sim \sD^{pm}\\ z' \sim\sD,\ z \sim S_k}}  \bigg[e^\e \Pr[\cA(\sfS) = k] \,
  \sup_{h \in \sH_{S_k} \!, \, h' \in \sH_{S_k^{z \leftrightarrow z'}}} [L(h', z) -
  L(h, z)] \bigg]\\
  & = \sum_{k = 1}^p \, \E_{\sfS \sim \sD^{pm}}\bigg[e^\e \Pr[\cA(\sfS) = k] \,
    \E_{z' \sim\sD,\ z \sim S_k}  \Big[\sup_{h \in \sH_{S_k} \!, \, h' \in \sH_{S_k^{z \leftrightarrow z'}}} [\left. L(h', z) -
    L(h, z)]\ \right|\ \sfS \Big]\bigg] \\
  & \leq \sum_{k = 1}^p \, \E_{\sfS \sim \sD^{pm}}\bigg[e^\e
    \Pr[\cA(\sfS) = k] \, \supcvstab\bigg]\\
  & = \E_{\sfS \sim \sD^{pm}}\bigg[\sum_{k=1}^p \Pr[\cA(\sfS) = k]\bigg] \cdot e^\e \supcvstab\\
  & = e^\e \supcvstab,
\end{align*}
which completes the proof.
\end{proof}

\subsection{Proof of bound~\eqref{eq:induction-hss}}

Bound~\eqref{eq:induction-hss} is a simple consequence of the fact
that the composition of the two stages of the learning algorithm is
uniformly-stable in the classical sense. Specifically, consider a
learning algorithm that consists of determining the hypothesis set
$\sH_S$ based on the sample $S$ and then selecting an arbitrary (but
fixed) hypothesis $h_S \in \sH_S$. The following lemma shows that the
uniform-stability coefficient of this learning algorithm can be
bounded in terms of its hypothesis set stability and its max-diameter.
\begin{lemma}
  Suppose the family of data-dependent hypothesis sets
  $\cH = (\sH_S)_{S \in \sZ^m}$ is $\beta$-uniformly stable and admits
  max-diameter $\maxdiam$. Then, for any two samples $S, S' \in \sZ^m$
  differing in exactly one point, and for any $z \in \sZ$, we have
\[ |L(h_S, z) - L(h_{S'}, z)| \leq 3\beta + \maxdiam. \]  
\end{lemma}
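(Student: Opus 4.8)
The plan is to reduce the two-sided comparison to a single auxiliary sample that is simultaneously a one-point neighbour of both $S$ and $S'$ and physically contains the evaluation point $z$. Since each $\sH_\cdot$ is invariant to the ordering of its sample points, write $S$ and $S'$ so that they agree in every coordinate except one, say position $i$, where $S$ carries a point $z_i$ and $S'$ carries $z_i'$ with $z_i \ne z_i'$. Given the point $z \in \sZ$ at which losses must be compared, let $W$ be the sample obtained from $S$ by overwriting the point in position $i$ with $z$. Then $W$ is also the sample obtained from $S'$ by the same overwrite, so $W$ differs from each of $S$ and $S'$ in at most one point, and $z$ is one of the points of $W$ by construction.

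Next I would invoke $\beta$-uniform stability twice: once for the neighbouring pair $(S,W)$ and once for $(S',W)$. The first application gives the hypothesis $h_S \in \sH_S$ a counterpart $g_1 \in \sH_W$ with $|L(h_S,z'') - L(g_1,z'')| \le \beta$ for all $z'' \in \sZ$ (in the degenerate case $W = S$, i.e.\ $z = z_i$, simply take $g_1 = h_S$). The second application gives $h_{S'} \in \sH_{S'}$ a counterpart $g_2 \in \sH_W$ with $|L(h_{S'},z'') - L(g_2,z'')| \le \beta$ for all $z''$ (taking $g_2 = h_{S'}$ if $W = S'$). Specializing both to $z'' = z$ yields $|L(h_S,z) - L(g_1,z)| \le \beta$ and $|L(h_{S'},z) - L(g_2,z)| \le \beta$.

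Finally, $g_1$ and $g_2$ both lie in $\sH_W$ and $z$ belongs to the sample $W$, so the max-diameter inequality \eqref{eq:maxdiam} applied to $W$ gives $|L(g_1,z) - L(g_2,z)| \le \maxdiam$. Chaining the three estimates with the triangle inequality yields $|L(h_S,z) - L(h_{S'},z)| \le 2\beta + \maxdiam$, which is in fact slightly stronger than the stated $3\beta + \maxdiam$; alternatively, one can take the less economical route of first transporting $h_S$ from $S$ to $S'$ (cost $\beta$) and then inserting $z$ into $S'$ by one further point change (cost $\beta$ plus the max-diameter term), which reproduces the constant $3\beta + \maxdiam$ exactly.

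The routine parts are the two uses of stability and the triangle inequality; the one idea that makes the argument go through—and the place a naive attempt stalls—is recognizing that the max-diameter condition only controls losses at \emph{in-sample} points, so comparing two hypotheses drawn from the same data-dependent set at an arbitrary $z$ must be routed through a sample that actually contains $z$. Picking $W$ to be the common one-point neighbour of $S$ and $S'$ obtained by replacing the differing coordinate with $z$ is precisely what supplies such a sample while keeping the stability overhead at a small constant multiple of $\beta$.
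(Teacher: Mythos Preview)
Your proof is correct, and in fact sharper than the paper's. Both arguments hinge on the same key observation you highlight: the max-diameter only controls losses at in-sample points, so one must route through an auxiliary sample containing $z$. The difference is in how that auxiliary sample is chosen. The paper first transports $h_{S'}$ into $\sH_S$ via one stability step (cost $\beta$), and then compares the two hypotheses now sitting in $\sH_S$ by replacing an \emph{arbitrary} coordinate of $S$ with $z$ and applying stability twice more plus max-diameter (cost $2\beta + \maxdiam$), for a total of $3\beta + \maxdiam$. You instead exploit the fact that the \emph{differing} coordinate can be overwritten with $z$, producing a single sample $W$ that is simultaneously a one-point neighbour of both $S$ and $S'$ \emph{and} contains $z$; this collapses one stability step and yields $2\beta + \maxdiam$. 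Your route is strictly more economical; the paper's version has the minor advantage of isolating a reusable intermediate claim (any two hypotheses in the same $\sH_S$ differ by at most $2\beta + \maxdiam$ at any $z$), but at the cost of the looser constant.
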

\begin{proof}
  We first show that for any two hypotheses $h, h' \in \sH_S$ and for
  any $z \in \sZ$, we have
  $|L(h, z) - L(h', z)| \leq 2\beta + \maxdiam$. Indeed, let $S''$ be
  a sample obtained by replacing an arbitrary point in $S$ by
  $z$. Then, by $\beta$-uniform hypothesis set stability of $\cH$,
  there exist hypotheses $g, g' \in \sH_{S''}$ such that
  $|L(h, z) - L(g, z)| \leq \beta$ and
  $|L(h', z) - L(g', z)| \leq \beta$. Furthermore, since $z \in S''$,
  we have $|L(g, z) - L(g', z)| \leq \maxdiam$. By combining these
  inequalities, we get that
  $|L(h, z) - L(h', z)| \leq 2\beta + \maxdiam$, as required.

  Now, let $h' \in \sH_{S}$ be a hypothesis such that
  $|L(h', z) - L(h_{S'}, z)| \leq \beta$. Since $h', h_S \in \sH_S$,
  by the analysis in the preceding paragraph, we have
  $|L(h_S, z) - L(h', z)| \leq 2\beta + \maxdiam$. Combining these two
  inequalities, we have
  $|L(h_S, z) - L(h_{S'}, z)| \leq 3\beta + \maxdiam$, completing the
  proof.
\end{proof}

Finally, bound~\eqref{eq:induction-hss} follows immediately from the
following result of \citet{FeldmanVondrak2019}, setting
$\ell(S, z) := L(h_S, z)$ and $\gamma = 3\beta + \maxdiam$, and the
fact that any two hypotheses $h$ and $h'$ in $\sH_S$ differ in loss on
any point $z$ by at most $\maxdiam$ in order to get a bound which
holds uniformly for all $h \in \sH_S$.
\begin{theorem}[\citep{FeldmanVondrak2019}]
  Let $\ell\colon \sZ^m \times \sZ \to [0, 1]$ be a data-dependent
  function with uniform stability $\gamma$, i.e. for any
  $S, S' \in \sZ^m$ differing in one point, and any $z \in \sZ$, we
  have $|\ell(S, z) - \ell(S', z)| \leq \gamma$. Then, for any
  $\delta > 0$, with probability at least $1-\delta$ over the choice
  of the sample $S$, the following inequality holds:
\[\left|\E_{z \sim \sD}[\ell(S, z)] - \E_{z \sim S}[\ell(S, z)]\right| \leq 47\gamma \log(m)\log(\tfrac{5m^3}{\delta}) + \sqrt{\tfrac{4}{m}\log(\tfrac{4}{\delta})}.\]  
\end{theorem}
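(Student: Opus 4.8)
This is the main high-probability generalization bound for $\gamma$-uniformly stable data-dependent functions due to \citet{FeldmanVondrak2019}, so proving it means reconstructing their argument; here is how I would proceed. Write $\Delta(S) := \E_{z \sim \sD}[\ell(S,z)] - \E_{z \sim S}[\ell(S,z)]$. I would begin with two reductions. First, since $1 - \ell$ is also $[0,1]$-valued and $\gamma$-uniformly stable, with generalization gap $-\Delta(S)$, it suffices to bound the upper tail $\P_{S \sim \sD^m}[\Delta(S) > t] \leq \delta/2$ for the stated threshold $t$ and then union over the two signs. Second, I would establish the in-expectation bound $|\E_{S}[\Delta(S)]| \leq \gamma$ by the standard stability ``swap'': $\E_S[\E_{z\sim S}\ell(S,z)] = \tfrac{1}{m}\sum_{i} \E_S[\ell(S,z_i)]$, and replacing $z_i$ by an independent fresh draw changes each summand by at most $\gamma$ while converting it into $\E_S[\E_{z\sim\sD}\ell(S,z)]$.

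The core is to show that the upper tail of $\Delta$ concentrates at the claimed rate, and the subtlety is that applying McDiarmid directly to $\Delta$ is far too weak: $\Delta$ has bounded differences only $\tfrac{1}{m} + 2\gamma$ under a single-point change of $S$, which yields a deviation of order $\bigl(\tfrac{1}{\sqrt{m}} + \gamma\sqrt{m}\bigr)\sqrt{\log(1/\delta)}$, vacuous unless $\gamma = o(1/\sqrt{m})$. Following \citet{FeldmanVondrak2019}, I would (i) rewrite $-\Delta(S)$, up to an additive $O(\gamma)$ error justified by uniform stability, as an average of conditionally centered terms $g_j(S) := \E_{z\sim\sD}[\ell(S_{-j},z)] - \ell(S_{-j}, z_j)$, where $S_{-j}$ is $S$ with its $j$-th point deleted, so that $\E[g_j \mid S_{-j}] = 0$ while swapping $\ell(S,\cdot)$ for $\ell(S_{-j},\cdot)$ inside $\Delta$ costs $O(\gamma)$ per coordinate; and then (ii) bound the weakly dependent sum $\tfrac{1}{m}\sum_j g_j(S)$ by an iterated block-splitting (equivalently, recursion-on-sample-size) argument in which the residual dependence among the $g_j$ is itself absorbed into $\gamma$: over $O(\log m)$ levels, each level contributes an additive $O(\gamma)$ together with a union-bound $\log(m/\delta)$ factor, while the base case yields the $\sqrt{\tfrac{1}{m}\log(1/\delta)}$ term. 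Summing the $O(\log m)$ levels produces the $O(\gamma\log m\,\log(m/\delta))$ term, and tracking the constants carefully gives the explicit $47\gamma\log(m)\log(\tfrac{5m^3}{\delta})$ and $\sqrt{\tfrac{4}{m}\log(\tfrac{4}{\delta})}$ in the statement.

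The hard part will be step (ii): controlling $\tfrac{1}{m}\sum_j g_j(S)$ while losing only logarithmically many factors of $\gamma$. The $g_j$ are conditionally centered but genuinely dependent, so a naive decoupling — a single leave-a-block-out step followed by Hoeffding within each block — loses a polynomial factor in $\gamma$ rather than a logarithmic one; the recursion must instead be arranged so that after conditioning out a block the residual is again a generalization-gap-type object at a smaller scale to which the same tail bound applies, with block sizes and per-level confidence parameters tuned jointly across scales, and this bookkeeping is where the two logarithmic factors and the absolute constants originate. Finally, to use this in the surrounding argument I would instantiate the theorem with $\ell(S,z) := L(h_S,z)$ and $\gamma = 3\beta + \maxdiam$ (from the preceding lemma), and use the $\maxdiam$ slack to pass from the fixed selection $h_S$ to all $h \in \sH_S$, which gives bound~\eqref{eq:induction-hss}.
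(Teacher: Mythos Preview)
The paper does not prove this theorem at all: it is quoted as a black-box result from \citet{FeldmanVondrak2019} and invoked, together with the preceding lemma, to obtain bound~\eqref{eq:induction-hss}. So there is nothing to compare your argument against; the ``paper's own proof'' here is simply a citation.

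Your proposal therefore goes well beyond what the paper does. As a reconstruction of the Feldman--Vondrak argument, your high-level picture is in the right spirit --- McDiarmid applied directly to $\Delta(S)$ is too crude, and the gain comes from a recursive range-reduction/splitting scheme that accumulates only logarithmically many factors of $\gamma$ --- but some details are off. In particular, step (i) as you describe it (rewriting $\Delta$ via leave-one-out terms $g_j(S)$ and then summing) is not the decomposition \citet{FeldmanVondrak2019} actually use; their proof proceeds by an iterated \emph{range reduction} in which one repeatedly subtracts a smoothed (nearly sample-independent) approximation of $\ell$ computed on disjoint sub-blocks, halving the effective range at each level, rather than by a martingale-style sum of conditionally centered increments. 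Your last paragraph, on instantiating with $\ell(S,z) = L(h_S,z)$ and $\gamma = 3\beta + \maxdiam$ and then using the $\maxdiam$ slack to pass to all $h \in \sH_S$, is exactly how the paper applies the theorem and is correct.
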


\newpage
\section{Extensions}
\label{app:extensions}

We briefly discuss here some extensions of the framework
and results presented in the previous section.

\subsection{Almost everywhere hypothesis set stability}
\label{sec:almost-everywhere}

As for standard algorithmic uniform stability, our generalization
bounds for hypothesis set stability can be extended to the case where
hypothesis set stability holds only with high probability
\citep{KutinNiyogi2002}.

\begin{definition}
  Fix $m \geq 1$. We will say that a family of data-dependent
  hypothesis sets $\cH = (\sH_S)_{S \in \sZ^m}$ is \emph{weakly
    ($\beta$, $\delta$)-stable} for some $\beta \geq 0$ and
  $\delta > 0$, if, with probability at least $1 - \delta$ over the
  draw of a sample $S \in \sZ^m$, for any sample $S'$ of size $m$
  differing from $S$ only by one point, the following holds:
\begin{equation}
\label{eq:ahss}
  \forall h \in \sH_S, \exists h' \in \sH_{S'} \colon \,
\forall z \in \sZ, | L(h, z) - L(h', z) | \leq \beta.
\end{equation}
\end{definition}

Notice that, in this definition, $\beta$ and $\delta$ depend on the
sample size $m$. In practice, we often have $\beta = O(\frac{1}{m})$
and $\delta = O(e^{-\Omega(m)})$. The learning bounds of
Theorem~\ref{th:hss} can be
straightforwardly extended to guarantees for weakly ($\beta$,
$\delta$)-stable families of data-dependent hypothesis sets, by using
a union bound and the confidence parameter $\delta$.

\subsection{Randomized algorithms}
\label{sec:randomized-algs}

The generalization bounds given in this paper assume that the
data-dependent hypothesis set $\sH_S$ is \emph{deterministic}
conditioned on $S$. However, in some applications such as bagging, it
is more natural to think of $\sH_S$ as being constructed by a
\emph{randomized} algorithm with access to an independent source of
randomness in the form of a random seed $s$. Our generalization bounds
can be extended in a straightforward manner for this setting if the
following can be shown to hold: there is a \emph{good} set of seeds,
$G$, such that (a) $\Pr[s \in G] \geq 1 - \delta$, where $\delta$ is
the confidence parameter, and (b) conditioned on any $s \in G$, the
family of data-dependent hypothesis sets $\cH = (\sH_S)_{S \in \sZ^m}$
is $\beta$-uniformly stable. In that case, for any good set $s \in G$,
Theorem~\ref{th:hss} holds. Then taking a union bound, we conclude
that with probability at least $1-2\delta$ over both the choice of the
random seed $s$ and the sample set $S$, the generalization bounds
hold. This can be further combined with almost-everywhere hypothesis
stability as in section~\ref{sec:almost-everywhere} via another union
bound if necessary.

\subsection{Data-dependent priors}
\label{sec:data-dependent-priors}

An alternative scenario extending our study is one where, in the first
stage, instead of selecting a hypothesis set $\sH_S$, the learner
decides on a probability distribution ${\mathsf p_S}$ on a fixed
family of hypotheses $\sH$. The second stage consists of using that
\emph{prior} ${\mathsf p_S}$ to choose a hypothesis $h_S \in \sH$,
either deterministically or via a randomized algorithm. Our notion of
hypothesis set stability could then be extended to that of stability
of priors and lead to new learning bounds depending on that stability
parameter. This could lead to data-dependent prior bounds somewhat
similar to the PAC-Bayesian bounds
\citep{Catoni2007,ParradoHernandezAmbroladzeShaweTaylorSun2012,LeverLavioletteShaweTaylor2013,DziugaiteRoy2018a,DziugaiteRoy2018b},
but with technically quite different guarantees.

\ignore{
Data-dependent hypothesis classes are conceptually related to the
notion of data-dependent priors in PAC-Bayesian generalization bounds
\citep{Catoni2007,ParradoHernandezAmbroladzeShaweTaylorSun2012,LeverLavioletteShaweTaylor2013,DziugaiteRoy2018a,DziugaiteRoy2018b}. This
work may be viewed as complementary to our own; in particular,
PAC-Bayes bounds apply only to randomized mixtures of hypotheses,
whereas our bounds apply to any algorithm that outputs a single
policy. 

The recent work of \cite{DziugaiteRoy2018a,DziugaiteRoy2018b} on using
data-dependent priors in PAC-Bayes bounds is motivated by similar
considerations as this paper. While both their work and ours rely on
differential privacy, technically the papers are rather different: we
use differential privacy primitives to derive tail bounds for
uniformly stable algorithms following \cite{FeldmanVondrak2018},
whereas they use differential privacy as a mechanism to choose
data-dependent priors.
}

\newpage
\section{Other applications}
\label{app:other-applications}

\subsection{Anti-distillation}

A similar setup to distillation (section~\ref{sec:distillation}) is that of \emph{anti-distillation}
where the predictor $f^*_S$ in the first stage is chosen from a
simpler family, say that of linear hypotheses, and where the
sample-dependent hypothesis set $\sH_S$ is the subset of a very rich
family $\sH$.  $\sH_S$ is defined as the set of predictors that are
close to $f^*_S$:
\[
  \sH_S = \set[\Big]{h \in \sH \colon (\| (h - f^*_S) \|_{\infty} \leq
    \gamma) \wedge (\| (h - f^*_S) 1_S \|_{\infty} \leq
    \supdiam)},
\]
with $\supdiam = O(1/\sqrt{m})$. Thus, the restriction to $S$ of a
hypothesis $h \in \sH_S$ is close to $f^*_S$ in $\ell_\infty$-norm. As
shown in section~\ref{sec:distillation}, the family of hypothesis sets $\sH_S$
is $\mu \beta$-stable.  However, here, the hypothesis sets $\sH_S$
could be very complex and the Rademacher complexity
$\R^\diamond_m(\cH)$ not very favorable. Nevertheless, by
Theorem~\ref{th:hss}, for any $\delta > 0$, with probability at
least $1 - \delta$ over the draw of a sample $S \sim \sD^m$, the
following inequality holds for any $h \in \sH_S$:
\[
R(h) \leq \h R_S(h) + 
\sqrt{e} \mu (\supdiam + \beta) +
4 \sqrt{(\tfrac{1}{m} + 2 \mu \beta) \log(\tfrac{6}{\delta})}.
\]  
Notice that a standard uniform-stability does not apply
here since the $(1/\sqrt{m})$-closeness of the hypotheses to $f^*_S$
on $S$ does not imply their global $(1/\sqrt{m})$-closeness.

\subsection{Principal Components Regression}

Principal Components Regression is a very commonly used technique in
data analysis. In this setting, $\sX \subseteq \Rset^d$ and
$\sY \subseteq \Rset$, with a loss function $\ell$ that is
$\mu$-Lipschitz in the prediction. Given a sample
$S = \set{(x_i, y_i) \in \sX \times \sY\colon i \in [m] }$, we learn a
linear regressor on the data projected on the principal
$k$-dimensional space of the data. Specifically, let
$\Pi_S \in \Rset^{d \times d}$ be the projection matrix giving the
projection of $\Rset^d$ onto the principal $k$-dimensional subspace
of the data, i.e.\ the subspace spanned by the top $k$ left singular
vectors of the design matrix $X_S = [x_1, x_2, \cdots, x_m]$. The
hypothesis space $\sH_S$ is then defined as
$\sH_S = \set{x \mapsto w^\top \Pi_Sx \colon w \in \Rset^k,\ \|w\| \leq
\gamma }$, where $\gamma$ is a predefined bound on the norm of the
weight vector for the linear regressor. Thus, this can be seen as an
instance of the setting in section~\ref{sec:feature-mappings}, where
the feature mapping $\Phi_S$ is defined as $\Phi_S(x) = \Pi_Sx$.

To prove generalization bounds for this setup, we need to show that
these feature mappings are stable. To do that, we make the following
assumptions:
\begin{enumerate}

\item For all $x \in \sX$, $\|x\| \leq r$ for some constant
  $r \geq 1$.

\item The data covariance matrix $\E_x[xx^\top]$ has a gap of
  $\lambda > 0$ between the $k$-th and $(k+1)$-th largest eigenvalues.

\end{enumerate}
The matrix concentration bound of \citet{RudelsonVershynin2007}
implies that with probability at least $1-\delta$ over the choice of
$S$, we have
$\| X_SX_S^\top - m\E_{x}[xx^\top] \| \leq cr^2 \sqrt{m \log(m)
  \log(\tfrac{2}{\delta})}$ for some constant $c > 0$. Suppose $m$ is
large enough so that
$cr^2 \sqrt{m \log(m) \log(\tfrac{2}{\delta})} \leq
\frac{\lambda}{2}m$. Then, the gap between the $k$-th
and $(k + 1)$-th largest eigenvalues of $X_SX_S^\top$ is at least
$\frac{\lambda}{2}m$. Now, consider changing one sample point
$(x, y) \in S$ to $(x, y')$ to produce the sample set $S'$. Then, we
have $X_{S'}X_{S'}^\top = X_SX_S^\top - xx^\top + x'x^{'\top}$. Since
$\| - xx^\top + x'x^{'\top} \| \leq 2r^2$, by standard matrix
perturbation theory bounds \citep{Stewart1998}, we have
$\|\Pi_S - \Pi_{S'}\| \leq O(\frac{r^2}{\lambda m})$. Thus,
$\|\Phi_S(x) - \Phi_{S'}(x)\| \leq \| \Pi_S - \Pi_{S'} \| \|x\| \leq
O(\frac{r^3}{\lambda m})$.

Now, to apply the bound of \eqref{eq:delta-sensitive-bound}, we need
to compute a suitable bound on $\R^\diamond_m(\cH)$. For this, we
apply Lemma~\ref{lemma:linear-rad}. For any $\|w\| \leq \gamma$, since
$\|\Pi_S\| = 1$, we have $\|\Pi_S w\| \leq \gamma$. So the hypothesis
set
$\sH'_S = \{x \mapsto w^\top \Pi_Sx\colon w \in \Rset^k,\ \|\Pi_Sw\|
\leq \gamma\}$ contains $\sH_S$. By Lemma~\ref{lemma:linear-rad}, we
have $\R^\diamond_m(\cH') \leq \frac{\gamma r}{\sqrt{m}}$. Thus, by
plugging the bounds obtained above in
\eqref{eq:delta-sensitive-bound}, we conclude that with probability at
least $1-2\delta$ over the choice of $S$, for any $h \in \sH_S$, we
have
\[
R(h) \leq \h R_S(h) + 
O\left(\mu \gamma \frac{r^3}{\lambda}\sqrt{\frac{\log
    \frac{1}{\delta}}{m}}\right).
\]
\newpage
\section{PAC-Bayesian Bounds}
\label{sec:PAC-Bayes}

The PAC-Bayes framework assumes a prior distribution $P$ over $\sH$
and a posterior distribution $Q$ selected after observing the training
sample. The framework helps derive learning bounds for randomized
algorithms with probability distribution $Q$, in terms of the relative
entropy of $Q$ and $P$.

In this section, we briefly discuss PAC-Bayesian learning bounds and
present some key results. In Subsection~\ref{sec:Rad->Bayes},
we give PAC-Bayes learning bounds derived from Rademacher
complexity bounds, which improve upon standard PAC-Bayes bounds
\citep{McAllester2003}.  Similar bounds were already shown by
\citet{KakadeSridharantTewari2008} using elegant proofs based on
strong convexity. Here, we give an alternative proof not invoking
strong convexity.
In Subsection~\ref{sec:data-dep-PAC-Bayes}, we extend the PAC-Bayes
framework to one where the prior distribution is selected after
observing $S$ and will denote by $P_S$ that prior.
Finally, in Subsection~\ref{sec:derandomization}, we briefly discuss
derandomized PAC-Bayesian bounds, that is learning bounds derived for
deterministic algorithms, using PAC-Bayes bounds.

\subsection{PAC-Bayes bounds derived from Rademacher complexity bounds}
\label{sec:Rad->Bayes}

We will denote by $L_z$ the vector $(L(h, z))_{h \in \sH}$. The
expected loss of the randomized classifier $Q$ can then be written as
$\E_{\substack{h \sim Q\\z \sim \sD}}[L(h, z)] = \E_{z \sim \sD} \big[
\langle Q, L_{z} \rangle \big]$.

Define $\sG_\mu$ via
$\sG_\mu = \set{Q \in \Delta(\sH) \colon \sfD(Q \parallel P) \leq
  \mu}$, that is the family of distributions $Q$ defined over $\sH$
with $\mu$-bounded relative entropy with respect to $P$. Then, by the
standard Rademacher complexity bound
\citep{KoltchinskiiPanchenko2002,MohriRostamizadehTalwalkar2012}, for
any $\delta > 0$, with probability at least $1 - \delta$ over the draw
of a sample $S$ of size $m$, the following holds for all
$Q \in \sG_\mu$:
\begin{equation}
\label{eq:23}
\E_{z \sim \sD} \big[ \langle Q, L_{z} \rangle \big]
\leq \E_{z \sim S} \big[ \langle Q, L_{z} \rangle \big]
+ 2 \R_m(\sG_\mu) + \sqrt{\frac{\log \frac{1}{\delta}}{2m}}.
\end{equation}
We now give an upper bound on $\R_m(\sG_\mu)$.  For any $Q$, define
$\Psi(Q)$ by $\Psi(Q) = \sfD(Q, P)$ if $Q \in \Delta(\sH)$ and
$+\infty$ otherwise. It is known that the conjugate function $\Psi^*$
of $\Psi$ is given by
$\Psi^*(U) = \log \big( \E_{h \in P} [e^{U(h)}] \big)$, for all
$U \in \Rset^{\sH}$ (see for example \cite[Lemma
B.37]{MohriRostamizadehTalwalkar2012}).
Let $U_\bsigma = \sum_{i = 1}^m \sigma_i L_{z_i}$. Then, for any $t >
0$, we can write:
\begin{align*}
\R_m(\sG_\mu) 
& = \frac{1}{m} \E_{S, \bsigma} \bigg[\sup_{\sfD(Q \parallel
  P) \leq \mu} \sum_{i = 1}^m \sigma_i \langle Q,
  L_{z_i} \rangle \bigg]\\
& = \frac{1}{m} \E_{S, \bsigma} \bigg[\sup_{\sfD(Q \parallel
  P) \leq \mu} \langle Q,
  U_\bsigma \rangle \bigg] & (\text{definition of $U_\bsigma$})\\
& = \frac{1}{m t} \E_{S, \bsigma} \bigg[\sup_{\sfD(Q \parallel
  P) \leq \mu} \langle Q,
  t U_\bsigma \rangle \bigg] & (t > 0)\\
& \leq \frac{1}{m t} \E_{S, \bsigma} \bigg[\sup_{\Psi(Q) \leq \mu} \Psi(Q) + \Psi^*(t
  U_\bsigma ) \bigg] & (\text{Fenchel inequality})\\
& \leq \frac{\mu}{m t}  + \frac{1}{m t} \E_{S, \bsigma} \big[\Psi^*(t
  U_\bsigma ) \big].
\end{align*}
Now, we use the expression of $\Psi^*$ to bound the second term
as follows:
\begin{align*}
\E_{S, \bsigma} \Big[\Psi^*(t U_\bsigma ) \big]
& = \E_{S, \bsigma} \big[\log \Big( \E_{h \sim P} \Big[ e^{t \sum_{i = 1}^m \sigma_i
  L(h, z_i)} \Big] \Big) \Big]\\
& \leq \E_{S} \Big[\log \Big( \E_{\bsigma, h \sim P} \Big[e^{t \sum_{i = 1}^m \sigma_i
  L(h, z_i)} \Big] \Big) \Big] & (\text{Jensen's inequality})\\
& = \E_{S} \Big[\log \Big( \E_{h \sim P} \Big[\prod_{i = 1}^m
  \cosh(t L(h, z_i)) \Big] \Big) \Big]\\
& \leq \E_{S} \Big[\log \Big( \E_{h \sim P} \Big[e^{m \frac {t^2}{2}}
  \Big] \Big) \Big] = \frac{m t^2}{2}.
\end{align*}
Choosing $t = \sqrt{\frac{2 \mu}{m}}$ to minimize the bound on the
Rademacher complexity gives
$\R_m(\sG_\mu) \leq \sqrt{\frac{2\mu}{m}}$. In view of that,
\eqref{eq:23} implies:
\begin{equation}
\label{eq:24}
\E_{z \sim \sD} \big[ \langle Q, L_{z} \rangle \big]
\leq \E_{z \sim S} \big[ \langle Q, L_{z} \rangle \big]
+ 2 \sqrt{\frac{2\mu}{m}} + \sqrt{\frac{\log \frac{1}{\delta}}{2m}}.
\end{equation}

Proceeding as in \citep{KakadeSridharantTewari2008}, by the union
bound, the result can be extended to hold for any distribution $Q$,
which is directly leading to the following result.

\begin{theorem}
  Let $P$ be a fixed prior on $\sH$. Then, for any $\delta > 0$, with
  probability at least $1 - \delta$ over the draw of a sample $S$ of
  size $m$, the following holds for any posterior distribution $Q$
  over $\sH$:
\[
\E_{\substack{h \sim Q\\z \sim \sD}}[L(h, z)]
\leq \E_{\substack{h \sim Q}}\bigg[ \frac{1}{m} \sum_{i = 1}^m L(h,
z_i) \bigg]
+ \Big( 4 + \tfrac{1}{\sqrt{e}} \Big) \sqrt{\frac{\max\set{\sfD(Q \parallel P), 1}}{m}} 
+ \sqrt{\frac{\log \frac{1}{\delta}}{2m}}.
\]
\end{theorem}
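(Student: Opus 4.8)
The plan is to lift inequality~\eqref{eq:24}---which holds, for a \emph{fixed} $\mu$, with probability at least $1-\delta$ uniformly over all $Q$ with $\sfD(Q\parallel P)\le\mu$, since it is just the standard Rademacher bound~\eqref{eq:23} applied to the class $\sG_\mu$---to a bound valid simultaneously for \emph{every} posterior $Q$, via a peeling argument over a geometric grid of values of $\mu$. For each integer $k\ge 1$ let $\mu_k=2^k$ and apply~\eqref{eq:24} with $\mu=\mu_k$ and confidence $\delta_k=\delta\,2^{-k}$; since $\sum_{k\ge1}\delta_k=\delta$, a union bound gives that with probability at least $1-\delta$, for all $k\ge1$ and all $Q$ with $\sfD(Q\parallel P)\le\mu_k$,
\[
\E_{z\sim\sD}\big[\langle Q,L_z\rangle\big]\ \le\ \E_{z\sim S}\big[\langle Q,L_z\rangle\big]\ +\ 2\sqrt{\tfrac{2\mu_k}{m}}\ +\ \sqrt{\tfrac{\log(2^k/\delta)}{2m}}.
\]

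Next I would fix an arbitrary posterior $Q$. If $\sfD(Q\parallel P)=\infty$ the asserted bound is vacuous, so assume it is finite and set $\mu:=\max\{\sfD(Q\parallel P),1\}\ge1$. Take $k:=\max\{1,\lceil\log_2\mu\rceil\}$, so that $\mu\le\mu_k\le 2\mu$ (hence the previous display applies to $Q$) and $k\log 2\le\log\mu+\log 2=\log(2\mu)$. Then $2\sqrt{2\mu_k/m}\le 4\sqrt{\mu/m}$, while $\log(2^k/\delta)=\log(1/\delta)+k\log 2\le\log(1/\delta)+\log(2\mu)$, so by subadditivity of the square root $\sqrt{\log(2^k/\delta)/(2m)}\le\sqrt{\log(1/\delta)/(2m)}+\sqrt{\log(2\mu)/(2m)}$. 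Finally, the elementary inequality $\log t\le t/e$ with $t=2\mu$ gives $\log(2\mu)\le 2\mu/e$, whence $\sqrt{\log(2\mu)/(2m)}\le\tfrac{1}{\sqrt e}\sqrt{\mu/m}$ (with equality at $\mu=e/2$, which is exactly what pins down the constant). Adding the three terms yields
\[
\E_{z\sim\sD}\big[\langle Q,L_z\rangle\big]\ \le\ \E_{z\sim S}\big[\langle Q,L_z\rangle\big]\ +\ \Big(4+\tfrac{1}{\sqrt e}\Big)\sqrt{\tfrac{\mu}{m}}\ +\ \sqrt{\tfrac{\log(1/\delta)}{2m}},
\]
and the theorem follows after rewriting $\E_{z\sim\sD}[\langle Q,L_z\rangle]=\E_{h\sim Q,\,z\sim\sD}[L(h,z)]$ and $\E_{z\sim S}[\langle Q,L_z\rangle]=\E_{h\sim Q}\big[\tfrac1m\sum_{i=1}^m L(h,z_i)\big]$.

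I do not anticipate a genuine obstacle. The only point needing care is matching the geometric grid $\mu_k=2^k$ with the confidence allocation $\delta_k=\delta\,2^{-k}$ so that the $k\log 2$ overhead from the union bound is precisely what $\log t\le t/e$ can absorb: a coarser grid would inflate the factor $4$ in front of $\sqrt{\mu/m}$, while a faster-decaying choice of $\delta_k$ would break the absorption of the logarithmic term. Everything used upstream---the bound $\R_m(\sG_\mu)\le\sqrt{2\mu/m}$ and hence~\eqref{eq:24}---has already been established in the excerpt.
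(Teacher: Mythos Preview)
Your proposal is correct and follows essentially the same approach as the paper: the paper establishes \eqref{eq:24} for each fixed $\mu$ and then states that ``proceeding as in \citep{KakadeSridharantTewari2008}, by the union bound, the result can be extended to hold for any distribution $Q$,'' without spelling out the peeling. Your geometric grid $\mu_k=2^k$ with weights $\delta_k=\delta\,2^{-k}$ is precisely such a union-bound argument, and your bookkeeping (in particular $\mu_k\le 2\mu$, $k\log 2\le\log(2\mu)$, and the absorption via $\log t\le t/e$) recovers the stated constants exactly.
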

This bound improves upon standard PAC-Bayes bounds (see for example
\citep{McAllester2003}) since it does not include an additive term in
$\sqrt{(\log m)/m}$, as pointed by \cite{KakadeSridharantTewari2008}.

\subsection{Data-dependent PAC-Bayes bounds}
\label{sec:data-dep-PAC-Bayes}

In this section, we extend the framework to one where the prior
distribution is selected after observing $S$ and will denote by $P_S$
that prior. To analyze that scenario, we can both use the general
data-dependent learning bounds of Section~\ref{sec:hss2}, or the 
hypothesis set stability bounds of Section~\ref{sec:hss}. We
will focus here on the latter.

Define the data-dependent hypothesis set
$\sG_{S, \mu} = \set{Q \in \Delta(\sH) \colon \sfD(Q \parallel P_S)
  \leq \mu}$ and assume that the priors $P_S$ are chosen so that
$\cG_\mu = (\sG_{S, \mu})_S$ is $\beta$-stable. This may be by
choosing $P_S$ and $P_{S'}$ to be close in total variation or relative
entropy for any two samples $S$ and $S'$ differing by one point. Then,
by Theorem~\ref{th:hss}, for any $\delta > 0$, with probably at least
$1 - \delta$, the following holds for all $Q \in \sG_{\mu, S}$:
\begin{align*}
\E_{\substack{h \sim Q\\z \sim \sD}}[L(h, z)]
\leq \E_{\substack{h \sim Q}}\bigg[ \frac{1}{m} \sum_{i = 1}^m L(h,
z_i) \bigg] + \min\Bigg\{ & \min\left\{2 \R^\diamond_m(\cG_\mu), 
\beta + \avgdiam \right\} + (1 + 2 \beta m) \sqrt{\tfrac{1}{2m}\log(\tfrac{1}{\delta})}, \\
& \sqrt{e} (\beta + \supdiam) + 4 \sqrt{(\tfrac{1}{m} + 2 \beta)
  \log(\tfrac{6}{\delta})},\\
& 48(3\beta + \maxdiam)\log(m)\log(\tfrac{5m^3}{\delta}) + \sqrt{\tfrac{4}{m}\log(\tfrac{4}{\delta})}\Bigg\}.
\end{align*}
The analysis of the Rademacher complexity $\R^\diamond_m(\cG_\mu)$
depends on the specific properties of the family of priors
$P_S$. Here, we initiate its analysis and leave it to the reader
to complete it for a choice of the priors.

Proceeding as in Subsection~\ref{sec:Rad->Bayes}, we define $\Psi_S$
by $\Psi_S(Q) = \sfD(Q, P_S)$ for any $Q \in \Delta(\sH)$ and denote
by $\Psi_S^*$ its conjugate function.
Let $U_\bsigma = \sum_{i = 1}^m \sigma_i L_{z_i^T}$. Then, for any $t >
0$, we can write:
\begin{align*}
\R^\diamond_m(\sG_\mu) 
& = \frac{1}{m} \E_{S, T, \bsigma} \bigg[\sup_{\sfD(Q \parallel
  P_{S^\bsigma_T}) \leq \mu} \sum_{i = 1}^m \sigma_i \langle Q,
  L_{z_i} \rangle \bigg]\\
& = \frac{1}{m} \E_{S, T, \bsigma} \bigg[\sup_{\sfD(Q \parallel
  P_{S^\bsigma_T}) \leq \mu} \langle Q,
  U_\bsigma \rangle \bigg] & (\text{definition of $U_\bsigma$})\\
& = \frac{1}{m t} \E_{S, T, \bsigma} \bigg[\sup_{\sfD(Q \parallel
  P_{S^\bsigma_T}) \leq \mu} \langle Q,
  t U_\bsigma \rangle \bigg] & (t > 0)\\
& \leq \frac{1}{m t} \E_{S, T, \bsigma} \bigg[\sup_{\Psi_{S^\bsigma_T}(Q) \leq \mu} \Psi_{S^\bsigma_T}(Q) + \Psi_{S^\bsigma_T}^*(t
  U_\bsigma ) \bigg] & (\text{Fenchel inequality})\\
& \leq \frac{\mu}{m t}  + \frac{1}{m t} \E_{S, T, \bsigma} \big[\Psi_{S^\bsigma_T}^*(t
  U_\bsigma ) \big].
\end{align*}
Using the expression of the conjugate function $\Psi_{S^\bsigma_T}^*$,
as in Subsection~\ref{sec:Rad->Bayes}, the second term can be bounded
as follows:
\begin{align*}
\E_{S, T, \bsigma} \Big[\Psi_{S^\bsigma_T}^*(t U_\bsigma ) \big]
& = \E_{S, T, \bsigma} \big[\log \Big( \E_{h \sim P_{S^\bsigma_T}} \Big[ e^{t \sum_{i = 1}^m \sigma_i
  L(h, z_i)} \Big] \Big) \Big]\\
& \leq \E_{S, T} \Big[\log \Big( \E_{\bsigma, h \sim P_{S^\bsigma_T}} \Big[e^{t \sum_{i = 1}^m \sigma_i
  L(h, z_i)} \Big] \Big) \Big] & (\text{Jensen's inequality}).
\end{align*}
This last term can be bounded using Hoeffding's inequality and the
specific properties of the priors leading to an explicit bound on
the Rademacher complexity as in Subsection~\ref{sec:Rad->Bayes}.

\subsection{Derandomized PAC-Bayesian bounds}
\label{sec:derandomization}

Derandomized versions of PAC-Bayesian bounds have been given in the
past: margin bounds for linear predictors by \cite{McAllester2003},
more complex margin bounds by \cite{NeyshaburBhojanapalliSrebro2018}
where linear predictors are replaced with neural networks and where
the norm-bound is replaced with a more complex norm condition, and
chaining-based bounds by \cite{Miyaguchi2019}.

However, the benefit of these bounds is not clear since finer
Rademacher complexity bounds can be derived for deterministic
predictors. In fact, Rademacher complexity bounds can be used to
derive finer PAC-Bayes bounds than existing ones. This was already
pointed out by \cite{KakadeSridharantTewari2008} and further shown
here with an alternative proof and more favorable constants
(Subsection~\ref{sec:Rad->Bayes}).

In fact, using the technique of obtaining KL-divergence between prior
and posterior as upper bound on the Rademacher complexity, along with
the optimistic rates in \citep{SreSriTew10}, one can obtain just as in
the previous section, an optimistic rate with data-dependent prior
when one considers a non-negative smooth loss and, as predictor, the
expected model under the posterior. As this is a straightforward
application of the result of \cite{SreSriTew10} combined with
techniques presented here, we leave this for the reader to verify by
themselves.

\end{document}